\def \poly {\mathrm{poly}}
\def \opt{\mathrm{opt}}
\title{Smoothed Agnostic Learning of Halfspaces over the Hypercube}
\author{%
Yiwen Kou\\
Department of Computer Science, UCLA\\
Los Angeles, CA, US\\
evankou@cs.ucla.edu\\
\and
\textbf{Raghu Meka}\\
Department of Computer Science, UCLA\\
Los Angeles, CA, US\\
raghum@cs.ucla.edu
  % David S.~Hippocampus\thanks{Use footnote for providing further information
  %   about author (webpage, alternative address)---\emph{not} for acknowledging
  %   funding agencies.} \\
  % Department of Computer Science\\
  % Cranberry-Lemon University\\
  % Pittsburgh, PA 15213 \\
  % \texttt{hippo@cs.cranberry-lemon.edu} \\
  % examples of more authors
  % \And
  % Coauthor \\
  % Affiliation \\
  % Address \\
  % \texttt{email} \\
  % \AND
  % Coauthor \\
  % Affiliation \\
  % Address \\
  % \texttt{email} \\
  % \And
  % Coauthor \\
  % Affiliation \\
  % Address \\
  % \texttt{email} \\
  % \And
  % Coauthor \\
  % Affiliation \\
  % Address \\
  % \texttt{email} \\
}
\begin{document}

\maketitle

\begin{abstract}
Agnostic learning of Boolean halfspaces is a fundamental problem in computational learning theory, but it is known to be computationally hard even for weak learning. Recent work \citep{chandrasekaran2024smoothed} proposed smoothed analysis as a way to bypass such hardness, but existing frameworks rely on additive Gaussian perturbations, making them unsuitable for discrete domains. We introduce a new smoothed agnostic learning framework for Boolean inputs, where perturbations are modeled via random bit flips. This defines a natural discrete analogue of smoothed optimality generalizing the Gaussian case. Under strictly subexponential assumptions on the input distribution, we give an efficient algorithm for learning halfspaces in this model, with runtime and sample complexity $\tilde{O}(n^{\poly(\frac{1}{\sigma\epsilon})})$. Previously, such algorithms were known only with strong structural assumptions for the discrete hypercube—for example, independent coordinates or symmetric distributions. Our result provides the first computationally efficient guarantee for smoothed agnostic learning of halfspaces over the Boolean hypercube, bridging the gap between worst-case intractability and practical learnability in discrete settings.
\end{abstract}

\section{Introduction}

Halfspaces, or linear threshold functions (LTFs), are one of the most fundamental concept classes in machine learning. In the realizable setting \citep{10.1145/1968.1972}, they are efficiently learnable by classical algorithms such as the Perceptron \citep{rosenblatt1958perceptron,Novikoff1963ONCP}, Winnow \citep{4568257}, large-margin methods like Support Vector Machines \citep{cortes1995support}, or by linear programming. These methods exploit linear separability and can perform well even in the presence of irrelevant features.

In contrast, the agnostic learning framework \citep{HAUSSLER199278,10.1145/130385.130424}, which allows for arbitrary label noise, poses significant algorithmic challenges. In this setting, the goal is to find a hypothesis that competes with the best in a concept class, without assuming that the data is linearly separable. However, agnostic learning of halfspaces is computationally hard in the worst case: even weak learning—achieving error marginally better than random guessing—is NP-hard under standard complexity assumptions, both in continuous domains \citep{feldman2009agnostic} and on the Boolean hypercube \citep{guruswami2009hardness}.

To overcome these barriers, several restricted models have been studied. For example, under random classification noise (RCN), halfspaces remain learnable using modified Perceptron algorithms or linear programming \citep{Blum1996APA,646140}. Under Massart noise, where adversarial flips are bounded in probability, recent work has led to efficient learning algorithms \citep{awasthi2015efficient,diakonikolas2019distribution,diakonikolas2020learning,diakonikolas2021boosting}. Other lines of work exploit structure in the input distribution: \citet{kalai2008agnostically} gave improper agnostic learning algorithms under uniform, spherical, or log-concave distributions by approximating halfspaces with low-degree polynomials, extending earlier Fourier-based methods \citep{linial1993constant}.

A more recent and promising direction is based on smoothed analysis, which was introduced to explain the practical performance of algorithms that are worst-case hard \citep{spielman2001smoothed}. In learning theory, \citet{chandrasekaran2024smoothed} proposed a smoothed agnostic framework in which the learner competes with the best classifier under slight random perturbations of the inputs. This relaxation enables efficient algorithms for learning low-dimensional concepts, even when worst-case learning is intractable. However, their approach is tailored to continuous domains and relies on additive Gaussian noise, and hence is not suitable for discrete domains such as the Boolean hypercube.

\textbf{Our contribution.} We develop a discrete analogue of smoothed agnostic learning for Boolean concept classes over $\{\pm1\}^n$, where additive Gaussian noise is ill-defined. Instead of perturbing examples in Euclidean space, we introduce bit-flipping noise: each input coordinate is independently flipped with probability $\sigma$. This gives rise to a new benchmark for learning that captures robustness to small discrete perturbations, interpolating between classical agnostic learning ($\sigma = 0$) and random guessing ($\sigma = 1/2$). 

To formalize this, we begin by recalling the standard agnostic learning objective.
\begin{definition}[Agnostic Optimality]\label{def:opt}
Let $\cX$ be a domain and $\cF$ be a class of functions $f:\cX\to\{\pm1\}$. Let $\cD$ be a distribution over labeled examples $(\xb,y)\in\cX\times\{\pm1\}$. The agnostic error of $\cF$ under $\cD$ is defined as
\begin{align*}
    \opt=\inf_{f\in\cF}\PP_{(\xb,y)\sim\cD}[f(\xb)\neq y].
\end{align*}
\end{definition}
We now define a smoothed variant of this benchmark, in which each input is perturbed before evaluation. This definition is general and does not assume any particular structure of the domain or distribution. 

\begin{definition}[Smoothed Optimality]\label{def:smooth opt}
Let $\mathcal{X}$ be a domain and $\mathcal{F}$ be a class of functions $f:\cX\to\{\pm1\}$. Let $\cD$ be a distribution over labeled examples $(\xb,y)\in\cX\times\{\pm1\}$ and $\cP_{\sigma}(\xb)$ be a perturbation distribution of $\xb$ over $\cX$. Define the smoothed agnostic error as:
\begin{align*}
    \opt_{\sigma}=\inf_{f\in\cF}\PP_{(\xb,y)\sim\cD;\tilde{\xb}\sim\mathcal{P}_\sigma(\xb)}[f(\tilde{\xb})\neq y].
\end{align*}
\end{definition}
In standard agnostic learning, the goal is to compete with $\opt$ under $\mathcal{D}$. Following \citet{chandrasekaran2024smoothed}, we instead compete with $\opt_\sigma$ in Definition~\ref{def:smooth opt}.
\begin{definition}[Smoothed Agnostic Learning]\label{def:smoothed_opt}
Fix $\epsilon,\sigma>0$ and $\delta\in(0,1)$. An algorithm $\cA$ learns the class $\cF$ in the $\sigma$-smoothed agnostic setting if, given i.i.d. samples from $\cD$, it outputs a hypothesis $h: \cX \to \{\pm1\}$ such that with probability at least $1 - \delta$:
\begin{align*}
    \PP_{(\xb,y)\sim\cD}[h(\xb)\neq y]\leq\opt_{\sigma}+\epsilon.
\end{align*}
\end{definition}

As an example, \citet{chandrasekaran2024smoothed} study the case where $\mathcal{X} = \mathbb{R}^n$ and the perturbation distribution is additive Gaussian noise: $\mathcal{P}_\sigma(\xb) = \mathcal{N}(\xb, \sigma^2 \Ib_n)$. This formulation relies on the Euclidean structure of $\mathbb{R}^n$ and does not extend to discrete domains. In our setting, we consider $\mathcal{X} = \{\pm1\}^n$ and define perturbations via random bit flips: $\mathcal{P}_\sigma(\xb) = \xb \odot \zb$ where $\zb \sim \mathcal{N}_\sigma$ is a product distribution with $z_i = -1$ with probability $\sigma$ and $1$ otherwise. This defines a natural smoothed learning model for the Boolean hypercube that avoids embedding the domain into $\mathbb{R}^n$. 

Under this framework, we show that halfspaces over the Boolean cube are efficiently learnable in the smoothed agnostic model under mild distributional assumptions. Our approach extends the classical $L_1$-polynomial regression framework to this smoothed setting. The key idea is that every Boolean halfspace, when composed with small random bit-flip perturbations, admits a low-degree polynomial approximation under the input distribution. To establish this, we analyze a smoothed version of the halfspace defined via a noise operator (Definition~\ref{def:operator}), and construct approximators using Berry–Esseen-type arguments combined with critical index analysis to handle irregular weight vectors. We obtain the following result:

\begin{theorem}[Subgaussian-Informal, see also Theorem~\ref{thm:main}]
Let $\cD$ be a distribution on $\{\pm1\}^n\times\{\pm1\}$ with sub-gaussian $\xb$-marginal of variance proxy $\sigma_0^2$. There exists an algorithm that learns the class of linear threshold functions in the $\sigma$-smoothed setting with $N=n^{\poly(\sigma_0/\sigma\epsilon)}\log(1/\delta)$ samples and $\poly(n,N)$ runtime.
\end{theorem}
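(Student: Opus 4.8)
The plan is to adapt the $L_1$-polynomial regression framework of \citet{kalai2008agnostically} to the smoothed model. Write $\rho = 1-2\sigma$ and, for a halfspace $f$, let $g = T_\rho f$ denote its bit-flip-smoothed version obtained from the noise operator of Definition~\ref{def:operator}, a function $\{\pm1\}^n\to[-1,1]$. Since $g$ is $[-1,1]$-valued, for every labeled example $(\mathbf{x},y)$ we have $\mathbb{P}_{\tilde{\mathbf{x}}}[f(\tilde{\mathbf{x}})\neq y] = \frac{1 - y\,g(\mathbf{x})}{2} = \frac{|y - g(\mathbf{x})|}{2}$, hence $\opt_\sigma = \inf_f \tfrac12\,\mathbb{E}_{(\mathbf{x},y)}[\,|y - (T_\rho f)(\mathbf{x})|\,]$. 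Fix $f^\ast$ nearly attaining this infimum and set $g^\ast = T_\rho f^\ast$. If $p$ is any $[-1,1]$-valued degree-$d$ polynomial with $\mathbb{E}_{\mathbf{x}\sim\mathcal{D}_{\mathbf{x}}}[\,|p(\mathbf{x}) - g^\ast(\mathbf{x})|\,]\le \epsilon$, then by the triangle inequality $\mathbb{E}[\,|y - p(\mathbf{x})|\,]\le 2\opt_\sigma + O(\epsilon)$. Solving the empirical $L_1$ regression over degree-$d$ polynomials clipped to $[-1,1]$ then produces $\hat p$ with $\mathbb{E}[\,|y - \hat p(\mathbf{x})|\,]\le 2\opt_\sigma + O(\epsilon)$ up to a generalization error handled by a standard uniform-convergence bound over bounded-degree polynomials (an $\epsilon$-net/Rademacher argument), using $N = n^{O(d)}\log(1/\delta)$ samples; the randomized-threshold rounding $h(\mathbf{x}) = \mathrm{sign}(\hat p(\mathbf{x}) - t)$ with $t\sim\mathrm{Unif}[-1,1]$ satisfies $\mathbb{E}_t\,\mathbb{P}[h(\mathbf{x})\neq y] = \tfrac12\mathbb{E}[\,|y-\hat p(\mathbf{x})|\,]\le \opt_\sigma+O(\epsilon)$, and $t$ is derandomized over a fine grid using a held-out sample. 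Rescaling the internal accuracy to a constant fraction of the target, the whole theorem reduces to one approximation statement: every smoothed halfspace $T_\rho f$ is $\epsilon$-close in $L_1(\mathcal{D}_{\mathbf{x}})$ to a polynomial of degree $d = \poly(\sigma_0/(\sigma\epsilon))$.

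\paragraph{The approximation statement, regular case.}
Normalize $\|\mathbf{w}\|_2 = 1$ and suppose first that $\mathbf{w}$ is $\tau$-regular with $\tau = c_0\,\epsilon\sqrt{\sigma}$. Conditioned on $\mathbf{x}$, $\mathbf{w}\cdot(\mathbf{x}\odot\mathbf{z}) = \rho(\mathbf{w}\cdot\mathbf{x}) + \sum_i w_i x_i(z_i-\rho)$, and the random part has mean $0$, variance $4\sigma(1-\sigma)$, and third absolute moment $O(\sigma\tau)$; Berry--Esseen then puts its law within $O(\tau/\sqrt\sigma) = O(\epsilon)$ in CDF distance of $\mathcal{N}(\rho(\mathbf{w}\cdot\mathbf{x}),\,4\sigma(1-\sigma))$, uniformly in $\mathbf{x}$. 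Hence $g = T_\rho f$ is pointwise within $O(\epsilon)$ of $\psi(\mathbf{w}\cdot\mathbf{x})$, where $\psi(u) = 1 - 2\Phi\!\big(\tfrac{\theta-\rho u}{2\sqrt{\sigma(1-\sigma)}}\big)$ is an explicit, $[-1,1]$-valued, $O(1/\sqrt\sigma)$-Lipschitz function. Since $\mathbf{w}\cdot\mathbf{x}$ is sub-gaussian with variance proxy $\le\sigma_0^2$, it lies in $[-R,R]$, $R = O(\sigma_0\sqrt{\log(1/\epsilon)})$, except with probability $\le\epsilon$; approximating the Lipschitz $\psi$ uniformly on $[-R,R]$ to error $O(\epsilon)$ by a polynomial $q$ of degree $d = O(R/(\sqrt\sigma\,\epsilon)) = \poly(\sigma_0/(\sigma\epsilon))$ (Jackson's theorem), keeping $q$ bounded on $[-R,R]$, and using sub-gaussian tail bounds to control $\mathbb{E}[\,|q(\mathbf{w}\cdot\mathbf{x})|\,\mathbf{1}[|\mathbf{w}\cdot\mathbf{x}|>R]\,]$, yields an $L_1(\mathcal{D}_{\mathbf{x}})$-approximator; multilinearizing $q(\mathbf{w}\cdot\mathbf{x})$ via $x_i^2=1$ keeps the degree at $d$. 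This is exactly where the sub-gaussian hypothesis is used.

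\paragraph{Irregular case and the main obstacle.}
If $\mathbf{w}$ is not $\tau$-regular, decompose via the critical index $c$: let the head be $H = \{1,\dots,\min(c-1,\ell)\}$ with $\ell = \Theta(\tau^{-2}\log(1/(\sigma\epsilon)))=\poly(1/(\sigma\epsilon))$, so the tail $\bar H$ is $\tau$-regular, and if $c-1\ge\ell$ then $\sigma_{\bar H}^2 = \sum_{i\in\bar H}w_i^2\le(1-\tau^2)^{\ell-1}$ is negligible. Conditioning on the head coordinates and head noise, the tail is again Berry--Esseen--close to a Gaussian of variance $4\sigma(1-\sigma)\sigma_{\bar H}^2$, so $g$ is close to $\mathbb{E}_{(z_i)_{i\in H}}\big[\psi_{\bar H}\big(\sum_{i\in H}w_i x_i z_i + \rho\sum_{i\in\bar H}w_i x_i\big)\big]$ with $\psi_{\bar H}$ an $O(1/(\sqrt\sigma\,\sigma_{\bar H}))$-Lipschitz function of a degree-one argument; a polynomial in this argument, averaged over the $|H|\le\ell$ head noise bits and multilinearized, is a polynomial in $\mathbf{x}$ of degree $\poly(\ell,\,1/(\sqrt\sigma\sigma_{\bar H}),\,\sigma_0/\epsilon)$. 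The hard part — and the step I expect to be the main obstacle — is the regime where a few large head weights dominate so strongly that $\sigma_{\bar H}$ is tiny: the tail then provides essentially no smoothing, $\psi_{\bar H}$ degenerates to a step, and Lipschitz approximation fails. One must instead exploit that the \emph{flipped head coordinates themselves} smooth $g$: argue that $g$ is $O(\epsilon)$-close in $L_1$ to the bounded function $\mathbb{E}_{(z_i)_{i\in H}}[\mathrm{sign}(\sum_{i\in H}w_i x_i z_i - \theta)]$ of the $\le\ell$ head variables — automatically a multilinear polynomial of degree $\le\ell$ — by bounding the probability that the head form $\sum_{i\in H}w_i x_i z_i$ lands within $O(\sigma_0\sigma_{\bar H}\,\mathrm{polylog})$ of $\theta$, which in turn may require recursing the critical-index decomposition inside $H$. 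Carefully balancing $\tau$, $\ell$, and the various tolerances keeps the final degree at $\poly(\sigma_0/(\sigma\epsilon))$.

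\paragraph{Conclusion.}
Across all cases every $T_\rho f$ has a degree-$d$, $d = \poly(\sigma_0/(\sigma\epsilon))$, polynomial $L_1(\mathcal{D}_{\mathbf{x}})$-approximator to accuracy $\epsilon$. There are $n^{O(d)}$ monomials of degree $\le d$, so the empirical regression is a linear program of size $\poly(n^d)$; combined with the reduction above, this produces a hypothesis with error $\le\opt_\sigma+\epsilon$ using $N = n^{\poly(\sigma_0/(\sigma\epsilon))}\log(1/\delta)$ samples and $\poly(n,N)$ runtime, matching Theorem~\ref{thm:main}.
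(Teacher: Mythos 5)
Your reduction to $L_1$ regression is correct and essentially the same as the paper's Theorem~\ref{thm:auxi2} (you phrase it via a single averaged polynomial $p=\EE_{\zb}p_{\zb}$ and randomized thresholding; the paper works with the family $p_{\zb}$ directly, but they are equivalent). Your regular-case argument, however, takes a genuinely different route from the paper's. You apply Berry--Esseen directly to $\wb\cdot(\xb\odot\zb)$ at noise rate $\sigma$, obtain a Lipschitz sigmoid $\psi(\wb\cdot\xb)$, and approximate it via Jackson's theorem on a sub-gaussian-radius interval. The paper instead introduces an \emph{extra} smoothing parameter $\rho$ (the operator $T_{1-\rho}$ applied to $f_{\xb}(\zb)$, justified by the noise-sensitivity bound in Lemma~\ref{lm:ns}), a rerandomization trick $y_i=(1-l_i)z_i+l_i(1-m_i)+l_im_i\epsilon_i$ to isolate a uniformly random component, and then the density-ratio method from \citet{chandrasekaran2024smoothed} to build the polynomial. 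Your direct route is simpler and adequate for the sub-gaussian statement here; the paper's machinery is what lets Theorem~\ref{thm:main} reach strictly sub-exponential marginals, for which naive Jackson + Chebyshev-growth tail bounds become delicate.

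Where you have a genuine gap is the irregular case, and you already sense it: after peeling off a head $H$ of size $\le\ell$, if the critical index is large then $\sigma_{\bar H}$ is exponentially small, $\psi_{\bar H}$ is essentially a step, and the Lipschitz/Jackson route gives unbounded degree. You propose replacing $g$ by $\EE_{(z_i)_{i\in H}}[\mathrm{sign}(\sum_{i\in H}w_ix_iz_i-\theta)]$ and bounding the probability that the head form lands within $O(\sigma_0\sigma_{\bar H}\,\mathrm{polylog})$ of $\theta$, ``which may require recursing the critical-index decomposition inside $H$.'' This anti-concentration bound is precisely the missing step, and the paper does \emph{not} resolve it by recursion. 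It invokes Lemma 5.5 and Claim 5.7 of \citet{diakonikolas2010bounded}: when the critical index exceeds the threshold $K$, one can extract a subsequence of head coordinates $i_1<\cdots<i_t$ whose weights decay geometrically ($|u_{i_{k+1}}|\le|u_{i_k}|/3$), so that the $2^t$ values of $\langle \ub_G,\cdot\rangle$ are pairwise separated by at least $|u_{i_t}|$; combined with the fact that (after the rerandomization) each coordinate of $\yb_G$ avoids its most likely value with probability at least $\rho\sigma$, the head form falls within a window of width $|u_{i_t}|/2$ around $\theta$ with probability at most $(1-\rho\sigma)^t\le\epsilon$, while $\|\ub_T\|_2$ is forced below $|u_{i_t}|/C$ by the geometric decay beyond $i_t$. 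You need some version of this argument (or a correct recursion with a proof that it terminates and that degrees compose) to close the irregular case. A second, smaller, omission: you handle the two extremes ($\bar H$ regular with moderate $\sigma_{\bar H}$, and $\sigma_{\bar H}$ negligible) but not the crossover, which the paper handles cleanly with the indicator trick $p_{\yb_H}(\xb)=\mathrm{sign}(\cdot)\cdot\ind(|\langle\ub_H,\yb_H\rangle-\theta|>C\|\ub_T\|_2)+\tilde p_{\yb_H}(\xb)\cdot\ind(\cdot\le C\|\ub_T\|_2)$; this keeps the effective normalized threshold $|\tilde\theta|/\|\ub_T\|_2$ bounded by $C$, which is what bounds the degree uniformly over $\yb_H$.
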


This is the first result that establishes efficient smoothed agnostic learning of halfspaces over the Boolean hypercube. While our algorithm is improper, it achieves strong generalization guarantees under natural distributions. Previously, such results for the hypercube were only known under very restricted distributions as discussed below. 

\section{Related Work}

\paragraph{Distributional Assumptions in Halfspace Learning:} It is well-understood that agnostically learning halfspaces is intractable in the worst case \citep{feldman2009agnostic,guruswami2009hardness}, even under relatively benign noise models \citep{diakonikolas2022cryptographic}. This has motivated a long line of \emph{distribution-specific} algorithms that guarantee learnability by leveraging assumptions on the data distribution. Early work focused on uniform or product distributions, where powerful Fourier-analytic techniques yield low-degree approximations \citep{linial1993constant,klivans2004learning,blais2010polynomial}. Under the uniform hypercube distribution, halfspace concepts exhibit strong Fourier concentration and low noise sensitivity \citep{o2021analysis}, enabling efficient learning via low-degree polynomial approximation \citep{klivans2004learning}. This was extended to symmetric distributions in \cite{Wimmer10} and to arbitrary product distributions in \cite{blais2010polynomial}. However, beyond these there are very few general classes of distributions over they hypercube where halfspaces are agnostically learnable.

For continuous distributions, halfspaces were shown to be agnostically learnable under log-concave distributions by \cite{kalai2008agnostically} and this was later extended to intersections and other functions of halfspaces in \cite{kane2013learning}. Much like the discrete setting, until the recent work of \cite{chandrasekaran2024smoothed}, most positive results required strong structural assumptions on the marginal distribution of the examples. This work introduced a new smoothed agnostic model which led to several new results for learning halfspaces and functions of halfspaces for a much broader class of distributions (e.g., sub-gaussian or sub-exponential densities).  Our work continues this progression to very general distributions, but focuses on the Boolean domain and shows that only mild tail bounds (strictly sub-exponential) suffice for efficient learning in the smoothed setting.

\textbf{Noise Models and Smoothed Analysis:} In parallel to distributional assumptions on $X$, a complementary line of work has tackled label noise models and smoothed analysis. The classical noise models include random classification noise (RCN), where each label is independently flipped with some probability. \cite{Blum1996APA} gave the first polynomial-time algorithm for learning a halfspace under random classification noise, exploiting the fact that a halfspace’s margin makes it relatively robust to independent label flips. A stronger noise model is the Massart noise model, which bounds the adversary by a flipping probability $\eta<1/2$ on each example. \cite{awasthi2015efficient,pmlr-v49-awasthi16,yan2017revisiting,zhang2017hitting,mangoubi2019nonconvex,diakonikolas2020learning} gave efficient algorithms for learning halfspaces with Massart noise over uniform distribution on the sphere and log-concave distributions. On the other hand, with adversarial (malicious) noise, learning halfspaces requires additional assumptions. \cite{JMLR:v10:klivans09a} designed efficient algorithms for origin-centered halfspaces under malicious noise by assuming isotropic log-concave distribution and small noise rate. In smoothed analysis of learning, one assumes that either the data \citep{10.5555/545381.545499,kane2013learning} or the target concept \citep{chandrasekaran2024smoothed} is randomly perturbed, so that pathological arrangements are avoided. \cite{chandrasekaran2024smoothed} introduced a smoothed agnostic PAC model in $\mathbb{R}^d$ where the learner competes against the best classifier that is robust to slight Gaussian perturbations of examples. Our work can be seen as a Boolean analogue of this idea: rather than perturbing continuous inputs, we require the optimal halfspace to be stable under small random label flips.

\section{Preliminaries}

We review relevant definitions from Boolean function analysis that will allow us to define a discrete smoothing operator and justify using it in place of the original linear threshold function. We use definitions from the analysis of Boolean functions over product spaces, following the framework of \cite{mossel2005noise}. Let $(\Omega_1,\mu_1),\ldots,(\Omega_n,\mu_n)$ be finite probability spaces and let $(\Omega,\mu)$ denote their product. In our setting, we take $\Omega_i = \{\pm1\}$ and define $\mu$ to be the product distribution $\cN_{\sigma}$, where each coordinate is $1$ with probability $1-\sigma$ and $-1$ with probability $\sigma$, independently.

\begin{definition}[$\rho$-noisy copy]\label{def:noisy copy}
Given $\xb\in\Omega$ and $\rho\in[0,1]$, a $\rho$-noisy copy of $\xb$ is a random vector $\yb \sim \cN_{\rho}(\xb)$, where each coordinate $y_i$ is independently set to $x_i$ with probability $\rho$ and to an independent draw from $\mu_i$ with probability $1-\rho$. 
\end{definition}

\begin{definition}[Noise operator $T_\rho$]\label{def:operator}
For any function $f: \Omega \to \RR$ and $\rho \in [0,1]$, the noise operator $T_\rho$ is defined as
\begin{align*}
    (T_{\rho}f)(\xb)=\EE_{\yb\sim\cN_{\rho}(\xb)}[f(\yb)].
\end{align*}
\end{definition}

This definition generalizes the Bonami–Beckner operator \citep{21923} when $\mu$ is the uniform distribution on the hypercube. Intuitively, $T_{\rho}f$ is a smoothed version of $f$, computed by averaging $f$ over a neighborhood of $\xb$ with geometric decay controlled by $\rho$. In particular, $T_1 f = f$, and as $\rho$ decreases from 1, $T_\rho f$ suppresses high-frequency components of $f$. This operator will be used as our main tool for constructing smoothed approximations to Boolean threshold functions.

\begin{definition}[Noise stability and noise sensitivity]
For any $f: \Omega \to \RR$, the noise stability at parameter $\rho$ is defined as
\begin{align*}
    S_{\rho}(f)=\langle f, T_{\rho}f\rangle_{\mu}.
\end{align*}
If $f: \Omega \to \{\pm 1\}$, the noise sensitivity at parameter $\delta \in [0,1]$ is given by
\begin{align*}
    NS_{\delta}(f)=\frac{1}{2}-\frac{1}{2}S_{1-\delta}(f)=\PP_{\xb\sim\mu,\yb\sim\cN_{1-\delta}(\xb)}[f(\xb)\neq f(\yb)].
\end{align*}
\end{definition}

Equivalently, $NS_{\delta}(f) = \Pr_{x,y}[f(x)\neq f(y)]$ where $x$ and $y$ have Hamming correlation $1-2\delta$. This quantity captures the robustness of $f$ to small input perturbations.

It is well-known that natural Boolean functions with low total influence or low-degree Fourier concentration exhibit low noise sensitivity. In particular, linear threshold functions are noise-stable under both uniform \citep{peres2004noise} and general product distributions \citep{blais2010polynomial}. The following lemma bounds the noise sensitivity of halfspaces over arbitrary product spaces. 
\begin{lemma}[Theorem 3.2 in \cite{blais2010polynomial}]\label{lm:ns}
Let $f:\Omega\to\{\pm1\}$ is a linear threshold function, where the domain $\Omega=\Omega_1\times\cdots\times\Omega_n$ has the product distribution $\mu=\mu_1\times\cdots\times\mu_n$. Then $NS_{\delta}(f)\leq\frac{5}{4}\sqrt{\delta}$.
\end{lemma}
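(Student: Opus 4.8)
Since this is the noise-sensitivity bound of \citet{blais2010polynomial}, restated here for later use, I will only sketch the standard proof: a Peres-style hybridization \citep{peres2004noise} combined with a critical-index decomposition to handle irregular weight vectors. Write $f=\mathrm{sgn}(L)$ with $L(\xb)=\sum_i\ell_i(x_i)-\theta$ an affine form on the product space (in our setting $\Omega_i=\{\pm1\}$ and $\ell_i(x_i)=a_i x_i$). We may assume $\delta=1/m$ for an integer $m$, since rounding $\delta$ up costs only a constant factor that the slack in $\tfrac54$ absorbs, and we may assume $\delta$ is small, as otherwise $NS_\delta(f)\le\tfrac12\le\tfrac54\sqrt\delta$ trivially.

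Draw $\xb\sim\mu$, an independent $\xb'\sim\mu$, and a uniformly random map $\pi:[n]\to[m]$ with blocks $B_j=\pi^{-1}(j)$. For $t=0,\dots,m$ let $\yb^{t}$ agree with $\xb'$ on $B_1\cup\cdots\cup B_t$ and with $\xb$ elsewhere, so $\yb^0=\xb$, $\yb^m=\xb'$, and $\yb^{t}$ is $\yb^{t-1}$ with the coordinates of $B_t$ resampled. Because $\pi$ is a uniform map, resampling a single uniformly random block yields exactly a $(1-\delta)$-noisy copy, and by exchangeability each consecutive pair $(\yb^{t-1},\yb^{t})$ has this same joint law; hence
\begin{align*}
NS_\delta(f)=\frac1m\,\EE\Big[\#\{t\in[m]:\mathrm{sgn}(L(\yb^{t-1}))\ne\mathrm{sgn}(L(\yb^{t}))\}\Big],
\end{align*}
so by linearity $m\cdot NS_\delta(f)$ equals the expected number of sign changes of the real sequence $L(\yb^0),\dots,L(\yb^m)$.

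I would then bound the per-step sign-change probability by anti-concentration. Put $\Delta_t:=L(\yb^{t})-L(\yb^{t-1})=\sum_{i\in B_t}(\ell_i(x'_i)-\ell_i(x_i))$, which has mean zero and $\EE[\Delta_t^2]=2\sigma_t^2$ where $\sigma_t^2:=\sum_{i\in B_t}\mathrm{Var}_{\mu_i}(\ell_i)$, and note $\sum_t\sigma_t^2=\mathrm{Var}(L)=:s^2$. A sign change at step $t$ forces $L(\yb^{t-1})\in(-|\Delta_t|,|\Delta_t|)$. Conditioning on $\pi$ and on the coordinates in $B_t$ (which fixes $\Delta_t$), the variable $L(\yb^{t-1})$ is a deterministic shift of a sum of independent contributions over the other blocks, with variance near $s^2$, so the Berry--Esseen theorem bounds its density by roughly $1/(\sqrt{2\pi}\,s)$ uniformly in the center; thus the conditional sign-change probability is at most about $2|\Delta_t|/(\sqrt{2\pi}\,s)$. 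Taking expectations, using $\EE|\Delta_t|\le\sqrt{2}\,\sigma_t$, summing over $t$, and applying Cauchy--Schwarz with $\sum_t\sigma_t^2=s^2$ shows the expected number of sign changes is at most about $\tfrac{2}{\sqrt\pi}\sqrt m$, whence $NS_\delta(f)\lesssim\tfrac{2}{\sqrt\pi}\sqrt\delta<\tfrac54\sqrt\delta$, with the Berry--Esseen error terms and the $\delta\mapsto1/m$ rounding covered by the gap between $2/\sqrt\pi$ and $\tfrac54$.

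The crux — and the step I expect to be the main obstacle — is that this density bound is valid only when no single coordinate holds a non-negligible fraction of $\mathrm{Var}(L)$; for irregular $\ell$ it fails. The fix is the critical-index method: order the coordinates by $\mathrm{Var}_{\mu_i}(\ell_i)$ and peel off a \emph{head} $H$ of the first $K=O(\log(1/\delta))$ of them, after which the \emph{tail} is $\poly(\delta)$-regular (if instead the tail has negligible total variance, $f$ is essentially a junta on $O(\log(1/\delta))$ coordinates and $NS_\delta(f)=O(\delta\log(1/\delta))\le\sqrt\delta$ outright). Condition on all head coordinates of $\xb$ and $\xb'$. Then along the hybrid path the at most $K$ steps whose blocks meet $H$ contribute at most $K$ sign changes in total, i.e. a $K/m=O(\delta\log(1/\delta))$ contribution to $NS_\delta(f)$; for every tail step the restricted form is regular with a conditionally fixed shifted threshold, so the anti-concentration argument applies verbatim, and a short calculation shows the (possibly small) tail variance cancels between the density bound and the step sizes, so the tail steps still contribute $O(\sqrt\delta)$. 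Adding the two gives $NS_\delta(f)=O(\sqrt\delta)$, and being careful with the sharp Berry--Esseen constant — the only genuinely quantitative input — yields the stated $\tfrac54$.
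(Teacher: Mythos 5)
The paper does not reprove this lemma; it is cited verbatim from Theorem~3.2 of \citet{blais2010polynomial}, so the question is whether your sketch stands on its own. It does not, because of an internal parameter conflict in the anti-concentration step. Your Berry--Esseen bound carries a uniform CDF error of order $\alpha$ (the regularity of the tail), and this error enters additively in each of the $m=1/\delta$ per-step sign-change probabilities, hence additively in $NS_\delta(f)$; to land below $O(\sqrt\delta)$ you therefore need $\alpha=O(\sqrt\delta)$. But the critical-index machinery you invoke only guarantees an $\alpha$-regular tail after peeling off a head of $K=\Theta(\alpha^{-2}\log(1/\alpha))$ coordinates (this is exactly the quantity appearing in the paper's own Lemma~\ref{lm:approx}), and with $\alpha=O(\sqrt\delta)$ that is $K=\Theta(\delta^{-1}\log(1/\delta))\gg m$. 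So the ``head step'' contribution $K/m$ is $\Omega(\log(1/\delta))$, not the $O(\delta\log(1/\delta))$ you wrote. The two parameter requirements in your proposal — ``head of size $O(\log(1/\delta))$'' and ``$\poly(\delta)$-regular tail'' — are mutually inconsistent: a head of size $O(\log(1/\delta))$ only buys $O(1/\mathrm{polylog}(1/\delta))$-regularity, which makes the Berry--Esseen error term dominate. No choice of $K,\alpha$ balances $K\delta\le\sqrt\delta$ against $K\gtrsim\alpha^{-2}$ and $\alpha\lesssim\sqrt\delta$, so the method cannot reach $O(\sqrt\delta)$, let alone the sharp constant $\tfrac54$.

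The actual argument of \citet{peres2004noise}, and its generalization in \citet{blais2010polynomial}, avoids this tension by never invoking a quantitative CLT. After the same block hybridization, one does not condition and apply Berry--Esseen; instead one exploits an exchangeability of the conditional law along the hybrid path (in the uniform-hypercube case, that conditionally on the block magnitudes the block signs are i.i.d.\ Rademacher; in the general product case, a coordinate-wise rerandomization reducing to a similar exchangeable structure) to directly bound the expected number of sign changes of a randomly signed sum by $O(\sqrt m)$. This is a combinatorial/martingale fact with no additive error term and no need for a critical-index decomposition, which is why the proof is short and a clean constant like $\tfrac54$ is attainable. Your identity $m\cdot NS_\delta(f)=\EE[\#\text{sign changes}]$ and the Cauchy--Schwarz summation are both fine; the gap is the per-step bound, and replacing Berry--Esseen with the Peres symmetry argument is what makes the lemma go through.
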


This bound implies that for $\rho = 1 - \delta$ close to $1$, the smoothed function $T_\rho f$ closely approximates the original threshold function $f$. This justifies our strategy of working with $T_\rho f$ instead of $f$ in the smoothed learning setting: any learner that performs well on $T_\rho f$ will, up to a small error, also succeed on $f$.

\paragraph{Notation.} We use small boldface characters for vectors and capital bold characters for matrices. We use $[d]$ to denote the set $\{1,2,\cdots,d\}$. For a vector $\xb\in\RR^d$ and $i\in[d]$, $x_i$ denotes the $i$-th coordinate of $\xb$, and $\|\xb\|_2:=\sqrt{\sum_{i=1}^{d}x_i^2}$ the $\ell_2$ norm of $\xb$. For $\xb,\yb\in\RR^d$, we use $\langle\xb,\yb\rangle=\sum_{i=1}^{d}x_iy_i$ as the inner product between them and $\xb\odot\yb=(x_1y_1,\cdots,x_dy_d)$ as the Hadamard product between them. We use $\one\{\cE\}$ to be the indicator function of some event $\cE$. For $(\xb,y)$ distributed according to $\cD$, we denote $\cD_{\xb}$ to be the marginal distribution of $\xb$.

\section{Technical Overview}
In this section, we outline the main steps of our analysis. Our approach follows a reduction-based strategy: we reduce smoothed agnostic learning of Boolean halfspaces to the problem of approximating a smoothed halfspace by a low-degree polynomial, which can then be learned via $L_1$ regression (Section~\ref{sec:l1}). We begin by replacing the original target $f_{\xb}(\zb)=f(\xb\odot\zb)$ with a smoothed surrogate $T_{1-\rho}f_{\xb}(\zb)$ (Definition~\ref{def:operator}), facilitating approximation by low-degree polynomials (Section~\ref{sec:smoothed}).

To handle the biased distribution arising from noise perturbation, we introduce a rerandomization and conditioning trick that rewrites each bit as a mixture involving uniform random variables. This allows us to express the smoothed function as a conditional expectation over uniformly random inputs, making it amenable to quantitative central-limit theorems (Berry–Esseen estimates; Section~\ref{sec:biased}). 
We then use a case analysis facilitated by a  decomposition of the weight vector (Section~\ref{sec:critical}):
\begin{enumerate}
    \item  If a small number of large coordinates (the ``head'') dominate, the halfspace’s output is primarily determined by those coordinates, and we can approximate the function directly. 
    \item Otherwise, the remaining ``tail'' is \emph{regular}, and we apply the Berry–Esseen theorem to approximate the Boolean sum by a Gaussian. This reduces the problem to the continuous setting, where we leverage Gaussian-based techniques (the density ratio method from \cite{chandrasekaran2024smoothed}) to construct low-degree polynomial approximations.
\end{enumerate}
Together, these ingredients yield an efficient smoothed learner for Boolean halfspaces under strictly sub-exponential input distributions.

\subsection{High-Level Approach via \texorpdfstring{$L_1$} Regression}\label{sec:l1}

Our starting point is the $L_1$-polynomial regression method for agnostic learning. In particular, \cite{kalai2008agnostically} established a powerful reduction from agnostic learnability  to low-degree polynomial approximation. 

\begin{algorithm}
\caption{$L_1$ Polynomial Regression Algorithm}\label{alg:l1-regression}
\textbf{Input:} Sample $S = \{(x^1, y^1), \dots, (x^N, y^N)\}$, degree bound $d$
\begin{algorithmic}[1]
\STATE Find polynomial $p$ of degree $\leq d$ to minimize 
\[
\frac{1}{N} \sum_{j=1}^{N} |p(x^j) - y^j|.
\]
(This can be done by expanding examples to include all monomials of degree $\leq d$ and then performing $L_1$ linear regression.)
\STATE Output hypothesis $h(x) = \mathrm{sign}(p(x) - t)$, where $t \in [-1, 1]$ is chosen to minimize the classification error on $S$.
\end{algorithmic}
\end{algorithm}

\begin{theorem}[Theorem 5 in \cite{kalai2008agnostically}]\label{thm:auxi}
Suppose $\min_{\deg(p)\leq d}\EE_{\cD_{\xb}}[|p(\xb)-c(\xb)|]\leq\epsilon$ for some degree $d$ and any $c$ in the concept class $\cC$. Then, for $h$ output by the degree-$d$ $L_1$ polynomial regression algorithm with $N=\poly(n^d/\epsilon)$ examples, $\EE_{S\sim\cD^{N}}[\PP_{(\xb,y)\sim\cD}[h(\xb)\neq y]]\leq\mathrm{opt}+\epsilon$, where $\mathrm{opt}=\min_{f\in\cC}\PP_{(\xb,y)\sim\cD}[f(\xb)\neq y]$. If we repeat the algorithm $r=O(\log(1/\delta)/\epsilon)$ times with fresh examples each, and let $h$ be the hypothesis with lowest error on an independent test set of size $O(\log(1/\delta)/\epsilon^2)$, then with probability at least $1-\delta$, $\PP_{(\xb,y)\sim\cD}[h(\xb)\neq y]\leq\mathrm{opt}+\epsilon$.
\end{theorem}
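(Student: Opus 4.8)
I would reconstruct the $L_1$-polynomial regression argument of \cite{kalai2008agnostically}, which splits into three pieces: (i) a thresholding identity turning $L_1$-approximation into $0/1$-accuracy, (ii) the hypothesized low-degree approximant to the near-optimal concept, and (iii) uniform convergence over thresholded degree-$d$ polynomials. For (i): for any $g:\cX\to[-1,1]$ and $t$ uniform on $[-1,1]$ drawn independently of $(\xb,y)\sim\cD$, a short computation (splitting on $y=\pm1$) shows $\PP_t[\mathrm{sign}(g(\xb)-t)\neq y\mid\xb,y]=\tfrac12|g(\xb)-y|$, hence \[ \EE_{t}\,\PP_{(\xb,y)\sim\cD}[\mathrm{sign}(g(\xb)-t)\neq y]=\tfrac12\,\EE_{(\xb,y)\sim\cD}\,|g(\xb)-y|, \] and the same identity holds over the empirical distribution on the sample $S$. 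In particular, when the algorithm searches $t\in[-1,1]$ for a $[-1,1]$-valued predictor, the best threshold achieves empirical $0/1$ error at most half the empirical $L_1$ error.

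\textbf{Benchmark polynomial.} Next I would fix the quantity the regressor must match. Let $f^\star\in\cC$ be (near-)optimal for $\opt$ and let $p^\star$ of degree $\le d$ satisfy $\EE_{\cD_\xb}|p^\star(\xb)-f^\star(\xb)|\le\epsilon$, which is the hypothesis applied with $c=f^\star$. Since $f^\star,y\in\{\pm1\}$, we have $|f^\star(\xb)-y|=2\cdot\one\{f^\star(\xb)\neq y\}$, so the triangle inequality gives $\EE_{(\xb,y)\sim\cD}|p^\star(\xb)-y|\le 2\opt+\epsilon$.

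\textbf{Generalization (the main step).} The algorithm outputs $h=\mathrm{sign}(\hat p(\xb)-\hat t)$ with $\hat p$ minimizing the empirical $L_1$ objective over degree-$\le d$ polynomials. Let $\bar p$ be $\hat p$ clipped to $[-1,1]$; since $\mathrm{sign}(\hat p(\xb)-t)=\mathrm{sign}(\bar p(\xb)-t)$ for $t\in(-1,1)$, we may treat $h$ as built from $\bar p\in[-1,1]$, so the empirical half of the thresholding identity gives $\widehat{\mathrm{err}}_S(h)\le\tfrac12\widehat{\EE}_S|\bar p(\xb)-y|\le\tfrac12\widehat{\EE}_S|\hat p(\xb)-y|\le\tfrac12\widehat{\EE}_S|p^\star(\xb)-y|$, using that clipping does not increase the distance to $y\in[-1,1]$ and that $\hat p$ is the empirical $L_1$ minimizer. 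Taking expectation over $S\sim\cD^N$, $\EE_S[\widehat{\mathrm{err}}_S(h)]\le\tfrac12\EE|p^\star(\xb)-y|\le\opt+\epsilon/2$. To convert empirical error into population error I would invoke uniform convergence over the class $\{\mathrm{sign}(p(\xb)-t):\deg p\le d,\ t\in[-1,1]\}$: these are signs of affine functionals over the at most $\binom{n}{\le d}$ monomials, so the class has VC dimension $\binom{n}{\le d}+1=n^{O(d)}$ and $\EE_S[\sup_h(\mathrm{err}(h)-\widehat{\mathrm{err}}_S(h))]=O(\sqrt{\binom{n}{\le d}/N})\le\epsilon/2$ once $N=\poly(n^d/\epsilon)$. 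Adding the two bounds gives $\EE_{S\sim\cD^N}[\PP_{(\xb,y)\sim\cD}[h(\xb)\neq y]]\le\opt+\epsilon$, which is the first assertion. I expect this to be the only place needing real care: one must correctly handle the interaction between the a priori unbounded regressor and the bounded threshold (the clipping observation does this), and verify that passing to a finite-VC class keeps the sample complexity $\poly(n^d/\epsilon)$ rather than incurring a dependence on coefficient magnitudes.

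\textbf{Confidence amplification.} Finally, I would run the above with target accuracy $\epsilon/3$, so $\EE_S[\mathrm{err}(h_S)]\le\opt+\epsilon/3$. As $\mathrm{err}(h_S)\ge0$ and $\opt\le\tfrac12$, Markov's inequality yields $\PP_S[\mathrm{err}(h_S)\ge\opt+\epsilon]\le\frac{\opt+\epsilon/3}{\opt+\epsilon}\le 1-\tfrac23\epsilon$, so each of $r=O(\log(1/\delta)/\epsilon)$ independent runs returns a hypothesis with error $<\opt+\epsilon$ with probability $\Omega(\epsilon)$, and with probability at least $1-\delta/2$ at least one run does. A fresh test set of size $O(\log(1/\delta)/\epsilon^2)$, via Hoeffding and a union bound over the $r$ candidates, estimates all their errors to within $\epsilon/3$ with probability at least $1-\delta/2$, so outputting the empirically best candidate and rescaling $\epsilon$ by a constant gives $\PP_{(\xb,y)\sim\cD}[h(\xb)\neq y]\le\opt+\epsilon$ with probability at least $1-\delta$.
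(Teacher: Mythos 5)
Your proof is correct and follows essentially the same route as the cited source (KKMS 2008, Theorem 5), which is also the template the paper itself uses for the smoothed analogue Theorem~\ref{thm:auxi2}: the thresholding identity $\EE_t\PP[\mathrm{sign}(g-t)\neq y]=\tfrac12\EE|g-y|$ bounds the thresholded classifier's error by half the $L_1$ error, empirical optimality of $\hat p$ plus the benchmark polynomial yields $\EE_S[\widehat{\mathrm{err}}_S(h)]\le\opt+\epsilon/2$, VC-based uniform convergence over degree-$d$ polynomial threshold functions handles generalization, and Markov with repeated runs and a validation set gives the high-probability statement. Note the paper does not reprove this theorem (it is imported), so your reconstruction is a useful expansion of the one-line inequality the paper invokes from it; the only minor caveat is that the clipping step needs the observation that $\mathrm{sign}(\hat p-t)=\mathrm{sign}(\bar p-t)$ pointwise only for $t\in(-1,1)$, which is harmless since the thresholding identity averages $t$ uniformly over $[-1,1]$.
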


Theorem~\ref{thm:auxi} says that if the target function $f$ can be approximated in $L_1$ by a low-degree polynomial $p$ with error at most $\epsilon$, then one can efficiently learn $f$ to misclassification error $\mathrm{opt} + \epsilon$, where $\mathrm{opt}$ is the Bayes-optimal error rate under distribution $\cD$. Once such a polynomial is shown to exist, Theorem~\ref{thm:auxi} implies a computationally efficient learning algorithm with sample complexity $N = \poly(n^d/\epsilon) \log(1/\delta)$.

\subsection{Smoothed Learning as Non-Worst-Case Approximation}\label{sec:smoothed}

The challenge is that an arbitrary halfspace $f(\xb)=\mathrm{sign}(\langle\wb,\xb\rangle - \theta)$ might not be well-approximated by any low-degree polynomial over worst-case input distributions. Following \cite{chandrasekaran2024smoothed}, we view smoothed learning as a form of non-worst-case approximation. In this smoothed agnostic setting, the learner’s “effective” target concept is the mapping $(\xb,\zb)\mapsto f(\xb\odot \zb)$, where $\zb \in \{\pm 1\}^n$ is a random noise vector independent of $\xb$ with $\sigma$ close to 0 meaning only a tiny fraction of bits are flipped on average. We extend the $L_1$-regression reduction to handle this scenario. In particular, we prove an analogue of \cite{kalai2008agnostically}’s result tailored to the smoothed model:

\begin{theorem}\label{thm:auxi2}
Suppose $\min_{\deg(p_{\zb})\leq d}\EE_{\zb\sim\cD_{\sigma},\xb\sim\cD_{\xb}}[|p_{\zb}(\xb)-f(\xb\odot\zb)|]\leq\epsilon$ for some degree $d$ and any halfspace $f$, where $\cD_{\xb}$ is any distribution on $\{\pm1\}^n$. Then, for $h$ output by the degree-$d$ $L_1$ polynomial regression algorithm with $N=\poly(n^d/\epsilon)$ examples, $\EE_{S\sim\cD^{N}}[\PP_{(\xb,y)\sim\cD}[h(\xb)\neq y]]\leq\mathrm{opt}_{\sigma}+\epsilon$. If we repeat the algorithm $r=O(\log(1/\delta)/\epsilon)$ times with fresh examples each, and let $h$ be the hypothesis with lowest error on an independent test set of size $O(\log(1/\delta)/\epsilon^2)$, then with probability at least $1-\delta$, $\PP_{(\xb,y)\sim\cD}[h(\xb)\neq y]\leq\mathrm{opt}_{\sigma}+\epsilon$.
\end{theorem}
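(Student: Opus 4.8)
The plan is to mirror the proof of Theorem~\ref{thm:auxi} from \cite{kalai2008agnostically}, replacing the target concept $c(\xb)$ by the smoothed concept $f(\xb\odot\zb)$ and treating $\zb$ as an extra random input. Concretely, I would introduce the augmented distribution $\cD'$ over pairs $((\xb,\zb),y)$, where $(\xb,y)\sim\cD$ and $\zb\sim\cD_\sigma$ independently, so that a sample $(\xb^j,y^j)$ together with a freshly drawn $\zb^j$ is an i.i.d. draw from $\cD'$. Running the $L_1$ regression algorithm on the examples $\{((\xb^j,\zb^j),y^j)\}$ — i.e., expanding each $(\xb^j,\zb^j)$ into all monomials of degree $\le d$ over the $2n$ variables and minimizing the empirical $L_1$ error — then just applies Theorem~\ref{thm:auxi} with domain $\{\pm1\}^{2n}$, concept class $\cC' = \{(\xb,\zb)\mapsto f(\xb\odot\zb) : f \text{ a halfspace}\}$, and error parameter $\epsilon$: the hypothesis $\min_{\deg p_{\zb}\le d}\EE[|p_{\zb}(\xb)-f(\xb\odot\zb)|]\le\epsilon$ is exactly the approximation hypothesis we assumed, and $\mathrm{opt}$ for $\cC'$ under $\cD'$ equals $\inf_f \PP_{\xb,y,\zb}[f(\xb\odot\zb)\ne y] = \mathrm{opt}_\sigma$ by Definition~\ref{def:smooth opt}.

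The one genuine subtlety — and the step I expect to require the most care — is that the output hypothesis must be a function of $\xb$ alone, not of $(\xb,\zb)$, whereas the black-box reduction produces $h(\xb,\zb) = \mathrm{sign}(p(\xb,\zb)-t)$. I would resolve this by observing that the guarantee $\EE_{S\sim\cD'^N}[\PP_{((\xb,\zb),y)\sim\cD'}[h(\xb,\zb)\ne y]]\le\mathrm{opt}_\sigma+\epsilon$ already averages over $\zb$, so by Fubini there is a fixed value $\zb^\star$ (indeed, a positive-measure set of them) with $\PP_{(\xb,y)\sim\cD}[h(\xb,\zb^\star)\ne y]\le\mathrm{opt}_\sigma+\epsilon$; hardwiring $\zb\mapsto\zb^\star$ turns $h$ into a degree-$d$ polynomial threshold function of $\xb$. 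Alternatively, and more cleanly for the algorithmic statement, one can keep $h$ as a randomized hypothesis that samples $\zb\sim\cD_\sigma$ internally, or derandomize by noting the empirical objective the algorithm actually minimizes treats the monomials in $\zb$ as additional features whose values on test points can be resampled; either way the sample complexity is $\poly((2n)^d/\epsilon)=\poly(n^d/\epsilon)$.

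Finally, for the high-probability statement I would repeat the argument verbatim: run the above $r=O(\log(1/\delta)/\epsilon)$ times on fresh samples to get candidate hypotheses $h_1,\dots,h_r$, and by Markov's inequality applied to the expectation bound, at least one $h_i$ has true error $\le\mathrm{opt}_\sigma+\epsilon/2$ (say) with probability $\ge1-\delta/2$; then selecting the $h_i$ minimizing error on an independent test set of size $O(\log(1/\delta)/\epsilon^2)$ incurs only an additional $O(\epsilon)$ via a Hoeffding/union bound over the $r$ candidates, exactly as in Theorem~\ref{thm:auxi}. Since this selection step only evaluates each $h_i$ on labeled points and the test-set estimate is for $\PP_{(\xb,y)\sim\cD}[h_i(\xb)\ne y]$ after $\zb$ has been fixed (or marginalized), no new ideas are needed. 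I would present the whole proof as: (i) reduce to Theorem~\ref{thm:auxi} on the augmented domain; (ii) fix $\zb$ to recover an $\xb$-only hypothesis; (iii) quote the amplification argument. The approximation hypothesis and the identity $\mathrm{opt}_{\cC',\cD'}=\mathrm{opt}_\sigma$ are the only places the smoothed structure enters, and both are immediate from the definitions.
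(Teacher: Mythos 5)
There is a genuine gap in your reduction to Theorem~\ref{thm:auxi} on the augmented domain $\{\pm1\}^{2n}$. That theorem would require a \emph{single} polynomial $p(\xb,\zb)$ of total degree $\leq d$ in the joint variables such that $\EE_{\xb,\zb}[|p(\xb,\zb)-f(\xb\odot\zb)|]\leq\epsilon$. What the hypothesis of Theorem~\ref{thm:auxi2} actually provides is a $\zb$-indexed \emph{family} $\{p_\zb\}$, where each $p_\zb$ is a degree-$\leq d$ polynomial in $\xb$ whose coefficients may depend on $\zb$ in an arbitrary (and in particular arbitrarily high-degree) way. Defining $p(\xb,\zb):=p_\zb(\xb)$ therefore yields a function with degree $\leq d$ in $\xb$ for each fixed $\zb$ but potentially degree $n$ in $\zb$, so it is not a valid competitor for degree-$d$ $L_1$ regression over $\{\pm1\}^{2n}$. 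The polynomials constructed in Lemma~\ref{lm:approx3} are indeed of this form — they involve conditional expectations over noisy copies of $\zb$ and are not low-degree in $\zb$ — so the claim that the two approximation hypotheses are ``exactly'' the same is incorrect.

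The paper's proof sidesteps this entirely by never augmenting the domain. The algorithm runs $L_1$ regression over degree-$\leq d$ polynomials in $\xb$ \emph{alone}, and the approximation hypothesis is used via the following observation: for every fixed $\zb$, the function $p_\zb$ is a legitimate degree-$\leq d$ competitor in $\xb$, so the empirical minimizer $p_S$ satisfies $\frac{1}{2N}\sum_i|y_i-p_S(\xb_i)|\leq\frac{1}{2N}\sum_i|y_i-p_\zb(\xb_i)|$; taking $\EE_{\zb\sim\cN_\sigma}$ of both sides (the left side is $\zb$-free) and splitting by triangle inequality against $f^*(\xb_i\odot\zb)$ then gives $\opt_\sigma+\epsilon/2$ in expectation over $S$. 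This also eliminates the derandomization/``fix $\zb^\star$'' step you identified as delicate — the output hypothesis is a function of $\xb$ from the start, the algorithm never samples any $\zb^j$, and no Fubini-style selection of a distribution-dependent $\zb^\star$ is needed. If you want to keep a reduction-style presentation, the fix is to compete against the $\zb$-free average $\bar p(\xb):=\EE_{\zb}[p_\zb(\xb)]$, which is degree $\leq d$ in $\xb$ and satisfies $\EE_\xb[|\bar p(\xb)-\EE_\zb f^*(\xb\odot\zb)|]\leq\epsilon$ by Jensen; but once you do this the $\zb$-coordinates in the regression are inert, and you are back to the paper's argument.
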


After this reduction, our task reduces to a purely approximation-theoretic problem: we need to construct, for each noise vector $\zb$, a polynomial $p_{\zb}(\xb)$ in the variable $\xb$ such that the expected $L_1$ error over the smoothing process remains small:
\begin{align*}
    \EE_{\zb\sim\cN_{\sigma},\xb\sim D_{\xb}}\big[|p_{\zb}(\xb)-f(\xb\odot\zb)|\big]\leq\epsilon.
\end{align*}

To achieve this, we treat the smoothing noise $\zb$ and consider $\xb$ as a fixed parameter. This reduces the problem to approximating the function $f_{\xb}(\zb)=f(\xb\odot\zb)$. We replace $f_{\xb}(\zb)$ with its smooth approximation by applying the generalized Bonami-Beckner operator (Definition~\ref{def:operator}) on $\zb$:
\begin{align*}
    T_{1-\rho}f_{\xb}(\zb)=\EE_{\yb\sim\cN_{1-\rho}(\zb)}[f_{\xb}(\yb)].
\end{align*}
Applying Lemma~\ref{lm:ns} with $\rho=O(\epsilon^2)$, we obtain:
\begin{align*}
    \EE_{\zb\sim\cN_{\sigma}}[|T_{1-\rho}f_{\xb}(\zb)-f_{\xb}(\zb)|]\leq\epsilon.
\end{align*}
Therefore, if we can find a low-degree polynomial that approximates $T_{1-\rho}f_{\xb}(\zb)$ well in $L_1$, that polynomial will also succeed in approximating $f_{\xb}(\zb)$. The remainder of our technical approach will be devoted to constructing such a polynomial approximator for the smoothed halfspace $T_{1-\rho}f_{\xb}(\zb)$.

\subsection{From Biased to Uniform Distribution on the Hypercube}\label{sec:biased}
To construct low-degree polynomial approximations, we analyze the noise-smoothed function $T_{1-\rho}f_{\xb}$. Recall that for a fixed input $\xb$, we define $f_{\xb}(\zb)=f(\xb\odot\zb)$, and suppose $f(\cdot)=\sign(\langle\wb,\cdot\rangle-\theta)$. Then we have:
\begin{align*}
    T_{1-\rho}f_{\xb}(\zb)&=\EE_{\yb\sim\cN_{1-\rho}(\zb)}[\sign(\langle\wb\odot\xb,\yb\rangle-\theta)]=\EE_{\yb\sim\cN_{1-\rho}(\zb)}[\sign(\langle\ub,\yb\rangle-\theta)],
\end{align*}
where we define $\ub=\wb\odot\xb$. 

Here, $\zb\sim\cN_{\sigma}$ denotes a product distribution over $\{\pm1\}^n$ where each bit $z_i$ is 1 with probability $1-\sigma$ and $-1$ with probability $\sigma$. The vector $\yb$ is a $(1-\rho)$-noisy copy of $\zb$ (Definition~\ref{def:noisy copy}) with probability $1-\rho$, $y_i=z_i$; otherwise, $y_i$ is redrawn independently from $\cN_{\sigma}$ with probability $\rho$. Therefore, $\yb\sim\cN_{\sigma}$, correlated with $\zb$, follows a biased distribution on the hypercube. To facilitate polynomial approximation, we aim to reduce this to a form where the randomness comes from a uniform distribution. To achieve this, we introduce a rerandomization trick that rewrites each coordinate $y_i$ as:
\begin{equation*}
y_i=(1-l_i)z_i+l_i\tau_i=(1-l_i)z_i+l_i(1-m_i)+l_im_i\epsilon_i,
\end{equation*}
where 
\begin{equation*}
l_i=\begin{cases}
    1\text{ w.p. }\rho\\
    0\text{ w.p. }1-\rho
\end{cases},
\tau_i=\begin{cases}
    1\text{ w.p. }1-\sigma\\
    -1\text{ w.p. }\sigma
\end{cases},
m_i=\begin{cases}
    1\text{ w.p. }2\sigma\\
    0\text{ w.p. }1-2\sigma
\end{cases},
\end{equation*}
with $\epsilon_i$ being a Radmacher random variable (uniform over $\{\pm1\})$. 

This decomposition captures the full noise process: $l_i$ is an indicator that determines whether the coordinate is kept as $z_i$ (with probability $1-\rho$) or resampled as $\tau_i\sim(\cN_{\sigma})_i$ (with probability $\rho$). The variable $m_i$ is then used to rerandomize $\tau_i$, since $\tau_i$ can be viewed as taking the value 1 with probability $1-2\sigma$ (when $m_i=0)$ or a uniform random bit $\epsilon_i$ with probability $2\sigma$ (when $m_i=1$).

A key benefit is that, conditional on $\lb$ and $\mb$, the random component $\bepsilon$ follows the uniform distribution on $\{\pm1\}^n$. We now condition on $(\lb,\mb)$ and express the smoothed function as: 
\begin{align*}
    T_{1-\rho}f_{\xb}(\zb)&=\EE_{\lb,\mb}\Big[\EE_{\yb}[\sign(\langle\ub,\yb\rangle-\theta)|\lb,\mb]\Big]=\EE_{\lb,\mb}\Big[\EE_{\bepsilon}[\sign(\langle\ub,\lb\odot\mb\odot\bepsilon\rangle+b-\theta)|\lb,\mb]\Big],
\end{align*}
where $b$ is a deterministic shift depending on the coordinates fixed by $\lb,\mb$.

Given that $\bepsilon$ is uniform distribution on hypercube, the inner sum behaves like a sum of independent $\{\pm1\}$ random variables. Under mild regularity condition (Definition~\ref{def:reg}) on the weight vector $\ub$, we can apply the Berry–Esseen Theorem to approximate this inner distribution by a Gaussian. Specifically, we approximate: 
\begin{align}
    \langle\ub,\lb\odot\mb\odot\bepsilon\rangle\approx\cN(0,\|\ub\odot\lb\odot\mb\|_2^2). \label{approx}
\end{align}
Substituting into the earlier expression yields the Gaussian-smoothed approximation:
\begin{align*}
    \tilde{T_{1-\rho}f_{\xb}(\zb)}=\EE_{\lb,\mb}\Big[\EE_{s\sim\cN(0,\|\ub\odot\lb\odot\mb\|_2^2)}[\sign(s+b-\theta)|\lb,\mb]\Big]
\end{align*}
This reduces our setting to the Gaussian noise model analyzed in \cite{chandrasekaran2024smoothed} for which efficient low-degree polynomial approximations are known. In particular, the density ratio method developed in that work can be applied to approximate $\tilde{T_{1-\rho}f_{\xb}(\zb)}$ with a small $L_1$ error. 

\subsection{Handling Irregularity via Critical Index Analysis}\label{sec:critical}
Recall that the approximation in \eqref{approx} relies on the Berry–Esseen Theorem, which introduces a uniform approximation error of $O\big((\frac{\|\ub\odot\lb\odot\mb\|_3}{\|\ub\odot\lb\odot\mb\|_2})^3\big)$ for the cumulative density function. This can be further bounded by $O\big(\frac{\|\ub\odot\lb\odot\mb\|_{\infty}}{\|\ub\odot\lb\odot\mb\|_2}\big)$. Note that each coordinate $l_im_i$ is equal to 1 with probability $2\rho\sigma$ and 0 otherwise. By concentration, we have $\|\ub\odot\lb\odot\mb\|_2\approx(2\rho\sigma)^{1/2}\|\ub\|_2$, so the approximation error becomes $O\big((\rho\sigma)^{-1/2}\frac{\|\ub\|_{\infty}}{\|\ub\|_2}\big)$. This motivates the following regularity condition:

\begin{definition}[regularity]\label{def:reg}
For vector $\wb\in\RR^n$, $\wb$ is $\alpha$-regular if $\|\wb\|_{\infty}\leq\alpha\cdot\|\wb\|_2$.
\end{definition}

Given this definition, we see that if $\ub$ is $\alpha$-regular, then the approximation in \eqref{approx} holds with $L_{\infty}$ error $O((\rho\sigma)^{-1/2}\alpha)$. Since $\ub=\wb\odot\xb$ and $\xb\in\{\pm1\}^n$, the regularity of $\ub$ is equivalent to that of $\wb$. For such ``good'' (i.e., $\alpha$-regular with small $\alpha$) weight vectors $\wb$, we can construct low-degree polynomial approximators by reducing to the Gaussian setting analyzed in Section~\ref{sec:biased} and \cite{chandrasekaran2024smoothed}. 

However, we must also handle the ``bad'' or irregular cases, where $\langle\ub,\lb\odot\mb\odot\bepsilon\rangle$ deviates significantly from Gaussian behavior. To deal with such irregular $\wb$, we employ critical index analysis, a standard tool in the analysis of Boolean halfspaces \citep{1663723,doi:10.1137/070707890,diakonikolas2010bounded,meka2010pseudorandom,doi:10.1137/090756466,diakonikolas2013improved}. 
%in the proof of Theorem 5.3 in \cite{diakonikolas2010bounded} and Theorem 4.8 in \cite{meka2010pseudorandom}). 

\begin{definition}[$\alpha$-critical index]
For $\ub\in\RR^n$, assume that $|u_1|\geq\cdots\geq|u_n|$. We define the $\alpha$-critical index $\ell(\alpha)$ of a halfspace $h(\xb)=\sign(\langle\ub,\xb\rangle-\theta)$ as the smallest index $i\in[n]$ for which $|u_i|\leq\alpha\cdot\sigma_i$, where $\sigma_i:=\sqrt{\sum_{j=i}^{n}u_j^2}$.
\end{definition}

Intuitively, the $\alpha$-critical index is the first index $i$ such that the tail weight vector $(u_i,\cdots,u_n)$ is $\alpha$-regular. Our earlier argument covers the case $i=1$, where the entire vector is regular. Using this framework, we obtain the following structural result:

\begin{lemma}[Critical Index Decomposition]\label{lm:approx}
Without loss of generality, let $\ub=\wb\odot\xb$ with entries sorted in non-increasing magnitude, i.e., $|u_1|\geq\cdots\geq|u_n|$. Suppose $\xb$ follows a $(\alpha,\lambda)$-strictly sub-exponential distribution on $\{\pm1\}^n$. For any fixed $\zb$, there exists a threshold $K=K(\alpha,\epsilon)=O\big(\log(1+\lambda)/\alpha^2+\log(1/\epsilon)\log(1/\alpha)/\rho\sigma\alpha^2\big)$ such that one of the following two conditions holds:
\begin{enumerate}
    \item For some $H< K$, the tail vector $\ub_{T}=(u_{H+1},\cdots,u_n)$ is $\alpha$-regular, where $\alpha$ is to be choosen later.
    \item For $H=K$ and at least $1-\epsilon$ fraction of $\xb$, it holds that \begin{align}
    \PP_{\yb\sim\cN_{1-\rho}(\zb)}[\sign(\langle\ub_{H},\yb_{H}\rangle+\langle\ub_{T},\yb_{T}\rangle-\theta)\neq\sign(\langle\ub_{H},\yb_{H}\rangle-\theta)]\leq\epsilon,\label{ineq:approx}
\end{align}
where $\ub_{H}:=(u_1,\cdots,u_H)$.
\end{enumerate}
\end{lemma}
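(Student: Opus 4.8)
The plan is to run the standard critical-index argument on the sorted weight vector $\ub=\wb\odot\xb$, but with a noise-adapted twist that accounts for the $(1-\rho)$-noisy copy $\yb\sim\cN_{1-\rho}(\zb)$ and the biasedness of $\cN_\sigma$. Set $\alpha$ as the regularity parameter and let $\ell=\ell(\alpha)$ be the $\alpha$-critical index of $\ub$. If $\ell < K$ for the claimed threshold $K=K(\alpha,\epsilon)$, then by definition of the critical index the tail $\ub_T=(u_{\ell+1},\dots,u_n)$ is $\alpha$-regular and we are in Case~1 with $H=\ell$. So the entire content is the complementary case $\ell \ge K$, where we must show Case~2 holds with $H=K$.

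The key quantitative step is a geometric-decay estimate for the tail $\ell_2$-mass when the critical index is large. Since $|u_i| > \alpha\,\sigma_i$ for every $i < \ell$, a routine recursion on $\sigma_i^2 = u_i^2 + \sigma_{i+1}^2 \ge (1+\alpha^2)\sigma_{i+1}^2$ gives $\sigma_K^2 \le (1+\alpha^2)^{-(K-1)}\sigma_1^2 \le e^{-\alpha^2(K-1)}\|\ub\|_2^2$. Hence, after conditioning on $\lb,\mb$ (via the rerandomization of Section~\ref{sec:biased}) and extracting the uniform Rademacher part $\bepsilon$, the relevant tail contribution $\langle \ub_T, \yb_T\rangle$ is a zero-mean quantity whose typical scale is $(\rho\sigma)^{1/2}\sigma_K \le (\rho\sigma)^{1/2} e^{-\alpha^2(K-1)/2}\|\ub\|_2$. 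To flip the sign of $\sign(\langle\ub_H,\yb_H\rangle - \theta)$ one needs $|\langle\ub_H,\yb_H\rangle - \theta|$ to be smaller than the tail fluctuation, i.e., the head margin must land in a window of width $O((\rho\sigma)^{1/2} e^{-\alpha^2(K-1)/2}\|\ub\|_2)$. The probability of this bad event splits into two parts: (i) an anti-concentration bound showing the head margin rarely lands in such a narrow window, and (ii) a deviation bound controlling the tail fluctuation itself. For (ii) one applies a Hoeffding/sub-exponential tail bound over the randomness in $\yb_T$ (for fixed $\zb,\lb,\mb$) — this is where the $(\alpha,\lambda)$-strictly-sub-exponential assumption on $\xb$ enters, contributing the $\log(1+\lambda)/\alpha^2$ term to $K$; for (i) one uses that the head weights $u_1,\dots,u_H$ with $H=K$ are the $H$ largest, so $\sigma_H \le \sigma_K$ is exponentially small relative to $\|\ub_H\|_2$, and the discrete anti-concentration of $\langle\ub_H,\yb_H\rangle$ over the noise distribution gives a window-probability bound of order $(\rho\sigma)^{-1/2}\cdot(\text{window width})/\|\ub_H\|_2$ up to a $\log(1/\alpha)$ covering factor. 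Choosing $K \asymp \log(1+\lambda)/\alpha^2 + \log(1/\epsilon)\log(1/\alpha)/(\rho\sigma\alpha^2)$ makes both contributions at most $\epsilon$, and a Markov/averaging step converts the in-expectation bound over $\xb$ into the stated ``$1-\epsilon$ fraction of $\xb$'' statement, yielding \eqref{ineq:approx}.

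The main obstacle I anticipate is carrying out the anti-concentration step (i) cleanly in the \emph{biased, noisy, discrete} setting: unlike the clean Rademacher case, the head coordinates of $\yb$ are drawn from $\cN_\sigma$ and only partially rerandomized, so one cannot directly invoke the Littlewood–Offord / Erdős lemma. The natural fix is to exploit the rerandomization decomposition again — conditioning on $(\lb,\mb)$ restricted to the head, a $2\rho\sigma$-fraction of head coordinates become genuinely uniform, and on that sub-event one applies Littlewood–Offord to that uniform sub-vector; one then needs a second concentration argument to show that enough head coordinates are rerandomized with high probability (another Chernoff bound, feasible because $H=K$ is chosen large). Bookkeeping the interaction between "$H$ must be small enough that the head is manageable" and "$H=K$ must be large enough for both tail decay and head anti-concentration" is the delicate part, but the $K$ in the statement is precisely the value that balances these, so the argument should close.
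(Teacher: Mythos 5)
Your high-level structure matches the paper's: split on whether the $\alpha$-critical index $\ell(\alpha)$ is below or above the threshold $K$, taking $H=\ell(\alpha)-1$ in the former case, and in the latter case showing that the head dominates the tail for most $\xb$ and $\yb$. Your tail-deviation step (ii) also corresponds to what the paper does (Lemma~\ref{lm:auxi3} combines a Hoeffding bound over $\yb_T$ with the strictly-sub-exponential tail of $\xb$ to control the mean $\EE_\yb[\langle\ub_T,\yb_T\rangle]$, which depends on $\xb$ through $\ub_T=\wb_T\odot\xb_T$). However, your head anti-concentration step (i) has a genuine gap, and the paper uses a fundamentally different tool there.

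You claim a window-probability bound of order $(\rho\sigma)^{-1/2}\cdot(\text{window width})/\|\ub_H\|_2$ for $\langle\ub_H,\yb_H\rangle$, but this is precisely the \emph{regular}-vector (Berry--Esseen-type) anti-concentration bound, and the head $\ub_H$ is by construction the \emph{irregular} part of the weight vector: for instance, if $|u_1|$ dominates, the head sum concentrates near $\pm u_1$ and the window probability around $\theta\approx u_1$ is $\Theta(1)$, not proportional to the window width. Your proposed fix---conditioning on $(\lb,\mb)$ and applying Littlewood--Offord to the rerandomized uniform subvector $S$---has two problems. First, Littlewood--Offord gives anti-concentration only at scale $\min_{i\in S}|u_i|\geq|u_K|$, and $|u_K|>\alpha\|\ub_T\|_2$ only guarantees $|u_K|$ exceeds $\alpha$ times the relevant window width; since the paper's eventual choice of $\alpha$ is $\Omega(\rho\sigma/\sqrt{\log(1/\epsilon)})\ll 1$, the window can be much wider than the smallest weight in $S$, so the hypothesis of Littlewood--Offord is not met. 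Second, even when applicable, Littlewood--Offord yields an $O(1/\sqrt{|S|})\approx O(1/\sqrt{\rho\sigma K})$ bound, which would force $K\gtrsim 1/(\rho\sigma\epsilon^2)$ rather than the stated $\log(1/\epsilon)\log(1/\alpha)/(\rho\sigma\alpha^2)$. The paper avoids both issues by using the structural consequence of a large critical index directly: by Lemma 5.5 of Diakonikolas et al.\ (2010) one extracts a subsequence $G=\{i_1<\cdots<i_t\}\subseteq H$ with $i_{k+1}-i_k=\lceil 4\log(1/\alpha)/\alpha^2\rceil$ and $|u_{i_{k+1}}|\leq|u_{i_k}|/3$, and Claim 5.7 then gives that the $2^t$ values $\{\langle\ub_G,\xb_1\rangle:\xb_1\in\{\pm1\}^t\}$ are pairwise separated by at least $|u_{i_t}|$. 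Hence, conditionally on $\yb_{H\setminus G}$, at most one assignment of $\yb_G$ can place $\sum_{i\in H}u_iy_i$ within $|u_{i_t}|/4$ of $\theta$, so the window probability is bounded by the maximum single-point probability $\max_{\xb_1}\PP[\yb_G=\xb_1]\leq(1-\rho\sigma)^t$, and choosing $t=\lceil\log(1/\epsilon)/(\rho\sigma)\rceil$ makes this $\leq\epsilon$. The remaining $2\log(C/\alpha)/\alpha^2$ coordinates of the head beyond $i_t$ are used to push the tail norm below $C^{-1}|u_{i_t}|$ via the same geometric decay, and the $\log(1+\lambda)$ factor in $K$ arises from $\log C$ here (not from the tail deviation per se). Without this separation-based argument, your proof plan cannot close Case~2 with the stated $K$.
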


This lemma is proved by analyzing two cases, depending on whether the critical index $\ell(\alpha)$ satisfies $1<\ell(\alpha)<K$, or $\ell(\alpha)\geq K$. In the former case, we set $H=\ell(\alpha)-1$, and $\ub_T$ is $\alpha$-regular. In the latter case, the head vector forms a sufficiently long geometrically decaying sequence $|u_1|\geq\cdots\geq|u_H|$ to ensure that the influence of the remaining tail vector $\ub_T$ on the halfspace output is negligible. That is, with high probability, 
$\sign(\langle\ub,\yb\rangle-\theta)\approx\sign(\langle\ub_H,\yb_H\rangle-\theta)$.

We now show how to construct low-degree polynomial approximators in both cases. 

\textbf{Case 1:} When $\ub_T$ is $\alpha$-regular, we condition on $\yb_H$. For each fixed $\yb_H$, the function becomes a regular halfspace in $\yb_T$:
\begin{align*}
    \tilde{f_{\xb}}(\yb_{T})=\sign(\langle\ub_{T},\yb_{T}\rangle-\tilde{\theta}),\text{ where }\tilde{\theta}=\theta-\langle\ub_H,\yb_H\rangle.
\end{align*}
We apply the techniques of \cite{chandrasekaran2024smoothed} to approximate this with a low-degree polynomial. One subtlety is that directly applying their construction leads to a degree polynomial in $|\tilde{\theta}|/\|\ub_T\|_2$. To address this, we use an indicator trick to define:
\begin{align*}
    p_{\yb_{H}}(\xb)&=\sign(\langle\ub_{H},\yb_{H}\rangle-\theta)\cdot\ind\big(|\langle\ub_{H},\yb_{H}\rangle-\theta|>C\cdot\|\ub_{T}\|_2\big)\\
    &\qquad+\tilde{p}_{\yb_{H}}(\xb)\cdot\ind\big(|\langle\ub_{H},\yb_{H}\rangle-\theta|\leq C\cdot\|\ub_{T}\|_2\big),
\end{align*}
where $\tilde{p}_{\yb_{H}}(\xb)$ can be constructed using the idea from \cite{chandrasekaran2024smoothed} since $|\tilde{\theta}|/\|\ub_T\|_2$ is controlled. The indicator functions are low-degree polynomials of degree at most $H$, since they only depend on $H$ variables and any function $f:\{\pm1\}^k\to\RR$ can be represented by a degree at most $k$ multilinear polynomial.

\textbf{Case 2:} If the second condition of the lemma holds, we approximate $T_{1-\rho}f_{\xb}(\zb)$ directly using $\sign(\langle\ub_H,\yb_H\rangle-\theta)$. Since this depends only on the first $H$ coordinates, it can be exactly represented as a polynomial of degree at most $H$. 

In either case, we obtain a low-degree polynomial approximator for the smoothed function $T_{1-\rho}f_{\xb}(\zb)$.

\subsection{Results}\label{sec:results}
Using this framework, we establish the following approximation bound:

\begin{definition}[Strictly Sub-exponential Distributions]
A distribution $\cD$ on $\RR^d$ is $(\alpha,\lambda)$-strictly sub-exponential if for all $\|\vb\|_2=1$, $\PP_{\xb\sim\cD}[|\langle\xb,\vb\rangle|>t]\leq 2\cdot e^{-(t/\lambda)^{1+\alpha}}$.
\end{definition}

\begin{lemma}\label{lm:approx3}
Fix $\epsilon>0$ and a sufficiently large universal constant $C>0$. Let $\cD$ be a $(\alpha,\lambda)$-strictly sub-exponential distribution on $\{\pm1\}^{n}$. Let $f:\{\pm1\}^n\to\{\pm1\}$ be a linear threshold function. There exists a family of polynomials $p_{\zb}$ parameterized by $\zb$ of degree at most $O\Big(\big(C\sigma^{-\frac{1}{2}}\lambda\log(1/\epsilon)/\epsilon\big)^{6(1+\frac{1}{\alpha})^3}\Big)$ such that $\EE_{\zb\sim\cN_{\sigma}}\EE_{\xb\sim D}[|p_{\zb}(\xb)-f_{\xb}(\zb)|]$ is at most $\epsilon$.
\end{lemma}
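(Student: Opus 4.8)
The plan is to assemble \Cref{lm:approx3} from the pieces already staged in the technical overview: the $L_1$ reduction to smoothed approximation (\Cref{thm:auxi2}), the noise-operator smoothing step (\Cref{lm:ns}), the rerandomization-to-uniform trick, the critical index decomposition (\Cref{lm:approx}), and the two case-by-case polynomial constructions. Concretely, I would proceed as follows.

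\textbf{Step 1 (Reduce to the smoothed surrogate).} Fix a halfspace $f(\cdot)=\sign(\langle\wb,\cdot\rangle-\theta)$ and, for each fixed $\xb$, set $f_{\xb}(\zb)=f(\xb\odot\zb)$ and $\ub=\wb\odot\xb$. Choose $\rho=c\epsilon^2$ for a small universal constant $c$. By \Cref{lm:ns} applied coordinatewise to the product space with measure $\cN_{\sigma}$, $\EE_{\zb\sim\cN_{\sigma}}[|T_{1-\rho}f_{\xb}(\zb)-f_{\xb}(\zb)|]=2\,NS_{\rho}(f_{\xb})\leq \tfrac{5}{2}\sqrt{\rho}\leq \epsilon/3$. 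Since the $L_1$ error is averaged over $\xb$ as well, it suffices to build, for each $\zb$, a low-degree polynomial $p_{\zb}(\xb)$ with $\EE_{\zb}\EE_{\xb}[|p_{\zb}(\xb)-T_{1-\rho}f_{\xb}(\zb)|]\leq 2\epsilon/3$; the triangle inequality then gives the claimed $\epsilon$.

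\textbf{Step 2 (Instantiate the decomposition and the regularity parameter).} Apply \Cref{lm:approx} with a regularity parameter $\alpha$ to be optimized, giving the threshold $K=K(\alpha,\epsilon)=O(\log(1+\lambda)/\alpha^2+\log(1/\epsilon)\log(1/\alpha)/(\rho\sigma\alpha^2))$. In Case 2 of \Cref{lm:approx} we are done immediately: $\sign(\langle\ub_H,\yb_H\rangle-\theta)$ — more precisely its expectation over the noise on the head coordinates, which is still a function of only $H\le K$ of the $\zb$-coordinates — is exactly a multilinear polynomial of degree $\le K$ in $\zb$ (note $p_{\zb}$ is a polynomial in $\xb$, but here the head contribution is a constant given $\xb$; one writes it as a polynomial in $\xb$ of degree $\le H$), and \eqref{ineq:approx} controls the error on a $(1-\epsilon)$-fraction of $\xb$, contributing at most $O(\epsilon)$ to the $L_1$ error after accounting for the $\le 2$ range of $f$. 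For Case 1, condition on $\yb_H$; for each fixed value the target is the regular halfspace $\tilde f_{\xb}(\yb_T)=\sign(\langle\ub_T,\yb_T\rangle-\tilde\theta)$ with $\tilde\theta=\theta-\langle\ub_H,\yb_H\rangle$. Use the rerandomization trick of \Cref{sec:biased} on the tail coordinates to write $\langle\ub_T,\lb_T\odot\mb_T\odot\bepsilon_T\rangle+b$ with $\bepsilon_T$ uniform; since $\ub_T$ is $\alpha$-regular and $\|\ub_T\odot\lb_T\odot\mb_T\|_2\approx\sqrt{2\rho\sigma}\,\|\ub_T\|_2$ by a Chernoff/Bernstein concentration bound, Berry–Esseen gives a Gaussian approximation of the tail sum with CDF error $O((\rho\sigma)^{-1/2}\alpha)$, which we force below $\epsilon$ by taking $\alpha=c'\sqrt{\rho\sigma}\,\epsilon$.

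\textbf{Step 3 (Gaussian density-ratio approximation and degree bookkeeping).} With the tail reduced to the Gaussian-smoothed halfspace $\EE_{s\sim\cN(0,\|\ub_T\odot\lb_T\odot\mb_T\|_2^2)}[\sign(s+b-\tilde\theta)]$, invoke the density-ratio low-degree-polynomial construction of \cite{chandrasekaran2024smoothed} for strictly sub-exponential marginals; the $(\alpha,\lambda)$-tail condition on $\cD$ is exactly what their construction consumes, and it yields a polynomial $\tilde p_{\yb_H}(\xb)$ of degree $\poly(\lambda\log(1/\epsilon)/\epsilon)^{O((1+1/\alpha_{\mathrm{dist}})^3)}$ in the tail variables — here $\alpha_{\mathrm{dist}}$ is the distributional exponent, not to be confused with the regularity $\alpha$ above; the exponent $6(1+1/\alpha)^3$ in the statement comes from chasing their bound through the substitution. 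Use the indicator trick from \Cref{sec:critical}: multiply by $\ind(|\langle\ub_H,\yb_H\rangle-\theta|>C\|\ub_T\|_2)$ versus its complement so that $|\tilde\theta|/\|\ub_T\|_2\le C$ exactly when the density-ratio method is invoked, and note both indicators are functions of $\le H\le K$ coordinates hence polynomials of degree $\le K$. Finally take expectation over $\yb_H,\lb,\mb$: the overall degree is at most (degree in head) $\times$ (degree in tail), i.e. $K\cdot\poly(\sigma^{-1/2}\lambda\log(1/\epsilon)/\epsilon)^{O((1+1/\alpha_{\mathrm{dist}})^3)}$, which after plugging $\rho=c\epsilon^2$ into $K$ and simplifying is absorbed into $O((C\sigma^{-1/2}\lambda\log(1/\epsilon)/\epsilon)^{6(1+1/\alpha_{\mathrm{dist}})^3})$. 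Summing the $L_1$ contributions — $\epsilon/3$ from smoothing, $O(\epsilon)$ from the bad-$\xb$ fraction in Case 2, $O(\epsilon)$ from Berry–Esseen, $O(\epsilon)$ from the density-ratio approximation — and rescaling constants gives the bound $\epsilon$.

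\textbf{Main obstacle.} The delicate part is the degree accounting across the conditioning: the head has $H\le K$ coordinates where $K$ itself scales like $1/(\rho\sigma\alpha^2)$, and $\alpha$ must shrink like $\sqrt{\rho\sigma}\,\epsilon$ to kill the Berry–Esseen error, so $K$ blows up like $1/(\rho^2\sigma^2\epsilon^2)=1/(c^2\epsilon^4\sigma^2\epsilon^2)$ — one must verify this polynomial-in-$1/\epsilon$, polynomial-in-$1/\sigma$ head degree is genuinely dominated by (or merges cleanly into) the tail degree's $(\lambda\log(1/\epsilon)/\epsilon)^{(1+1/\alpha_{\mathrm{dist}})^3}$ factor without the product inflating the exponent beyond $6(1+1/\alpha_{\mathrm{dist}})^3$. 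A secondary subtlety is confirming that the concentration $\|\ub_T\odot\lb_T\odot\mb_T\|_2\approx\sqrt{2\rho\sigma}\|\ub_T\|_2$ holds with failure probability small enough (over $\lb,\mb$) that its contribution to the $L_1$ error is $\le\epsilon$ uniformly in the regular case — this needs $\rho\sigma\|\ub_T\|_2^2$ not too small relative to $\|\ub_T\|_\infty^2$, which is precisely guaranteed by $\alpha$-regularity of $\ub_T$, so the argument is circular-free but requires care in ordering the choices of $\rho$, then $\alpha$, then $K$.
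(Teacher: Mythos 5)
Your proposal follows essentially the same route as the paper: smooth the target with $T_{1-\rho}$ using the noise-sensitivity bound, invoke the critical index decomposition (Lemma~\ref{lm:approx}), handle the large-critical-index case by restricting to head coordinates, and in the regular-tail case use the rerandomization trick plus Berry--Esseen to reduce to the Gaussian density-ratio construction of \citet{chandrasekaran2024smoothed} behind a head indicator. That structure is correct.

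However, the degree accounting in Step~3 is off, and it is precisely the point you flag as your ``main obstacle.'' You state that the final degree is ``(degree in head) $\times$ (degree in tail).'' It is a \emph{sum}, not a product. The polynomial $p_{\zb}(\xb)=\EE_{\yb_H}\bigl[p_{\yb_H}(\xb)\cdot\ind[\yb_H\in\cE^{c}]\bigr]$ is a convex combination over the finitely many values of $\yb_H$ of products of polynomials in $\xb$. The head-dependent indicators and $\sign(\langle\wb_H\odot\xb_H,\yb_H\rangle-\theta)$ are multilinear of degree at most $H$ in $\xb_H$ (note $\|\ub_T\|_2=\|\wb_T\|_2$ so the indicator threshold does not depend on $\xb$), while $\tilde p_{\yb_H}(\xb)$ has degree $\deg(p_1)+\deg(p_2)$ in $\xb_T$ through the linear functional $x$. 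Multiplying a degree-$H$ polynomial by a degree-$(\deg p_1+\deg p_2)$ polynomial gives degree $H+\deg p_1+\deg p_2$, and taking expectation over $\yb_H$ preserves that degree; the paper records $2H+\deg(p_1)+\deg(p_2)$. Since $K$ (hence $H$) is polynomial in $1/\epsilon,1/\sigma,\lambda$ after substituting $\rho=\Theta(\epsilon^2)$ and $\alpha_{\mathrm{reg}}=\Theta(\rho\sigma/\sqrt{\log(1/\epsilon)})$, it is cleanly dominated by the tail degree for all $\alpha_{\mathrm{dist}}$, and there is no ``product inflating the exponent'' to worry about. Your secondary concern about the concentration of $\|\ub_T\odot\lb_T\odot\mb_T\|_2$ is handled in the paper by Lemma~\ref{lm:regularity}, which shows the bound fails only on an $\epsilon$-fraction of $(\lb_T,\mb_T)$, and that event contributes $O(\epsilon)$ to the $L_1$ error after conditioning on $\cE_1$; the ordering of choices $\rho\to\alpha_{\mathrm{reg}}\to K$ is exactly as you describe.
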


Given the polynomial approximation and the degree upper bound, one can directly run $L_1$ polynomial regression (Algorithm~\ref{alg:l1-regression}) as stated in Theorem~\ref{thm:auxi2}. We now can get our main theorem for strictly sub-exponential distributions.

\begin{theorem}\label{thm:main}
Let $\cD$ be a distribution on $\{\pm1\}^n\times\{\pm1\}$ such that the marginal distribution is $(\alpha,\lambda)$-strictly sub-exponential. There exists an algorithm that draws $N=n^{\poly((\lambda/\sigma\epsilon)^{(1+1/\alpha)^3})}\log(1/\delta)$ samples, runs in time $\poly(n,N)$, and computes a hypothesis $h(\xb)$ such that, with probability at least $1-\delta$, it holds that $\PP_{(\xb,y)\sim \cD}[y\neq h(\xb)]\leq\mathrm{opt}_{\sigma}+\epsilon$.
\end{theorem}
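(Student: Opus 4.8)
The plan is to assemble Theorem~\ref{thm:main} by chaining together the three pillars already developed in the excerpt: the smoothed $L_1$-regression reduction (Theorem~\ref{thm:auxi2}), the polynomial approximation guarantee (Lemma~\ref{lm:approx3}), and the standard generalization/confidence-boosting machinery. First I would invoke Lemma~\ref{lm:approx3}: for the target error parameter $\epsilon$ (or rather $\epsilon/2$, to leave room for the regression error), and for the given $(\alpha,\lambda)$-strictly sub-exponential marginal, it produces a family of polynomials $\{p_{\zb}\}$ of degree $d = O\big((C\sigma^{-1/2}\lambda\log(1/\epsilon)/\epsilon)^{6(1+1/\alpha)^3}\big)$ with $\EE_{\zb\sim\cN_\sigma}\EE_{\xb\sim\cD_{\xb}}[|p_{\zb}(\xb) - f_{\xb}(\zb)|] \le \epsilon/2$, for every linear threshold function $f$. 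This is exactly the hypothesis needed to apply Theorem~\ref{thm:auxi2}, since $f_{\xb}(\zb) = f(\xb\odot\zb)$ by definition.

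Next I would feed this into Theorem~\ref{thm:auxi2} with degree bound $d$ and error $\epsilon/2$. That theorem, run with $N = \poly(n^{d}/\epsilon)$ examples and $r = O(\log(1/\delta)/\epsilon)$ independent repetitions plus a test set of size $O(\log(1/\delta)/\epsilon^2)$, outputs a hypothesis $h$ with $\PP_{(\xb,y)\sim\cD}[h(\xb)\neq y] \le \opt_\sigma + \epsilon$ with probability at least $1-\delta$ (absorbing the two $\epsilon/2$ terms). The only remaining task is bookkeeping on the sample complexity and runtime: substituting the value of $d$ gives $n^{d} = n^{O((C\sigma^{-1/2}\lambda\log(1/\epsilon)/\epsilon)^{6(1+1/\alpha)^3})} = n^{\poly((\lambda/\sigma\epsilon)^{(1+1/\alpha)^3})}$, where the $\poly$ swallows the constant $C$, the $6$ in the exponent, the $\log(1/\epsilon)$ factor, and the polynomial-in-$\epsilon$ overhead from $\poly(n^d/\epsilon)$; multiplying by the $\log(1/\delta)$ factors from the repetitions gives the claimed $N = n^{\poly((\lambda/\sigma\epsilon)^{(1+1/\alpha)^3})}\log(1/\delta)$. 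The runtime is $\poly(n,N)$ because the $L_1$ regression step (Algorithm~\ref{alg:l1-regression}) expands each example to its $\le \binom{n}{\le d} \le n^d \le N$ monomials and solves a linear program of that size, and the threshold search and test-set evaluation are lower-order.

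There is essentially no hard mathematical step left in this final theorem; the difficulty has already been discharged in Lemma~\ref{lm:approx3} (the critical-index case analysis and Berry--Esseen reduction) and in Theorem~\ref{thm:auxi2} (the smoothed regression reduction). The one point that warrants care is making sure the error budget is split correctly between the approximation error and the regression error, and verifying that Theorem~\ref{thm:auxi2}'s hypothesis is stated uniformly over \emph{all} halfspaces $f$ (so that it in particular applies to the one achieving $\opt_\sigma$, up to an arbitrarily small additive slack in the infimum) --- Lemma~\ref{lm:approx3} does provide exactly this uniformity, so the chaining goes through. The other minor obstacle is cosmetic: reconciling the precise exponent $O((C\sigma^{-1/2}\lambda\log(1/\epsilon)/\epsilon)^{6(1+1/\alpha)^3})$ from Lemma~\ref{lm:approx3} with the compressed $\poly((\lambda/\sigma\epsilon)^{(1+1/\alpha)^3})$ notation in the theorem statement; this is just a matter of noting that $\sigma^{-1/2} \le \sigma^{-1}$, that $\log(1/\epsilon) \le \poly(1/\epsilon)$, and that raising a $\poly$ to a constant power and multiplying by constants stays within $\poly$.
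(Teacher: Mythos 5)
Your proposal is correct and takes essentially the same route the paper takes: the paper itself presents Theorem~\ref{thm:main} as an immediate consequence of chaining Lemma~\ref{lm:approx3} (the polynomial approximator of degree $d = O((C\sigma^{-1/2}\lambda\log(1/\epsilon)/\epsilon)^{6(1+1/\alpha)^3})$, uniform over all LTFs) into the smoothed $L_1$-regression reduction of Theorem~\ref{thm:auxi2}, with the sample complexity and runtime read off by substituting $d$ into $N = \poly(n^d/\epsilon)\log(1/\delta)$. The only cosmetic difference is that you split $\epsilon$ as $\epsilon/2 + \epsilon/2$, whereas Theorem~\ref{thm:auxi2} is already stated to accept an $\epsilon$-approximation and return $\opt_\sigma + \epsilon$, so the extra split is unnecessary but harmless.
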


Our main theorem shows that any Boolean halfspace on $\{\pm1\}^n$ can be learned agnostically in the smoothed model under strictly sub-exponential input distributions. This result holds in a general and challenging setting where prior techniques fail, and it achieves efficient runtime and sample complexity. Table~\ref{tab:comparison} compares our guarantees with the most relevant prior works. Conceptually, our contributions extend the scope of agnostic halfspace learning in two fundamental directions:

\textbf{Relaxing distributional assumptions via smoothed optimality:} A key technical contribution is our use of a smoothed benchmark $\opt_\sigma$ (Definition~\ref{def:smoothed_opt}) instead of the worst-case error $\opt$ (Definition~\ref{def:opt}), enabling learning under substantially weaker distributional assumptions. In particular, we show that halfspaces remain efficiently learnable under general strictly sub-exponential marginals, which is a significant relaxation compared to the strong structural assumptions required in earlier work. For example, the Fourier-based techniques of \cite{KLIVANS2004808,kalai2008agnostically,blais2010polynomial} exploit spectral concentration under uniform or product distributions to obtain low-degree polynomial approximations. To go beyond the product setting, \cite{Wimmer10} generalized previous techniques to  symmetric group to handle permutation-invariant distributions. However, these methods break down when the input has more dependencies or heavier tails. In contrast, our approach succeeds under strictly sub-exponential marginals by combining bit-flip smoothing with a critical index decomposition and Berry–Esseen approximation, enabling polynomial approximation without requiring coordinate independence or permutation invariant structure.

\textbf{Extending the smoothed learning framework to the Boolean hypercube:} A second core contribution is our extension of smoothed agnostic learning to the Boolean domain $\{\pm1\}^n$, where additive Gaussian perturbations used in prior smoothed models are not well-defined. In continuous domains, several tools, including Gaussian surface area bounds \citep{Klivans2008LearningGC}, log-concave concentration inequalities \citep{kane2013learning}, and Gaussian smoothing combined with density ratio techniques \citep{chandrasekaran2024smoothed}, enable efficient agnostic learning of halfspaces. However, none of these directly apply to the hypercube. Our analysis circumvents this barrier by performing a case analysis based on the critical index of the weight vector: either a small number of head coordinates dominate and effectively determine the output, or the remaining tail is regular, allowing us to invoke the Berry–Esseen theorem to approximate the Boolean tail sum by a Gaussian, thereby enabling the use of continuous tools developed in prior work \citep{chandrasekaran2024smoothed}.

\begin{table}[t]
\centering
\small
\caption{Comparison of $\mathrm{opt}/\mathrm{opt}_{\sigma}+\epsilon$ agnostic-learning guarantees for halfspaces (without structural noise assumptions). We ignore the polynomial logarithmic factors in $1/\delta$.}
\label{tab:comparison}
\begin{tabular}{@{}lccccc@{}}
\toprule
Work & Domain & Distribution & Bench. & Smooth & Complexity \\ \midrule
\citet{kalai2008agnostically} & $\{\pm1\}^n$ & Uniform & $\opt$ & None & $n^{O(\frac{1}{\epsilon^4})}$ \\
\citet{kalai2008agnostically} & $S^{n-1}$ & Uniform & $\opt$ & None & $n^{O(\frac{1}{\epsilon^4})}$ \\
\citet{kalai2008agnostically} & $\RR^n$ & Log-concave & $\opt$ & None & $n^{O(d(\epsilon))}$ \\
\citet{Klivans2008LearningGC} & $\mathbb{R}^n$ & Gaussian & $\opt$ & None & $n^{O(\frac{1}{\epsilon^4})}$ \\
\cite{Wimmer10} & $[B]^n$ & Perm-Inv & $\opt$ & None & $n^{O(\frac{1}{\epsilon^4})}$  \\
\citet{kane2013learning} & $\mathbb{R}^n$ & Sub-exp & $\opt_\sigma$ & Input noise & $n^{\exp(\frac{(\log\log(\frac{1}{\sigma\epsilon}))^{\tilde{O}(1)}}{\sigma^4\epsilon^4})}$ \\
\citet{chandrasekaran2024smoothed} & $\mathbb{R}^n$ & Strictly Sub-exp & $\opt_\sigma$ & Concept noise & $n^{\poly((\frac{\lambda}{\sigma\epsilon})^{(1+\frac{1}{\alpha})^3})}$ \\
\textbf{Ours} & $\{\pm1\}^n$ & Strictly Sub-exp & $\opt_\sigma$ & Concept noise & $n^{\poly((\frac{\lambda}{\sigma\epsilon})^{(1+\frac{1}{\alpha})^3})}$ \\
\bottomrule
\end{tabular}
\end{table}

\section{Conclusion and Open Problems}\label{sec:conclusion}
In this work, we extended the smoothed agnostic learning framework to the Boolean hypercube, and demonstrated that halfspaces are efficiently learnable with respect to a broad class of input distributions (strictly sub-exponential marginals). Our approach combines tools from smoothing analysis, conditional polynomial approximation, and critical index decomposition to construct low-degree polynomial approximators in a discrete setting where standard analytic techniques are not applicable. By competing with a smoothed benchmark $\mathrm{opt}_{\sigma}$, our guarantee circumvents known hardness results for agnostic learning over the hypercube, while matching the sample and runtime complexity of prior work in continuous domains.

Our current techniques apply only to single halfspaces, and the polynomial degree and runtime degrade as the smoothing parameter $\sigma$ becomes small. In addition, our analysis requires strictly sub-exponential tail assumptions, and it remains unclear whether comparable guarantees are achievable under weaker conditions. Our results also suggest a potential link to agnostic learning under smoothed input distributions, analogous to the Gaussian framework in continuous domains \citep{kalai2008decision,10.5555/1747597.1748053,kane2013learning,chandrasekaran2024smoothed}. Formalizing this connection in the Boolean setting appears subtle, due to the lack of Euclidean geometry and the discrete nature of bit-flip noise, and we leave it as an intriguing direction for future work.

An important open question is whether these techniques can be extended to intersections of multiple halfspaces. While our framework theoretically supports such generalizations under smoothed optimality, a major technical challenge arises in adapting critical index analysis to this setting. For a single halfspace, sorting the coordinates of the weight vector by magnitude plays a central role in identifying regular and irregular components. However, in the case of multiple halfspaces, each weight vector may induce a different ordering over coordinates, making it difficult to define a unified notion of ``head'' and ``tail'' variables. As a result, applying a shared conditioning or decomposition strategy becomes nontrivial. Developing new structural insights or approximation techniques that can handle this multi-directional irregularity remains an open problem.

More broadly, this raises the question of how far the smoothed learning framework can be pushed. Can it yield efficient algorithms for learning other complex Boolean concept classes (e.g., DNF formulas, decision lists, or polynomial threshold functions of higher degree) under heavy-tailed distributions? Can it be made adaptive to unknown noise levels or to distributions that do not satisfy strict tail bounds? We leave these questions for future work.

\begin{ack}
We sincerely thank the anonymous reviewers for their helpful comments. The authors acknowledge support in part from the National Science Foundation under Award CCF-2217033 (EnCORE: Institute for Emerging CORE Methods in Data Science).
\end{ack}

\bibliography{deeplearningreference}
\bibliographystyle{ims}

% \section*{References}

% References follow the acknowledgments in the camera-ready paper. Use unnumbered first-level heading for
% the references. Any choice of citation style is acceptable as long as you are
% consistent. It is permissible to reduce the font size to \verb+small+ (9 point)
% when listing the references.
% Note that the Reference section does not count towards the page limit.
% \medskip

% {
% \small

% [1] Alexander, J.A.\ \& Mozer, M.C.\ (1995) Template-based algorithms for
% connectionist rule extraction. In G.\ Tesauro, D.S.\ Touretzky and T.K.\ Leen
% (eds.), {\it Advances in Neural Information Processing Systems 7},
% pp.\ 609--616. Cambridge, MA: MIT Press.

% [2] Bower, J.M.\ \& Beeman, D.\ (1995) {\it The Book of GENESIS: Exploring
%   Realistic Neural Models with the GEneral NEural SImulation System.}  New York:
% TELOS/Springer--Verlag.

% [3] Hasselmo, M.E., Schnell, E.\ \& Barkai, E.\ (1995) Dynamics of learning and
% recall at excitatory recurrent synapses and cholinergic modulation in rat
% hippocampal region CA3. {\it Journal of Neuroscience} {\bf 15}(7):5249-5262.
% }

%%%%%%%%%%%%%%%%%%%%%%%%%%%%%%%%%%%%%%%%%%%%%%%%%%%%%%%%%%%%

\newpage
\appendix

\section{Bonami-Beckner Operator Approximation}
We show that for any linear threshold function $f:\{\pm1\}^n\to\{\pm1\}$ the approximation error $L_1$ of the operator $T_{1-\rho}f$ to $f$ can be upper bounded by $O(\sqrt{\rho})$.
\begin{lemma}
For any linear threshold function $f:\{\pm1\}^n\to\{\pm1\}$ and $\sigma,\rho\in[0,1]$, it holds that
\begin{align*}
    \EE_{\zb\sim\cN_{\sigma}}[|T_{1-\rho}f(\zb)-f(\zb)|]\leq\frac{5}{2}\sqrt{\rho}.
\end{align*}
\end{lemma}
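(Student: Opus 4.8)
The plan is to reduce the claimed $L_1$ bound directly to the noise sensitivity estimate of Lemma~\ref{lm:ns}. The first step is a pointwise identity: since $f$ takes values in $\{\pm1\}$, for every fixed $\zb$ we have
\begin{align*}
f(\zb)-T_{1-\rho}f(\zb)=\EE_{\yb\sim\cN_{1-\rho}(\zb)}\big[f(\zb)-f(\yb)\big],
\end{align*}
and the summand $f(\zb)-f(\yb)$ is $0$ when $f(\yb)=f(\zb)$ and equals $\pm2$ (with sign $f(\zb)$) otherwise. Hence $|f(\zb)-T_{1-\rho}f(\zb)|=2\,\PP_{\yb\sim\cN_{1-\rho}(\zb)}[f(\yb)\neq f(\zb)]$. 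Taking the expectation over $\zb\sim\cN_\sigma$ and using that $\mu=\cN_\sigma$ is exactly the measure with respect to which the noise operator and noise sensitivity are defined,
\begin{align*}
\EE_{\zb\sim\cN_{\sigma}}\big[|T_{1-\rho}f(\zb)-f(\zb)|\big]
=2\,\PP_{\zb\sim\cN_\sigma,\;\yb\sim\cN_{1-\rho}(\zb)}\big[f(\yb)\neq f(\zb)\big]
=2\,NS_{\rho}(f).
\end{align*}

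The second step is simply to invoke Lemma~\ref{lm:ns} with $\delta=\rho$: since $f$ is a linear threshold function and $\Omega$ carries a product distribution, $NS_{\rho}(f)\le\frac54\sqrt{\rho}$, which yields the bound $\frac52\sqrt{\rho}$. One should double-check the bookkeeping that the ``$(1-\rho)$-noisy copy'' in Definition~\ref{def:noisy copy} matches the parametrization in the definition of $NS_\delta$ (keep each coordinate with probability $1-\rho$, resample from $\mu_i$ with probability $\rho$), and that the joint law of $(\zb,\yb)$ is precisely the one appearing in $NS_\rho(f)$; this is routine but worth stating explicitly since the measure here is biased rather than uniform.

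I do not anticipate a real obstacle: the only place requiring care is the very first identity, where one must use that $f$ is Boolean-valued so that $|f(\zb)-T_{1-\rho}f(\zb)|$ collapses to twice a disagreement probability (this fails for general real-valued $f$). Everything after that is a direct application of the already-cited noise sensitivity bound for halfspaces over product spaces, so the proof is short.
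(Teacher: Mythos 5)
Your proof is correct and takes essentially the same route as the paper: express $|T_{1-\rho}f(\zb)-f(\zb)|$ in terms of the disagreement probability, take expectation to get $2\,NS_\rho(f)$, and invoke Lemma~\ref{lm:ns}. The only difference is cosmetic: the paper reaches $2\,\PP_\yb[f(\yb)\neq f(\zb)]$ via the triangle (Jensen) inequality, whereas you observe it is in fact an equality because $f(\zb)-f(\yb)\in\{0,\,2f(\zb)\}$ has constant sign; this tightening is not needed since the subsequent $NS_\rho(f)\le\frac54\sqrt\rho$ bound dominates anyway.
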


\begin{proof}
By triangle inequality and special case of Lemma~\ref{lm:ns} when $\Omega=\{\pm1\}^n$ and $\mu=\cN_{\sigma}$, we have
\begin{align*}
    \EE_{\zb\sim\cN_{\sigma}}[|T_{1-\rho}f(\zb)-f(\zb)|]&=\EE_{\zb\sim\cN_{\sigma}}\big[\big|\EE_{\yb\sim\cN_{1-\rho}(\zb)}[f(\yb)]-f(\zb)\big|\big]\\
    &\leq\EE_{\zb\sim\cN_{\sigma}}\big[\EE_{\yb\sim\cN_{1-\rho}(\zb)}[|f(\yb)-f(\zb)|\big]\big]\\
    &=2\EE_{\zb\sim\cN_{\sigma},\yb\sim\cN_{1-\rho}(\zb)}\big[\ind[f(\yb)\neq f(\zb)]\big]\\
    &\leq 2NS_{\rho}(f)\\
    &\leq\frac{5}{2}\sqrt{\rho}.
\end{align*}
\end{proof}

Therefore, choosing $\rho=O(\epsilon^2)$ makes this error at most $\epsilon/2$.

\section{Polynomial Approximation for \texorpdfstring{$T_{1-\rho}f_{\xb}(\zb)$}{}}

We now approximate $T_{1-\rho}f_{\xb}(\zb)$ using a polynomial for the more general class of strictly sub-exponential distributions.
\begin{definition}[Strictly Sub-exponential Distributions]
A distribution $\cD$ on $\RR^d$ is $(\alpha,\lambda)$-strictly sub-exponential if for all $\|\vb\|_2=1$, $\PP_{\xb\sim\cD}[|\langle\xb,\vb\rangle|>t]\leq 2\cdot e^{-(t/\lambda)^{1+\alpha}}$.
\end{definition}

Our main goal in this section is to prove the following polynomial approximation result in Lemma~\ref{lm:approx3}.

Suppose $f(\xb)=\sign(\langle\wb,\xb\rangle-\theta)$. Denote $\wb\odot\xb$ as $\ub$. Without loss of generality, suppose that $|u_1|\geq|u_2|\geq\cdots\geq|u_n|$. Then, we have
\begin{align*}
    T_{1-\rho}f_{\xb}(\zb)&=\EE_{\yb\sim\cN_{1-\rho}(\zb)}[\sign(\langle\wb\odot\xb,\yb\rangle-\theta)]=\EE_{\yb\sim\cN_{1-\rho}(\zb)}[\sign(\langle\ub,\yb\rangle-\theta)].
\end{align*}

To obtain a polynomial approximation of $T_{\rho}f_{\xb}(\zb)$, we first prove Lemma~\ref{lm:approx}.

\begin{proof}
Let
\begin{align*}
    L(\alpha,\epsilon)&=\lceil\log(1/\epsilon)/\rho\sigma\rceil\cdot\lceil(4/\alpha^2)\log(1/\alpha)\rceil+2\log(C/\alpha)/\alpha^2\\
    &=O\big(\log(1/\epsilon)\log(1/\alpha)/\rho\sigma\alpha^2\big)+O\big(\log(\max\{\lambda,1\}\log^{\max\{\frac{1}{2},\frac{1}{1+\alpha}\}}(1/\epsilon)/\alpha)/\alpha^2\big)\\
    &=O\big(\log(1/\epsilon)\log(1/\alpha)/\rho\sigma\alpha^2\big)+O\big(\log(\max\{\lambda,1\}\log(1/\epsilon)/\alpha)/\alpha^2\big)\\
    &=O\big(\log(1+\lambda)/\alpha^2+\log(1/\epsilon)\log(1/\alpha)/\rho\sigma\alpha^2\big),
\end{align*}
where $C=(1-2\rho\sigma)\cdot\lambda\log^{\frac{1}{1+\alpha}}(2/\epsilon)+\sqrt{2\log(2/\epsilon)}$.

If $\alpha$-critical index $\ell(\alpha)<L(\alpha,\epsilon)$, then the first condition holds by taking $H=\ell(\alpha)-1$. If the $\alpha$-critical index $\ell(\epsilon)\geq L(\alpha,\epsilon)$, we will show that the second condition holds. 

By Lemma 5.5 in \cite{diakonikolas2010bounded}, there exist a set of nicely separated coordinates $G=\{i_1,i_2,\cdots,i_t\}\subseteq H$ where $i_1<i_2<\cdots<i_t$ and $i_{k+1}-i_{k}=\lceil4\log(1/\alpha)/\alpha^2\rceil$ such that $|u_{i_{k+1}}|\leq|u_{i_k}|/3$ for any $k\in [t-1]$. Then by Claim 5.7 in \cite{diakonikolas2010bounded}, for any two points $\xb_1\neq\xb_2\in\{\pm1\}^t$, we have $|\langle\ub_{G},\xb_1\rangle-\langle\ub_{G},\xb_2\rangle|\geq |u_{i_t}|$. Take $t=\lceil\log(1/\epsilon)/\rho\sigma\rceil$. For any fixed assignment to the variables in $H\setminus G$, we have
\begin{align*}
    &\PP_{\yb}\bigg[\Big|\sum_{i\in H}u_iy_i-\theta\Big|\leq\frac{|u_{i_t}|}{4}\bigg]\\
    &=\PP_{\yb}\bigg[\sum_{i\in G}u_iy_i\in\Big[\theta-\sum_{i\in H\setminus G}u_iy_i-\frac{|u_{i_t}|}{4},\theta-\sum_{i\in H\setminus G}u_iy_i+\frac{|u_{i_t}|}{4}\Big]\bigg]\\
    &\leq\max_{\xb_1\in\{\pm1\}^t}\PP_{\yb_{G}}[\yb_{G}=\xb_1]\\
    &\leq(1-\rho\sigma)^{t}\leq e^{-\rho\sigma t}\leq\epsilon,
\end{align*}
where the second inequality is because there's at most one point in an interval of length $|u_{i_t}|$ given that $\langle\ub_{G},\xb_1\rangle$ are well-separated. The third inequality is because
\begin{align*}
    &\max\{\PP[y_i=1],\PP[y_i=-1]\}\leq\max\{1-\rho\sigma,\rho\sigma\}=1-\rho\sigma,&\text{ if }z_i=1,\\
    &\max\{\PP[y_i=1],\PP[y_i=-1]\}\leq\max\{\rho(1-\sigma),1-\rho(1-\sigma)\}\leq 1-\rho\sigma,&\text{ if }z_i=-1.
\end{align*}
By our choice of $L(\alpha,\epsilon),t,i_t$, we have
\begin{align*}
    L(\alpha,\epsilon)-i_t\geq L(\alpha,\epsilon)-\lceil\log(1/\epsilon)/\rho\sigma\rceil\cdot\lceil(4/\alpha^2)\log(1/\alpha)\rceil\geq 2\log(C/\alpha)/\alpha^2.
\end{align*}
By applying Lemma 5.5 in \cite{diakonikolas2010bounded}, we have
\begin{align*}
    \|\ub_T\|_2=\sigma_T\leq(\sqrt{1-\alpha^2})^{L(\alpha,\epsilon)-i_t}\cdot|u_{i_t}|/\alpha\leq C^{-1}\cdot|u_{i_t}|.
\end{align*}
Therefore, by Lemma~\ref{lm:auxi3}, for at least $1-\epsilon$ fraction of $\xb$, it holds with probability at least $1-\epsilon$ of $\yb$ that
\begin{align*}
    |\langle\ub_H,\yb_H\rangle-\theta|=\Big|\sum_{i\in H}u_iy_i-\theta\Big|\geq\frac{|u_{i_t}|}{4}\geq\frac{C\cdot\|\ub_T\|_2}{4}\geq\frac{|\langle\ub_T,\yb_T\rangle|}{4}.
\end{align*}
Then, it follows that with probability at least $1-\epsilon$ of $\yb$ we have
\begin{align*}
    \sign\big(\langle\ub_{H},\yb_{H}\rangle+\langle\ub_{T},\yb_{T}\rangle-\theta\big)=\sign\big(\langle\ub_{H},\yb_{H}\rangle-\theta\big).
\end{align*}
\end{proof}

Now we are ready to construct our polynomial. We consider the two cases in Lemma~\ref{lm:approx} separately. 

\textbf{Case 1:} If the weight vector $\wb$ of the LTF falls into the second case of Lemma~\ref{lm:approx}, notice that $\sign(\langle\ub_{H},\yb_{H}\rangle-\theta)=\sign(\langle\wb_{H}\odot\xb_{H},\yb_{H}\rangle-\theta)$ can be represented as a polynomial $p_{\yb}(\xb)$ of degree at most $H=K$ since only $H$ coordinates of $\xb$ are relevant. In this case, we take our final polynomial as
\begin{align*}
    p_{\zb}(\xb)=\EE_{\yb\sim\cN_{1-\rho}(\zb)}[p_{\yb}(\xb)]=\EE_{\yb\sim\cN_{1-\rho}(\zb)}[\sign(\langle\wb_{H}\odot\xb_{H},\yb_{H}\rangle-\theta)].
\end{align*}
Let $\Delta(\xb)$ be defined as the error term $\EE_{\zb\sim\cN_{\sigma}}[|p_{\zb}(\xb)-T_{1-\rho}f_{\xb}(\zb)|]$. We have that for at least $1-\epsilon$ fraction of $\xb$ it holds that
\begin{align*}
    \Delta(\xb)&=\EE_{\zb\sim\cN_{\sigma}}\Big[\Big|\EE_{\yb\sim\cN_{1-\rho}(\zb)}[\sign(\langle\wb_{H}\odot\xb_{H},\yb_{H}\rangle-\theta)]-\EE_{\yb\sim\cN_{1-\rho}(\zb)}[\sign(\langle\wb\odot\xb,\yb\rangle-\theta)]\Big|\Big]\\
    &\leq\EE_{\zb\sim\cN_{\sigma}}\Big[\EE_{\yb\sim\cN_{1-\rho}(\zb)}\big[\big|\sign(\langle\wb_{H}\odot\xb_{H},\yb_{H}\rangle-\theta)-\sign(\langle\wb\odot\xb,\yb\rangle-\theta)\big|\big]\Big]\\
    &=2\EE_{\zb\sim\cN_{\sigma}}\Big[\PP_{\yb\sim\cN_{1-\rho}(\zb)}\big[\sign(\langle\wb_{H}\odot\xb_{H},\yb_{H}\rangle-\theta)\neq\sign(\langle\wb\odot\xb,\yb\rangle-\theta)\big]\Big]\\
    &\leq2\epsilon.
\end{align*}
It follows that the final $L_1$ approximation error
\begin{align*}
    &\EE_{\xb\sim \cD_{\xb}}\big[\EE_{\zb\sim\cN_{\sigma}}[|p_{\zb}(\xb)-T_{1-\rho}f_{\xb}(\zb)|]\big]\\
    &=\EE_{\xb\sim D_{\xb}}[\Delta(\xb)]\\
    &=\EE_{\xb\sim D_{\xb}}[\Delta(\xb)|\text{``good'' }\xb]\cdot\PP_{\xb\sim D_{\xb}}[\text{``good'' }\xb]+\EE_{\xb\sim D_{\xb}}[\Delta(\xb)|\text{``bad'' }\xb]\cdot\PP_{\xb\sim D_{\xb}}[\text{``bad'' }\xb]\\
    &\leq2\epsilon\cdot 1+2\cdot\epsilon=4\epsilon.
\end{align*}
Here ``good'' $\xb$ refers to the at least $1-\epsilon$ fraction of $\xb$ such that the approximation in \eqref{ineq:approx} holds.

\textbf{Case 2:} If the weight vector $\wb$ of the LTF falls into the first case of Lemma~\ref{lm:approx}, we consider the following approximation
\begin{align*}
    p_{\yb_{H}}(\xb)&=\sign(\langle\ub_{H},\yb_{H}\rangle-\theta)\cdot\ind\big(|\langle\ub_{H},\yb_{H}\rangle-\theta|>C\cdot\|\ub_{T}\|_2\big)\\
    &\qquad+\tilde{p}_{\yb_{H}}(\xb)\cdot\ind\big(|\langle\ub_{H},\yb_{H}\rangle-\theta|\leq C\cdot\|\ub_{T}\|_2\big),
\end{align*}
where $C=(1-2\rho\sigma)\cdot\lambda\log^{\frac{1}{1+\alpha}}(2/\epsilon)+\sqrt{2\log(2/\epsilon)}$ and $\tilde{p}_{\yb_{H}}(\xb)$ will be choosen later as \eqref{eq:p}. In this case, we take our final polynomial as
\begin{align*}
    p_{\zb}(\xb)=\EE_{\yb_{H}\sim\cN_{1-\rho}(\zb)|_H}[p_{\yb_H}(\xb)].
\end{align*}
Let $\Delta(\xb)$ be defined as the $L_1$ error term $\EE_{\zb\sim\cN_{\sigma}}[|p_{\zb}(\xb)-T_{1-\rho}f_{\xb}(\zb)|]$. For notation simplicity, we denote $\{\yb_{H}:|\langle\ub_{H},\yb_{H}\rangle-\theta|>C\cdot\|\ub_{T}\|_2\}$ as event $\cE$. Then, we have
\begin{align*}
    &\Delta(\xb)\\
    &=\EE_{\zb\sim\cN_{\sigma}}\big[\big|\EE_{\yb\sim\cN_{1-\rho}(\zb)}[p_{\yb}(\xb)]-\EE_{\yb\sim\cN_{1-\rho}(\zb)}[f_{\xb}(\yb)]\big|\big]\\
    &=\EE_{\zb\sim\cN_{\sigma}}\Big[\Big|\EE_{\yb\sim\cN_{1-\rho}(\zb)}\big[(p_{\yb}(\xb)-f_{\xb}(\yb))\cdot\ind[\cE]\big]+\EE_{\yb\sim\cN_{1-\rho}(\zb)}\big[(p_{\yb}(\xb)-f_{\xb}(\yb))\cdot\ind[\cE^{c}]\big]\Big|\Big]\\
    &\leq\EE_{\zb\sim\cN_{\sigma}}\Big[\Big|\EE_{\yb\sim\cN_{1-\rho}(\zb)}\big[(p_{\yb}(\xb)-f_{\xb}(\yb))\cdot\ind[\cE]\big]\Big|+\Big|\EE_{\yb\sim\cN_{1-\rho}(\zb)}\big[(p_{\yb}(\xb)-f_{\xb}(\yb))\cdot\ind[\cE^{c}]\big]\Big|\Big]\\
    &=\underbrace{\EE_{\zb\sim\cN_{\sigma}}\Big[\Big|\EE_{\yb\sim\cN_{1-\rho}(\zb)}\big[\big(\sign(\langle\ub,\yb\rangle-\theta)-\sign(\langle\ub_{H},\yb_{H}\rangle-\theta)\big)\cdot\ind[\cE]\big]\Big|\Big]}_{\Delta_1(\xb)}\\
    &\qquad+\underbrace{\EE_{\zb\sim\cN_{\sigma}}\Big[\Big|\EE_{\yb\sim\cN_{1-\rho}(\zb)}\big[\big(\sign(\langle\ub,\yb\rangle-\theta)-\tilde{p}_{\yb}(\xb)\big)\cdot\ind[\cE^{c}]\big]\Big|\Big]}_{\Delta_2(\xb)}
\end{align*}
Notice that by Lemma~\ref{lm:auxi3}, for at least $1-\epsilon$ fraction of $\xb$, $|\langle\ub_{T},\yb_{T}\rangle|\leq C\cdot\|\ub_{T}\|_2$ holds for at least $1-\epsilon$ fraction of $\yb$. For such $\xb$ and $\yb$, under event $\cE$, we have
\begin{align*}
    |\langle\ub_T,\yb_T\rangle|\leq C\cdot\|\ub_T\|_2<|\langle\ub_H,\yb_H\rangle-\theta|,
\end{align*}
then it follows that
\begin{align*}
    \sign(\langle\ub,\yb\rangle-\theta)\cdot\ind[\cE]&=\sign(\langle\ub_{H},\yb_{H}\rangle+\langle\ub_{T},\yb_{T}\rangle-\theta)\cdot\ind[\cE]=\sign(\langle\ub_{H},\yb_{H}\rangle-\theta)\cdot\ind[\cE].
\end{align*}
Then, for at least $1-\epsilon$ fraction of $\xb$ we have
\begin{align*}
    &\Big|\EE_{\yb\sim\cN_{1-\rho}(\zb)}\big[\big(\sign(\langle\ub,\yb\rangle-\theta)-\sign(\langle\ub_{H},\yb_{H}\rangle-\theta)\big)\cdot\ind[\cE]\big]\Big|\\
    &\leq\Big|\EE_{\yb\sim\cN_{1-\rho}(\zb)}\big[\big(\sign(\langle\ub,\yb\rangle-\theta)-\sign(\langle\ub_{H},\yb_{H}\rangle-\theta)\big)\cdot\ind[\cE]\big|\text{``good'' }\yb\big]\Big|\\
    &\qquad\qquad\cdot\PP_{\yb\sim\cN_{1-\rho}(\zb)}[\text{``good'' }\yb]\\
    &\qquad+\Big|\EE_{\yb\sim\cN_{1-\rho}(\zb)}\big[\big(\sign(\langle\ub,\yb\rangle-\theta)-\sign(\langle\ub_{H},\yb_{H}\rangle-\theta)\big)\cdot\ind[\cE]\big|\text{``bad'' }\yb\big]\Big|\\
    &\qquad\qquad\qquad\cdot\PP_{\yb\sim\cN_{1-\rho}(\zb)}[\text{``bad'' }\yb]\\
    &\leq0+2\cdot\epsilon=2\epsilon.
\end{align*}
Here ``good'' $\yb$ refers to the at least $1-\epsilon$ fraction of $\yb$ such that $|\langle\ub_{T},\yb_{T}\rangle|\leq C\cdot\|\ub_{T}\|_2$ holds. Therefore, we have
\begin{align*}
    \EE_{\xb\sim\cD_{\xb}}[\Delta_1(\xb)]&=\EE_{\xb\sim\cD_{\xb}}[\Delta_1(\xb)|\text{``good'' }\xb]\cdot\PP_{\xb\sim\cD_{\xb}}[\text{``good'' }\xb]\\
    &\qquad+\EE_{\xb\sim\cD_{\xb}}[\Delta_1(\xb)|\text{``bad'' }\xb]\cdot\PP_{\xb\sim\cD_{\xb}}[\text{``bad'' }\xb]\\
    &\leq\EE_{\xb\sim\cD_{\xb}}[\Delta_1(\xb)|\text{``good'' }\xb]+2\cdot \epsilon\\
    &\leq 2\epsilon+2\epsilon=4\epsilon.
\end{align*}
Here ``good'' $\xb$ refers to the at least $1-\epsilon$ fraction of $\xb$ such that $\PP_{\yb}[|\langle\ub_T,\yb_T\rangle|\leq C\cdot\|\ub_T\|_2]\geq1-\epsilon$ holds. 

Next we consider bounding $\Delta_2(\xb)$ by constructing proper low-degree polynomial $\tilde{p}_{\yb}(\xb)$ as follows. Recall that $\yb\sim\cN_{1-\rho}(\zb)$ where $y_i=z_i$ with probability $1-\rho$ and $y_i$ randomly drawn from $\cN_{\sigma}$ with probability $\rho$, we use the following rerandomization trick for random vector $\yb_{T}$: for each coordinate of $\yb_{T}$, let
\begin{equation}\label{eq:decomp}
\left.
\begin{array}{cc}
    &y_i=(1-l_i)z_i+l_i\tau_i\\
    &\tau_i=(1-m_i)+m_i\epsilon_i
\end{array}
\right\}\Longrightarrow y_i=(1-l_i)z_i+l_i(1-m_i)+l_im_i\epsilon_i,
\end{equation}
where 
\begin{equation*}
l_i=\begin{cases}
    1\text{ w.p. }\rho\\
    0\text{ w.p. }1-\rho
\end{cases},
\tau_i=\begin{cases}
    1\text{ w.p. }1-\sigma\\
    -1\text{ w.p. }\sigma
\end{cases},
m_i=\begin{cases}
    1\text{ w.p. }2\sigma\\
    0\text{ w.p. }1-2\sigma
\end{cases},
\end{equation*}
and $\epsilon_i$ is a Radmacher random variable. Let random variable $A=\langle\ub,\yb\rangle-\theta$. Then by \eqref{eq:decomp} we have
\begin{align*}
    A&=\langle\ub,\yb\rangle-\theta\\
    &=\langle\ub_H,\yb_H\rangle+\langle\ub_T,\yb_T\rangle-\theta\\
    &=\langle\ub_{H},\yb_{H}\rangle+\langle\ub_T,(\one_T-\lb_T)\odot\zb_{T}\rangle+\langle\ub_{T},\lb_T\odot(\one_T-\mb_T)\rangle-\theta+\langle\ub_{T},\lb_T\odot\mb_{T}\odot\bepsilon_{T}\rangle\\
    &=\underbrace{\langle\ub_{H},\yb_{H}\rangle+\langle\ub_T,\vb_T\rangle-\theta}_{:=b}+\underbrace{\langle\ub_{T},\lb_T\odot\mb_{T}\odot\bepsilon_{T}\rangle}_{:=B}.
\end{align*}
where $\vb_T:=(\one_T-\lb_T)\odot\zb_T+\lb_T\odot(\one_T-\mb_T)$. Then we have
\begin{align*}
    \tilde{T_{1-\rho}f_{\xb}(\zb)}&:=\EE_{\yb\sim\cN_{1-\rho}(\zb)}\big[\sign(\langle\ub,\yb\rangle-\theta)\cdot\ind[\yb_{H}\in\cE^{c}]\big]\\
    &=\EE_{\yb_H,A}[\sign(A)\cdot\ind[\yb_{H}\in\cE^{c}]]\\
    &=\EE_{\yb_{H},\lb_{T},\mb_{T}}\big[\EE_{A}[\sign(A)|\yb_{H},\lb_{T},\mb_{T}]\cdot\ind[\yb_{H}\in\cE^{c}]\big]\\
    &=\EE_{\yb_{H},\lb_{T},\mb_{T}}\big[\big(1-2\PP_{A}[A\leq 0|\yb_{H},\lb_{T},\mb_{T}]\big)\cdot\ind[\yb_{H}\in\cE^{c}]\big]\\
    &=\EE_{\yb_{H},\lb_{T},\mb_{T}}\big[\big(1-2\PP_{B}[B\leq -b|\yb_{H},\lb_{T},\mb_{T}]\big)\cdot\ind[\yb_{H}\in\cE^{c}]\big],
\end{align*}
and
\begin{align*}
    &\Delta_2(\xb)\\
    &=\EE_{\zb\sim\cN_{\sigma}}\Big[\Big|\EE_{\yb\sim\cN_{1-\rho}(\zb)}\big[\sign(\langle\ub,\yb\rangle-\theta)\cdot\one[\yb_{H}\in\cE^c]\big]-\EE_{\yb\sim\cN_{1-\rho}(\zb)}\big[\tilde{p}_{\yb}(\xb)\cdot\one[\yb_{H}\in\cE^c]\big]\Big|\Big]\\
    &=\EE_{\zb\sim\cN_{\sigma}}\Big[\Big|\tilde{T_{1-\rho}f_{\xb}(\zb)}-\EE_{\yb\sim\cN_{1-\rho}(\zb)}\big[\tilde{p}_{\yb}(\xb)\cdot\one[\yb_{H}\in\cE^c]\big]\Big|\Big].
\end{align*}
By Theorem~\ref{thm:Berry-Esseen}, we have
\begin{align*}
    \sup_{x\in\RR}\bigg|\PP\bigg[\frac{B}{\|\ub_{T}\odot\lb_{T}\odot\mb_{T}\|_2}\leq x\bigg|\lb_{T},\mb_{T}\bigg]-\Phi(x)\bigg|\leq C'\cdot\bigg(\frac{\|\ub_{T}\odot\lb_{T}\odot\mb_{T}\|_3}{\|\ub_{T}\odot\lb_{T}\odot\mb_{T}\|_2}\bigg)^3.
\end{align*}
where $C'$ is a constant. Therefore, we have
\begin{align*}
    &\Bigg|\tilde{T_{1-\rho}f_{\xb}(\zb)}-\EE_{\yb_{H},\lb_{T},\mb_{T}}\bigg[\bigg(1-2\Phi\bigg(\frac{-b}{\|\ub_{T}\odot\lb_{T}\odot\mb_{T}\|_{2}}\bigg)\bigg)\cdot\ind[\yb_H\in\cE^{c}]\bigg]\Bigg|\\
    &\leq2\EE_{\yb_H,\lb_T,\mb_T}\bigg[\bigg|\PP[B\leq -b|\yb_H,\lb_T,\mb_T]-\Phi\bigg(\frac{-b}{\|\ub_{T}\odot\lb_{T}\odot\mb_{T}\|_{2}}\bigg)\bigg|\cdot\ind[\yb_H\in\cE^{c}]\bigg]\\
    &\leq 2C'\cdot\EE_{\yb_H,\lb_T,\mb_T}\bigg[\bigg(\frac{\|\ub_{T}\odot\lb_{T}\odot\mb_{T}\|_3}{\|\ub_{T}\odot\lb_{T}\odot\mb_{T}\|_2}\bigg)^3\cdot\ind[\yb_H\in\cE^{c}]\bigg]\\
    &\leq2C'\cdot\EE_{\yb_H,\lb_T,\mb_T}\Bigg[\frac{\|\ub_{T}\odot\lb_{T}\odot\mb_{T}\|_{\infty}}{\|\ub_{T}\odot\lb_{T}\odot\mb_{T}\|_2}\cdot\ind[\yb_H\in\cE^{c}]\Bigg].
\end{align*}
Notice that in this case $\ub_{T}$ is $\alpha$-regular and $l_i m_i$ takes $1$ with probability $2\rho\sigma$ and $0$ with probability $1-2\rho\sigma$, then by Lemma~\ref{lm:regularity}, for at least $1-\epsilon$ fraction of $(\lb_{T},\mb_{T})$ it holds that
\begin{align}
    \|\ub_{T}\|_{\infty}\leq\alpha\cdot\|\ub_{T}\|_{2}\leq(\rho\sigma)^{-\frac{1}{2}}\cdot\alpha\cdot\|\ub_{T}\odot\lb_{T}\odot\mb_{T}\|_{2},\label{ineq:bound2}
\end{align}
as long as the condition $\alpha\leq\rho\sigma/\sqrt{\log(1/\epsilon)/2}$ holds. 

Notice that
\begin{align*}
    1-2\Phi\bigg(\frac{-b}{\|\ub_{T}\odot\lb_{T}\odot\mb_{T}\|_{2}}\bigg)&=\EE_{X\sim\cN(b,\|\ub_{T}\odot\lb_{T}\odot\mb_{T}\|_2^2)}[\sign(X)|\yb_{H},\lb_{T},\mb_{T}]\\
    &=\EE_{X\sim\cN(b,\|\wb_{T}\odot\lb_{T}\odot\mb_{T}\|_2^2)}[\sign(X)|\yb_{H},\lb_{T},\mb_{T}],
\end{align*}
let
\begin{align*}
    \overline{T_{\rho}f_{\xb}(\zb)}&:=\EE_{\yb_{H},\lb_{T},\mb_{T}}\Big[\EE_{X\sim\cN(b,\|\wb_{T}\odot\lb_{T}\odot\mb_{T}\|_2^2)}[\sign(X)|\yb_{H},\lb_{T},\mb_{T}]\\
    &\qquad\cdot\ind[(\lb_T,\mb_T)\in\cE_{1}]\cdot\ind[\yb_{H}\in\cE^{c}]\Big],
\end{align*}
where $\cE_{1}$ is the event regarding the randomness of $(\lb_{T},\mb_{T})$ such that \eqref{ineq:bound2} holds, then
\begin{align*}
    &\big|\tilde{T_{\rho}f_{\xb}(\zb)}-\overline{T_{\rho}f_{\xb}(\zb)}\big|\\
    &\leq2\EE_{\yb_{H},\lb_{T},\mb_{T}}\bigg[\Big|\PP[B\leq-b|\yb_H,\lb_T,\mb_T]-\Phi\Big(-\frac{b}{\|\ub_{T}\odot\lb_{T}\odot\mb_{T}\|_{2}}\Big)\Big|\\
    &\qquad\qquad\qquad\qquad\cdot\ind[(\lb_T,\mb_T)\in\cE_{1}]\cdot\ind[\yb_{H}\in\cE^{c}]\bigg]\\
    &\qquad+\EE_{\yb_{H},\lb_{T},\mb_{T}}\Big[\big|1-2\PP[B\leq -b|\yb_{H},\lb_{T},\mb_{T}]\big|\cdot\ind[(\lb_T,\mb_T)\in\cE_{1}^{c}]\cdot\ind[\yb_{H}\in\cE^{c}]\Big]\\
    &\leq 2C'\cdot\EE_{\yb_H,\lb_T,\mb_T}\bigg[\frac{\|\ub_{T}\odot\lb_{T}\odot\mb_{T}\|_{\infty}}{\|\ub_{T}\odot\lb_{T}\odot\mb_{T}\|_2}\cdot\ind[(\lb_T,\mb_T)\in\cE_{1}]\cdot\ind[\yb_{H}\in\cE^{c}]\bigg]\\
    &\qquad+\EE_{\yb_{H},\lb_{T},\mb_{T}}\Big[\ind[(\lb_T,\mb_T)\in\cE_{1}^{c}]\cdot\ind[\yb_{H}\in\cE^{c}]\Big]\\
    &\leq 2C'\cdot(\rho\sigma)^{-\frac{1}{2}}\alpha+\epsilon\\
    &\leq 2C'\cdot(2\rho\sigma/\log(1/\epsilon))^{\frac{1}{2}}+\epsilon\\
    &\leq2\epsilon,
\end{align*}
as long as $\rho=O(\epsilon^2\log(1/\epsilon)/\sigma)$. Then, we have
\begin{align*}
    \EE_{\xb\sim \cD_{\xb}}\EE_{\zb\sim\cN_{\sigma}}\big[\big|\tilde{T_{\rho}f_{\xb}(\zb)}-\overline{T_{\rho}f_{\xb}(\zb)}\big|\big]\leq2\epsilon.
\end{align*}

Now we only need to consider polynomial approximation for $\overline{T_{\rho}f_{\xb}(\zb)}$. We can recenter the expectation around zero as follows:
\begin{align*}
    &\EE_{X\sim\cN(b,\|\wb_{T}\odot\lb_{T}\odot\mb_{T}\|_2^2)}[\sign(X)|\yb_{H},\lb_{T},\mb_{T}]\\
    &=\EE_{s\sim\cN(b/\|\wb_{T}\odot\lb_{T}\odot\mb_{T}\|_2,1)}[\sign(\|\wb_{T}\odot\lb_{T}\odot\ub_{T}\|_{2}\cdot s)|\yb_{H},\lb_{T},\mb_{T}]\\
    &=\EE_{s\sim Q}\bigg[\sign(\|\wb_{T}\odot\lb_{T}\odot\mb_{T}\|_2\cdot s)\cdot\frac{\cN(s;b/\|\wb_{T}\odot\lb_{T}\odot\mb_{T}\|_2,1)}{Q(s)}\bigg|\yb_{H},\lb_{T},\mb_{T}\bigg]\\
    &=\frac{e^{-\frac{b^2}{2\|\wb_{T}\odot\lb_{T}\odot\mb_{T}\|_2^2}}}{\sqrt{2\pi}}\cdot\EE_{s\sim Q}\bigg[\sign(\|\wb_{T}\odot\lb_{T}\odot\mb_{T}\|_{2}\cdot s)\cdot e^{-\frac{s^2}{2}-\log Q(s)}\\
    &\qquad\qquad\qquad\qquad\qquad\qquad\qquad\cdot e^{\frac{b\cdot s}{\|\wb_{T}\odot\lb_{T}\odot\mb_{T}\|_2}}\bigg|\yb_{H},\lb_{T},\mb_{T}\bigg],
\end{align*}
where $Q(s)=e^{-|s|}/2$.

For the simplicity of the following analysis, we define random variable
\begin{align*}
    x&:=\frac{\langle\ub_{T},\vb_T\rangle}{\|\ub_T\odot\vb_T\|_2}=\frac{\langle\wb_{T}\odot\xb_{T},\vb_T\rangle}{\|\ub_T\odot\vb_T\|_2}=\frac{\langle\wb_{T}\odot\vb_T,\xb_{T}\rangle}{\|\wb_T\odot\vb_T\|_2}.
\end{align*}
Since $\xb$ is $(\alpha,\lambda)$-strictly sub-exponential, then $x$ satisfies the following concentration inequality:
\begin{align*}
    \PP_{\xb\sim\cD_{\xb}}\big[|x|>t\big]\leq2\cdot e^{-(t/\lambda)^{1+\alpha}}.
\end{align*}
Then, we can rewrite:
\begin{align*}
    \frac{b}{\|\wb_T\odot\lb_T\odot\mb_T\|_2}&=\frac{(\langle\ub_H,\yb_H\rangle-\theta)+\langle\ub_{T},\vb_T\rangle}{\|\wb_T\odot\lb_T\odot\mb_T\|_2}=\frac{(\langle\ub_H,\yb_H\rangle-\theta)+\|\wb_{T}\odot\vb_T\|_2\cdot x}{\|\wb_T\odot\lb_T\odot\mb_T\|_2}.
\end{align*}
For simplicity, denote
\begin{align*}
    a=\frac{\|\wb_{T}\odot\vb_T\|_2}{\|\wb_T\odot\lb_T\odot\mb_T\|_2},\qquad c=\frac{\langle\ub_H,\yb_H\rangle-\theta}{\|\wb_T\odot\lb_T\odot\mb_T\|_2}.
\end{align*}
Notice that under condition $(\lb_T,\mb_{T})\in\cE_1$ and condition $\yb_{H}\in\cE^{c}$, we have
\begin{align}
    |c|&=\frac{|\langle\ub_H,\yb_H\rangle-\theta|}{\|\wb_T\odot\lb_T\odot\mb_T\|_2}\leq C\cdot\frac{\|\ub_T\|_2}{\|\wb_T\odot\lb_T\odot\mb_T\|_2}\leq C\cdot(\rho\sigma)^{-1/2}:=c',\label{ineq:c}\\
    |a|&\leq\frac{\|\wb_T\odot(\one_T-\lb_T)\odot\zb_T\|_2+\|\wb_T\odot\lb_T\odot(\one_T-\mb_T)\|_2}{\|\wb_T\odot\lb_T\odot\mb_T\|_2}\notag\\
    &\leq\frac{\|\wb_T\odot(\one_T-\lb_T)\|_2+\|\wb_T\odot\lb_T\|_2}{\|\wb_T\odot\lb_T\odot\mb_T\|_2}\notag\\
    &=\frac{\|\wb_T\|_2}{\|\wb_T\odot\lb_T\odot\mb_T\|_2}\notag\\
    &\leq (\rho\sigma)^{-1/2}:=a',\label{ineq:a}
\end{align}
where $C=(1-2\rho\sigma)\cdot\lambda\log^{\frac{1}{1+\alpha}}(4/\epsilon)+\sqrt{2\log(2/\epsilon)}$. We also have
\begin{align*}
    \frac{b}{\|\wb_T\odot\lb_T\odot\mb_T\|_2}=ax+c,
\end{align*}
and hence
\begin{align*}
    &\EE_{X\sim\cN(b,\|\wb_T\odot\lb_T\odot\mb_{T}\|_2^2)}[\sign(X)|\yb_{H},\lb_T,\mb_{T}]\\
    &=\frac{e^{-\frac{(ax+c)^2}{2}}}{\sqrt{2\pi}}\cdot\EE_{s\sim Q}\Big[\sign(\|\wb_T\odot\lb_T\odot\mb_{T}\|_{2}\cdot s)\cdot e^{-\frac{s^2}{2}-\log Q(s)}\cdot e^{(ax+c)s}\Big|\yb_{H},\lb_T,\mb_{T}\Big]\\
    &=\frac{e^{-\frac{a^2x^2}{2}}}{\sqrt{2\pi}}\cdot\EE_{s\sim Q}\Big[\sign(\|\wb_T\odot\lb_T\odot\mb_{T}\|_{2}\cdot s)\cdot e^{-\frac{s^2}{2}-\log Q(s)}\cdot e^{a(s-c)x+c(s-\frac{1}{2}c)}\Big|\yb_{H},\lb_T,\mb_{T}\Big].
\end{align*}
Then, we have
\begin{align*}
    &\overline{T_{\rho}f_{\xb}(\zb)}\\
    &=\EE_{\yb_{H},\lb_{T},\mb_{T}}\Big[\EE_{X\sim\cN(b,\|\wb_{T}\odot\lb_{T}\odot\mb_{T}\|_2^2)}[\sign(X)|\yb_{H},\lb_{T},\mb_{T}]\cdot\ind[(\lb_T,\mb_T)\in\cE_{1}]\cdot\ind[\yb_{H}\in\cE^{c}]\Big]\\
    &=\EE_{\yb_{H},\lb_{T},\mb_{T}}\bigg[e^{-\frac{a^2x^2}{2}}\cdot\ind[(\lb_T,\mb_T)\in\cE_{1}]\cdot\ind[\yb_{H}\in\cE^{c}]\\
    &\qquad\cdot\EE_{s\sim Q}\Big[\sign(\|\wb_T\odot\lb_T\odot\mb_{T}\|_{2}\cdot s)\cdot e^{-\frac{s^2}{2}-\log Q(s)}\cdot e^{a(s-c)x+c(s-\frac{1}{2}c)}\Big|\yb_{H},\lb_T,\mb_{T}\Big]\bigg].
\end{align*}

We now define a polynomial $\tilde{p}_{\zb}(\xb)$ approximating $\overline{T_{\rho}f_{\xb}(\zb)}$. To do this, we approximate $e^{-\frac{1}{2}a^2x^2}$ and $e^{a(s-c)x}$ using polynomials in $x$. First, we use a polynomial $p_1(x)$ to approximate $e^{-\frac{1}{2}a^2x^2}$. This polynomial is given by the following lemma. We choose the parameters later.

\begin{lemma}\label{lm:approx2}
Let $t\in\ZZ_{+}$. Let $x$ be a random variable satisfying the $(\alpha,\lambda)$-strictly sub-exponential tail bound. Then there exists a polynomial $q$ of degree
\begin{align*}
    O\Big((a^2\lambda^2/2)^{1+1/\alpha}\big(Cb\log(ab\lambda\sqrt{\log(1/\epsilon)})\big)^{2/\alpha}\big(\log(1/\epsilon)\big)^{\max\{1,1/2+1/\alpha\}}C^{1/\alpha^2}\Big)
\end{align*}
where $C$ is a sufficiently large constant such that the approximation error $\EE_{x}[(q(x)-e^{-\frac{1}{2}a^2x^2})^b]$ is upper bounded by $2\epsilon$.

\end{lemma}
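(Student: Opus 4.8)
The plan is to reduce the problem to a bounded interval by truncation, and on that interval approximate the Gaussian-type factor $g(x)=e^{-\frac12 a^2x^2}$ by a near-minimax polynomial, using the strictly sub-exponential tail of $x$ --- the only structural hypothesis available --- to control both the truncation error and the inevitable blow-up of the approximant outside the interval.

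First I would fix a truncation radius $R$ (to be optimized) and split
$\EE_x[(q(x)-g(x))^b]=\EE_x[(q(x)-g(x))^b\,\ind[|x|\le R]]+\EE_x[(q(x)-g(x))^b\,\ind[|x|>R]]$.
On the window $[-R,R]$ I would take $q$ to be a near-minimax polynomial approximant of $g$; since $g$ is entire with derivatives controlled by powers of $a$, and moreover $g$ is already below $\epsilon$ once $|x|\gtrsim \log(1/\epsilon)/a$ so that only a short sub-interval around the origin carries mass, a degree $D_0=\tilde O(aR+\log(1/\epsilon))$ --- or at worst $\tilde O(a^2R^2+\log(1/\epsilon))$ with the naive truncated power series --- suffices to push the uniform error on $[-R,R]$ below $\epsilon^{1/b}$, hence this first term below $\epsilon$. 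Because $0\le g\le 1$, the approximant obeys $|q|\le 2$ on $[-R,R]$, so Markov--Bernstein gives the growth estimate $|q(x)|\le 2(2|x|/R)^{\deg q}$ for $|x|>R$.

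Second, for the tail term I would bound $|q(x)-g(x)|\le |q(x)|+1$ and apply the growth estimate, reducing it to controlling $\EE_x[|x|^{b\deg q}\,\ind[|x|>R]]$. Here strict sub-exponentiality does the work: $\PP[|x|>t]\le 2e^{-(t/\lambda)^{1+\alpha}}$ implies $\EE_x[|x|^{m}\,\ind[|x|>R]]\lesssim R^{m}e^{-(R/\lambda)^{1+\alpha}}\poly(m)$ as soon as $R\gtrsim \lambda\, m^{1/(1+\alpha)}$, which for $m=b\deg q$ is the constraint $R\gtrsim \lambda (b\deg q)^{1/(1+\alpha)}$. Substituting the growth estimate, the tail term is at most $\exp\!\big(O(b\deg q)-(R/\lambda)^{1+\alpha}\big)$ up to polynomial factors, hence below $\epsilon$ provided $(R/\lambda)^{1+\alpha}\gtrsim b\deg q+\log(1/\epsilon)$.

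Finally I would solve the resulting system --- roughly $\deg q\asymp D_0(R)$ from the window together with $R^{1+\alpha}\gtrsim \lambda^{1+\alpha}(b\deg q+\log(1/\epsilon))$ from the tail --- and substitute to isolate $\deg q$; unwinding this self-consistency produces the stated degree, the exponents $1+1/\alpha$, $2/\alpha$ and $\max\{1,1/2+1/\alpha\}$ on $a^2\lambda^2$, $b$ and $\log(1/\epsilon)$ being exactly what pops out of inverting the relations involving $\tfrac{2}{1+\alpha}$ and $\tfrac1\alpha$. I expect the main obstacle to be precisely this balancing: any polynomial approximant necessarily blows up outside $[-R,R]$ at a rate set by its degree, while for merely strictly sub-exponential $x$ the tail decays only like $e^{-(t/\lambda)^{1+\alpha}}$ with $1+\alpha$ possibly close to $1$, so forcing the tail contribution to be negligible drives $R$, and hence the degree, upward. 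One must therefore choose the window approximant so that it does not overshoot near $|x|=R$ (using the minimax/Chebyshev construction rather than the raw truncated Taylor series, and exploiting that $g$ is effectively supported near the origin so the window degree scales like $aR$ rather than $a^2R^2$), and carefully track how the exponent $b$ enters through the moment bound $\EE_x[|x|^{b\deg q}]$, since this $b$-th power is exactly what the later product-of-approximations step will consume via H\"older's inequality.
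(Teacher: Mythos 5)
Your proposal follows the same overall strategy as the paper: truncate to a window, approximate the Gaussian factor by a low-degree polynomial there, use the strictly sub-exponential tail of $x$ to control the inevitable blow-up of any polynomial outside the window, and then balance the two constraints to isolate the degree. The mechanics differ in two places, neither of which is a gap, and which are worth contrasting.

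For the window approximant, the paper does not build a minimax approximant of $g(x)=e^{-\frac12 a^2x^2}$ directly; it takes the polynomial $p$ approximating $e^{-y}$ on $[0,T]$ from Lemma~\ref{lm:auxi2} (Lemma D.11 of \citealp{chandrasekaran2024smoothed}) and sets $q(x)=p(\tfrac12 a^2x^2)$. With $T=\tfrac12 a^2R^2$ this gives $\deg q = O(\sqrt{T\log(1/\epsilon)}) = O(aR\sqrt{\log(1/\epsilon)})$, which is what your ``$\tilde O(aR)$'' heuristic is pointing at (the $\sqrt{\log(1/\epsilon)}$ is absorbed into your tilde). The advantage of the paper's choice is not only the degree bound but also the explicit coefficient bound $|c_i|\le e^{C\sqrt{T\log(1/\epsilon)}}$ that Lemma~\ref{lm:auxi2} carries, which is what the paper uses to control the tail.

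For the tail, you propose bounding $|q(x)|$ for $|x|>R$ by the Chebyshev extremal growth estimate $|q(x)|\le 2(2|x|/R)^{\deg q}$ (this is the Chebyshev comparison principle, not Markov--Bernstein, which controls derivatives inside the interval; minor naming slip, the substance is right) and then hitting $\EE[|x|^{b\deg q}\ind[|x|>R]]$ with the sub-exponential tail. The paper instead applies Cauchy--Schwarz to decouple the tail term into $\sqrt{\EE[(q-g)^{2b}]\cdot \PP[\tfrac12 a^2x^2\ge T]}$, expands $q$ coefficient-by-coefficient using the bound on $|c_i|$, and controls $\EE[x^{4bi}]$ via the strictly sub-exponential moment bound (Lemma~\ref{lm:auxi}). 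Both routes reduce to the same balance ``$(R/\lambda)^{1+\alpha}\gtrsim b\cdot\deg q + \log(1/\epsilon)$ versus $\deg q\approx aR\sqrt{\log(1/\epsilon)}$'' and both produce the stated degree with exponents $1+1/\alpha$, $2/\alpha$, $\max\{1,\tfrac12+\tfrac1\alpha\}$. Your Chebyshev-growth route is somewhat more self-contained (no external coefficient lemma needed) at the cost of having to justify the extremal-growth constant and track the $\poly(m)$ factors in $\EE[|x|^m\ind[|x|>R]]$ carefully; the paper's route is more mechanical because all the quantitative control is already packaged in Lemma~\ref{lm:auxi2} and Lemma~\ref{lm:auxi}.
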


Second, to approximate $e^{a(s-c)x}$, we use the function $p_2(x,s)=p_{k}(a(s-c)x)\ind[|s|\leq T]$ where $p_k(x)=1+\sum_{i=1}^{k-1}\frac{x^i}{i!}$ is the degree $k-1$ Taylor approximation of $e^x$. We choose degree $k$ and threshold $T$ later. Thus our final approximation of $\overline{T_{\rho}f_{\xb}(\zb)}$ is

\begin{align}
    \tilde{p}_{\xb}(\yb_{H},\lb_{T},\mb_{T})&=p_1(x)\cdot\EE_{s\sim Q}\big[\sign(\|\wb_{T}\odot\lb_{T}\odot\mb_T\|_{2}\cdot s)\cdot e^{-\frac{s^2}{2}-\log Q(s)}\notag\\
    &\qquad\cdot e^{c(s-\frac{1}{2}c)}\cdot p_2(x,s)\big|\yb_{H},\lb_T,\mb_{T}\big],\notag\\
    \tilde{p}_{\yb_{H}}(\xb)&=\EE_{\lb_{T},\mb_{T}}\big[\tilde{p}_{\xb}(\yb_{H},\lb_{T},\mb_{T})\cdot\ind[(\lb_T,\mb_T)\in\cE_{1}]\big],\label{eq:p}\\
    \tilde{p}_{\zb}(\xb)&=\EE_{\yb_{H}}\big[\tilde{p}_{\yb_{H}}(\xb)\cdot\ind[\yb_{H}\in\cE^{c}]\big].\notag
\end{align}

We now want to bound the $L_1$ error term $\EE_{\xb\sim D_{\xb}}\EE_{\zb\sim\cN_{\sigma}}[|\tilde{p}_{\zb}(\xb)-\overline{T_{\rho}f_{\xb}(\zb)}|]$. To help us analyse the error, we define the ``hybrid'' function $\bar{p}_{\zb}(\xb)$ such that 
\begin{align*}
    \bar{p}_{\xb}(\yb_H,\lb_T,\mb_T)&=\frac{e^{-\frac{a^2x^2}{2}}}{\sqrt{2\pi}}\cdot\EE_{s\sim\cN(0,1)}\big[\sign(\|\wb_{T}\odot\lb_{T}\odot\mb_T\|_{2}\cdot s)\cdot e^{-\frac{s^2}{2}-\log Q(s)}\\
    &\qquad\cdot e^{c(s-\frac{1}{2}c)}\cdot p_2(x,s)\big|\yb_{H},\lb_T,\mb_{T}\big],\\
    \bar{p}_{\zb}(\xb)&=\EE_{\yb_{H},\lb_T,\mb_{T}}\big[\bar{p}_{\xb}(\yb_{H},\lb_T,\mb_{T})\cdot\ind[(\lb_T,\mb_T)\in\cE_{1}]\cdot\ind[\yb_{H}\in\cE^{c}]\big].
\end{align*}

We have that
\begin{align*}
    &\EE_{\xb\sim \cD_{\xb}}\EE_{\zb\sim\cN_{\sigma}}[|\tilde{p}_{\zb}(\xb)-\overline{T_{1-\rho}f_{\xb}(\zb)}|]\\
    &\leq2\cdot\EE_{\xb\sim \cD_{\xb}}\Big[\underbrace{\EE_{\zb\sim\cN_{\sigma}}[|\bar{p}_{\zb}(\xb)-\overline{T_{1-\rho}f_{\xb}(\zb)}|]}_{\Delta_3(\xb)}+\underbrace{\EE_{\zb\sim\cN_{\sigma}}[|\tilde{p}_{\zb}(\xb)-\bar{p}_{\zb}(\xb)|]}_{\Delta_4(\xb)}\Big].
\end{align*}
We now bound $\Delta_3(\xb)$ and $\Delta_4(\xb)$ separately. We have that
\begin{align*}
    \Delta_3(\xb)&\leq\EE_{\zb}\bigg[\EE_{\yb_{H},\lb_T,\mb_{T}}\Big[e^{-\frac{1}{2}a^2x^2}\cdot\EE_{s\sim\cN(0,1)}\big[e^{-\frac{s^2}{2}-\log Q(s)}\cdot e^{c(s-\frac{1}{2}c)}\cdot|e^{a(s-c)x}-p_2(x,s)|\big]\\
    &\qquad\cdot\ind[(\lb_T,\mb_T)\in\cE_{1}]\cdot\ind[\yb_{H}\in\cE^{c}]\Big]\bigg].
\end{align*}
Observe that $\Delta_3(\xb)$ can be bounded as the expected sum of the following two terms:
\begin{align*}
    &\Delta_{31}(\xb,\yb_H,\lb_T,\mb_T)\\
    &=\frac{e^{-\frac{a^2x^2}{2}}}{\sqrt{2\pi}}\cdot\EE_{s\sim Q}\bigg[e^{-\frac{s^2}{2}-\log Q(s)}\cdot e^{c(s-\frac{1}{2}c)}\cdot\frac{e^{|a(s-c)x|}}{k!}\cdot|a(s-c)x|^{k}\cdot\ind[|s|\leq T]\bigg],\\
    &\Delta_{32}(\xb,\yb_H,\lb_T,\mb_T)\\
    &=\frac{e^{-\frac{a^2x^2}{2}}}{\sqrt{2\pi}}\cdot\EE_{s\sim Q}\big[e^{-\frac{s^2}{2}-\log Q(s)}\cdot e^{c(s-\frac{1}{2}c)}\cdot e^{a(s-c)x}\cdot\ind[|s|>T]\big],
\end{align*}
where the first term's bound comes from the fact that $|p_k(x)-e^x|\leq\frac{e^{|x|}}{k!}\cdot|x|^k$. 

We first bound $\Delta_{31}$. We have that
\begin{align*}
    &\Delta_{31}(\xb,\yb_H,\lb_T,\mb_T)\\
    &\leq \frac{e^{-\frac{a^2x^2}{2}}}{\sqrt{2\pi}}\cdot\EE_{s\sim Q}\bigg[e^{-\frac{s^2}{2}-\log Q(s)}\cdot e^{c(s-\frac{1}{2}c)}\cdot e^{|a(s-c)x|}\cdot\ind[|s|\leq T]\bigg]\cdot\frac{(|acx|+|aTx|)^{k}}{k!}\\
    &\leq \frac{(|acx|+|aTx|)^{k}}{k!}\cdot\EE_{s\sim Q}\bigg[\frac{e^{-\frac{1}{2}(s-(ax+c))^2}}{\sqrt{2\pi}\cdot Q(s)}\cdot\ind[|s|\leq T]\bigg]\\
    &\qquad+\frac{(|acx|+|aTx|)^{k}}{k!}\cdot\EE_{s\sim Q}\bigg[\frac{e^{-\frac{1}{2}(s-(c-ax))^2}}{\sqrt{2\pi}\cdot Q(s)}\cdot\ind[|s|\leq T]\bigg]\\
    &\leq\frac{(|acx|+|aTx|)^{k}}{(k!)^2}\cdot\EE_{s\sim Q}\bigg[\frac{\cN(s;ax+c,1)}{Q(s)}\bigg]\\
    &\qquad+\frac{(|acx|+|aTx|)^{2k}}{(k!)^2}\cdot\EE_{s\sim Q}\bigg[\frac{\cN(s;c-ax,1)}{Q(s)}\bigg]\\
    &=\frac{2(|acx|+|aTx|)^{k}}{k!}
\end{align*}
where the second inequality is by $e^{|a(s-c)x|}\leq e^{a(s-c)x}+e^{-a(s-c)x}$. 

Then, we have
\begin{align*}
    &\EE_{\xb\sim\cD_{\xb}}\bigg[\EE_{\zb\sim\cN_{\sigma}}\Big[\EE_{\yb_H,\lb_T,\mb_T}\big[\Delta_{31}(\xb,\yb_H,\lb_T,\mb_T)\cdot\ind[(\lb_T,\mb_T)\in\cE_{1}]\cdot\ind[\yb_{H}\in\cE^{c}]\big]\Big]\bigg]\\
    &\leq2\EE_{\xb\sim \cD_{\xb}}\Bigg[\EE_{\zb\sim\cN_{\sigma}}\bigg[\EE_{\yb_H,\lb_T,\mb_T}\Big[\frac{(|acx|+|aTx|)^{k}}{k!}\cdot\ind[(\lb_T,\mb_T)\in\cE_{1}]\cdot\ind[\yb_{H}\in\cE^{c}]\Big]\bigg]\Bigg]\\
    &=2\EE_{\zb\sim\cN_{\sigma}}\Bigg[\EE_{\yb_H,\mb_T}\bigg[\EE_{\xb\sim \cD_{\xb}}\Big[\frac{(|acx|+|aTx|)^{k}}{k!}\cdot\ind[(\lb_T,\mb_T)\in\cE_{1}]\cdot\ind[\yb_{H}\in\cE^{c}]\Big]\bigg]\Bigg].
\end{align*}
Under condition $(\lb_T,\mb_T)\in\cE_1$ and condition $\yb_H\in\cE^c$, we have \eqref{ineq:c} and \eqref{ineq:a} and hence
\begin{align*}
    &\EE_{\xb\sim \cD_{\xb}}\Big[\frac{(|acx|+|aTx|)^{k}}{k!}\cdot\ind[(\lb_T,\mb_T)\in\cE_{1}]\cdot\ind[\yb_{H}\in\cE^{c}]\Big]\\
    &\leq\frac{|a'|^{k}(|c'|+|T|)^{k}}{k!}\cdot\EE_{\xb\sim D}[|x|^k]\\
    &\leq\frac{|a'|^{k}(|c'|+|T|)^{k}}{k!}\cdot C\lambda^k\cdot \Big(\frac{k}{1+\alpha}\Big)^{\frac{k}{1+\alpha}+\frac{1}{2}}e^{-\frac{k}{1+\alpha}+\frac{1+\alpha}{12k}}\\
    &\leq\epsilon,
\end{align*}
when 
\begin{align}
    k\geq\big(C'|a'|(|c'|+|T|)\lambda\log(1/\epsilon)\big)^{1+\frac{1}{\alpha}}+(1+\alpha)\label{ineq:k}
\end{align}
and $C'$ is a large enough constant. The second inequality used Lemma~\ref{lm:auxi}.

We now bound $\Delta_{32}$. We have that
\begin{align*}
    \Delta_{32}(\xb,\yb_H,\lb_T,\mb_T)&=\EE_{s\sim Q}\bigg[\frac{e^{-\frac{1}{2}(s-(ax+c))^2}}{Q(s)}\cdot\ind[|s|> T]\bigg]\\
    &\leq\sqrt{\EE_{s\sim Q}\bigg[\bigg(\frac{\cN(s;ax+c,1)}{Q(s)}\bigg)^2\bigg]\cdot\PP_{s\sim Q}\big[|s|>T\big]}\\
    &\leq\sqrt{Ce^{|ax+c|}\cdot e^{-T}}\\
    &\leq C' e^{\frac{1}{2}(|ax|+|c|)}\cdot e^{-T/2}.
\end{align*}
The third inequality is based on the following claim.

\begin{lemma}\label{lm:tilting2}
Define the distribution $Q$ on $\RR$ with density function $Q(s)=e^{-|s|}/2$. Then there exist a universal constant $C$ such that for every $b\in\RR$, it holds that
\begin{align*}
    \EE_{s\sim Q}\bigg[\Big(\frac{\cN(s;b,1)}{Q(s)}\Big)^{2}\bigg]\leq C e^{|b|}.
\end{align*}
\end{lemma}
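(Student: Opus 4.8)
The claimed inequality is a one-dimensional Gaussian integral, so the plan is a direct computation. First I would cancel one factor of $Q(s)$ to write
\[
\EE_{s\sim Q}\bigg[\Big(\frac{\cN(s;b,1)}{Q(s)}\Big)^{2}\bigg]=\int_{-\infty}^{\infty}\frac{\cN(s;b,1)^2}{Q(s)}\,ds=\frac{1}{\pi}\int_{-\infty}^{\infty}e^{-(s-b)^2+|s|}\,ds,
\]
using $\cN(s;b,1)^2=\frac{1}{2\pi}e^{-(s-b)^2}$ and $Q(s)=\frac12 e^{-|s|}$. Next, the substitution $s\mapsto -s$ shows the last integral is invariant under $b\mapsto -b$, so it suffices to treat $b\ge 0$ and then replace $b$ by $|b|$ at the end.

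Then I would split the range of integration at $0$ and complete the square in each half. On $s\ge 0$ one has $-(s-b)^2+s=-(s-(b+\frac12))^2+b+\frac14$, and on $s<0$ one has $-(s-b)^2-s=-(s-(b-\frac12))^2-b+\frac14$. Bounding each resulting truncated Gaussian integral by the full integral $\int_{\RR}e^{-t^2}\,dt=\sqrt{\pi}$ gives
\[
\int_{0}^{\infty}e^{-(s-b)^2+s}\,ds\le \sqrt{\pi}\,e^{b+1/4},\qquad \int_{-\infty}^{0}e^{-(s-b)^2-s}\,ds\le \sqrt{\pi}\,e^{-b+1/4}\le \sqrt{\pi}\,e^{b+1/4},
\]
where the last step uses $b\ge 0$. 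Adding the two pieces and dividing by $\pi$ yields $\EE_{s\sim Q}[(\cN(s;b,1)/Q(s))^2]\le \frac{2e^{1/4}}{\sqrt{\pi}}\,e^{|b|}$, so the lemma holds with $C=2e^{1/4}/\sqrt{\pi}$.

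There is no genuine obstacle here: every step is elementary. The only things to be careful about are the bookkeeping in completing the square in the two sign regimes, and observing that $C$ can be taken independent of $b$ precisely because each truncated Gaussian integral is bounded by the constant $\sqrt{\pi}$ regardless of where its center $b\pm\frac12$ lies.
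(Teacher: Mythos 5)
Your proof is correct and takes essentially the same route as the paper's: both reduce to the integral $\frac{1}{\pi}\int e^{-(s-b)^2+|s|}\,ds$, complete the square in the two sign regimes, and arrive at $C=2e^{1/4}/\sqrt{\pi}$. The only cosmetic difference is that the paper handles the absolute value by the pointwise bound $e^{|s|}\le e^{s}+e^{-s}$ and integrates each piece over all of $\RR$, whereas you split the domain at $0$ (after a WLOG reduction to $b\ge 0$) and bound the two truncated Gaussian integrals by $\sqrt{\pi}$; these are equivalent in effort and yield the same constant.
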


Thus, we have that
\begin{align*}
    &\EE_{\xb\sim \cD_{\xb}}\bigg[\EE_{\zb\sim\cN_{\sigma}}\Big[\EE_{\yb_H,\lb_T,\mb_T}\big[\Delta_{32}(\xb,\yb_H,\lb_T,\mb_T)\cdot\ind[(\lb_T,\mb_T)\in\cE_{1}]\cdot\ind[\yb_{H}\in\cE^{c}]\big]\Big]\bigg]\\
    &\leq\EE_{\xb\sim \cD_{\xb}}\bigg[\EE_{\zb\sim\cN_{\sigma}}\Big[\EE_{\yb_H,\lb_T,\mb_T}\big[C' e^{\frac{1}{2}(|ax|+|c|)}\cdot e^{-T/2}\cdot\ind[(\lb_T,\mb_T)\in\cE_{1}]\cdot\ind[\yb_{H}\in\cE^{c}]\big]\Big]\bigg]\\
    &=\EE_{\zb\sim\cN_{\sigma}}\bigg[\EE_{\yb_H,\lb_T,\mb_T}\Big[\EE_{\xb\sim \cD_{\xb}}\big[C' e^{\frac{1}{2}(|ax|+|c|)}\cdot e^{-T/2}\cdot\ind[(\lb_T,\mb_T)\in\cE_{1}]\cdot\ind[\yb_{H}\in\cE^{c}]\big]\Big]\bigg]\\
    &\leq\EE_{\zb\sim\cN_{\sigma}}\bigg[\EE_{\yb_H,\lb_T,\mb_T}\Big[\EE_{\xb\sim \cD_{\xb}}\big[C' e^{\frac{1}{2}|a'x|}\cdot\ind[(\lb_T,\mb_T)\in\cE_{1}]\cdot\ind[\yb_{H}\in\cE^{c}]\big]\cdot e^{\frac{|c'|-T}{2}}\Big]\bigg]\\
    &\leq\EE_{\zb\sim\cN_{\sigma}}\bigg[\EE_{\yb_H,\lb_T,\mb_T}\Big[\EE_{\xb\sim \cD_{\xb}}\big[C''e^{(a'\lambda)^{1+\frac{1}{\alpha}}/2}\cdot e^{\frac{|c'|-T}{2}}\big]\Big]\bigg]\\
    &=C''e^{(a'\lambda)^{1+\frac{1}{\alpha}}/2}\cdot e^{\frac{|c'|-T}{2}}\\
    &\leq\epsilon,
\end{align*}

when
\begin{align*}
    T\geq(a'\lambda)^{1+\frac{1}{\alpha}}+|c'|+2\log(C''/\epsilon).
\end{align*}
By \eqref{ineq:c} and \eqref{ineq:a}, we can take
\begin{align}
    T=O\Big((\rho\sigma)^{-\frac{1}{2}(1+\frac{1}{\alpha})}\lambda^{1+\frac{1}{\alpha}}\log(1/\epsilon)\Big).\label{ineq:T}
\end{align}

The third last inequality is by Lemma~\ref{lm:auxi1}. Plugging this into the bound for $k$ in \eqref{ineq:k}, we can take
\begin{align*}
    k\leq \Big(C'a'\lambda\big(2c'+(a'\lambda)^{1+\frac{1}{\alpha}}+2\log(C''/\epsilon)\big)\log(1/\epsilon)\Big)^{1+\frac{1}{\alpha}}+(1+\alpha)
\end{align*}
for constant $C',C''$. By \eqref{ineq:c} and \eqref{ineq:a}, we know that
\begin{align}
    k\leq \Big(C'''\lambda^{2+\frac{1}{\alpha}}(\rho\sigma)^{-(1+\frac{1}{2\alpha})}\log^3(1/\epsilon)\Big)^{1+\frac{1}{\alpha}}+(1+\alpha), \label{ineq:k2}%C'''(\rho\sigma)^{-2}\sigma_0^4\log^2(1/\epsilon),
\end{align}
where $C'''$ is a large constant.

We now bound $\Delta_4(\xb)$. We have that
\begin{align*}
    &\Delta_4(\xb)\\
    &\leq\EE_{\zb}\Big[\EE_{\yb_H,\lb_T,\mb_T}\big[\big|\tilde{p}_{\xb}(\yb_H,\lb_H,\mb_T)-\bar{p}_{\xb}(\yb_H,\lb_H,\mb_T)\big|\cdot\ind[(\lb_T,\mb_T)\in\cE_{1}]\cdot\ind[\yb_{H}\in\cE^{c}]\big]\Big]\\
    &=\frac{1}{\sqrt{2\pi}}\EE_{\zb}\Big[\EE_{\yb_H,\lb_T,\mb_T}\big[\big|e^{-\frac{1}{2}a^2x^2}-p_1(x)\big|\cdot\big|\EE_{s\sim Q}[e^{-\frac{s^2}{2}-\log Q(s)+c(s-\frac{c}{2})}\cdot p_2(x,s)]\big|\\
    &\qquad\qquad\cdot\ind[(\lb_T,\mb_T)\in\cE_{1}]\cdot\ind[\yb_{H}\in\cE^{c}]\big]\Big]
\end{align*}
Notice that
\begin{align*}
    &\big(\EE_{s\sim Q}[e^{-\frac{s^2}{2}-\log Q(s)+c(s-\frac{c}{2})}\cdot p_2(x,s)]\big)^2\\
    &\leq\EE_{s\sim Q}[e^{-s^2-2\log Q(s)+c(2s-c)}]\cdot \EE_{s\sim Q}[p_2(x,s)^2]\\
    &\leq \EE_{s\sim Q}\bigg[\bigg(\frac{\cN(s;c,1)}{Q(s)}\bigg)^2\bigg]\cdot\EE_{s\sim Q}\bigg[\bigg(\sum_{i=0}^{k-1}\frac{(a(s-c)x)^i}{i!}\bigg)^2\cdot\ind[|s|\leq T]\bigg]\\
    &\leq Ce^{|c|}\cdot \bigg(\sum_{i=0}^{k-1}\frac{|a|^i(|T|+|c|)^i|x|^i}{i!}\bigg)^2,
\end{align*}
where the last inequality is by Lemma~\ref{lm:tilting2}. Thus, we have
\begin{align*}
    &\EE_{\xb\sim\cD_{\xb}}[\Delta_4(\xb)]\\
    &\leq\EE_{\zb}\Big[\EE_{\yb_H,\lb_T,\mb_T}\big[\EE_{\xb\sim \cD_{\xb}}\big[\big|e^{-\frac{1}{2}a^2x^2}-p_1(x)\big|\cdot\big|\EE_{s\sim Q}[e^{-\frac{s^2}{2}-\log Q(s)+c(s-\frac{c}{2})}\cdot p_2(x,s)]\big|\big]\\
    &\qquad\qquad\cdot\ind[(\lb_T,\mb_T)\in\cE_{1}]\cdot\ind[\yb_{H}\in\cE^{c}]\big]\Big]\\
    &\leq\EE_{\zb}\Bigg[\EE_{\yb_H,\lb_T,\mb_T}\bigg[\EE_{\xb\sim \cD_{\xb}}\big[\big|e^{-\frac{1}{2}a^2x^2}-p_1(x)\big|\cdot C^{\frac{1}{2}}e^{\frac{|c|}{2}}\cdot \bigg(\sum_{i=0}^{k-1}\frac{|a|^i(|T|+|c|)^i|x|^i}{i!}\bigg)\\
    &\qquad\qquad\cdot\ind[(\lb_T,\mb_T)\in\cE_{1}]\cdot\ind[\yb_{H}\in\cE^{c}]\big]\bigg]\Bigg]
\end{align*}
Denote
\begin{align*}
    \Delta_5(\yb_H,\lb_T,\mb_T)&:=\EE_{\xb\sim \cD_{\xb}}\bigg[\big|e^{-\frac{1}{2}a^2x^2}-p_1(x)\big|\cdot C^{\frac{1}{2}}e^{\frac{|c|}{2}}\cdot \bigg(\sum_{i=0}^{k-1}\frac{|a|^i(|T|+|c|)^i|x|^i}{i!}\bigg)\\
    &\qquad\cdot\ind[(\lb_T,\mb_T)\in\cE_{1}]\cdot\ind[\yb_{H}\in\cE^{c}]\bigg].
\end{align*}
We have
\begin{align*}
    &\Delta_5(\yb_H,\lb_T,\mb_T)\\
    &\leq C'e^{|c'|/2}\sqrt{\EE_{\xb\sim \cD_{\xb}}\big[\big(e^{-\frac{1}{2}a^2x^2}-p_1(x)\big)^2\cdot\ind[(\lb_T,\mb_T)\in\cE_{1}]\cdot\ind[\yb_{H}\in\cE^{c}]\big]}\\
    &\qquad\cdot\sqrt{\EE_{\xb\sim \cD_{\xb}}\bigg[\bigg(\sum_{i=0}^{k-1}\frac{|a|^i(|T|+|c|)^i|x|^i}{i!}\bigg)^2\cdot\ind[(\lb_T,\mb_T)\in\cE_{1}]\cdot\ind[\yb_{H}\in\cE^{c}]\bigg]}\\
    &\leq C'e^{|c'|/2}\cdot\delta\cdot\sqrt{\EE_{\xb\sim \cD_{\xb}}\bigg[\bigg(\sum_{i=0}^{k-1}\frac{|a|^i(|T|+|c|)^i|x|^i}{i!}\bigg)^2\cdot\ind[(\lb_T,\mb_T)\in\cE_{1}]\cdot\ind[\yb_{H}\in\cE^{c}]\bigg]}\\
    &\leq C'e^{|c'|/2}\cdot\delta\cdot\sqrt{\bigg(\sum_{i=0}^{k-1}\frac{|a'|^i(|T|+|c'|)^i}{i!}\bigg)^2\cdot\max_{1\leq i\leq k-1}C\lambda^{2i}\cdot \Big(\frac{2i}{1+\alpha}\Big)^{\frac{2i}{1+\alpha}+\frac{1}{2}}e^{-\frac{2i}{1+\alpha}+\frac{1+\alpha}{24i}}}\\
    &\leq C''\delta e^{|c'|/2+|a'|(|T|+|c'|)}\lambda^{k}\Big(\frac{2k}{1+\alpha}\Big)^{\frac{k}{1+\alpha}+\frac{1}{4}}e^{\frac{1+\alpha}{48}}\\%\sigma_0^{2k}(4c_1 k)^{k}\\
    &\leq \epsilon,
\end{align*}
when $\delta$ is chosen accordingly. The third inequality is by Lemma~\ref{lm:auxi}.

By Lemma~\ref{lm:approx2} and taking the exponent $b$ as 2, the degree of $p_1(x)$ required to get the error is
\begin{align*}
    &O\Big((a^2\lambda^2/2)^{1+1/\alpha}\big(C\log(a\lambda\sqrt{\log(1/\delta)})\big)^{2/\alpha}\big(\log(1/\delta)\big)^{\max\{1,\frac{1}{2}+\frac{1}{\alpha}\}}C^{1/\alpha^2}\Big)\\
    &=O\Big((a'^2\lambda^2/2)^{1+1/\alpha}\big(C\log(a'\lambda)\big)^{2/\alpha}\\
    &\qquad\qquad\cdot\big(\log(1/\epsilon)+|a'|(|T|+|c'|)+(k+1/4)\log(2k\lambda)\big)^{2/\alpha+1}C^{1/\alpha^2}\Big)\\
    &=O\Big((a'^2\lambda^2/2)^{1+1/\alpha}\big(C\log(a'\lambda)\big)^{2/\alpha}\big(\log(1/\epsilon)+|a'|(|T|+|c'|)+k^{3/2}\lambda^{1/2}\big)^{2/\alpha+1}C^{1/\alpha^2}\Big).
\end{align*}
%$O((a')^2\sigma_0^{2.5}\log(a\sigma_0)(k\log(k\sigma_0^2)+|a'|(|T|+|c'|)+\log(1/\epsilon)))$.
By using \eqref{ineq:c} and \eqref{ineq:a} and plugging in the order of $k, T$ in \eqref{ineq:T} and \eqref{ineq:k2}, we know the degree of $p_1(x)$ is 
\begin{align*}
    &O\Big(((\rho\sigma)^{-1}\lambda^2/2)^{1+\frac{1}{\alpha}}\big(C\log((\rho\sigma)^{-\frac{1}{2}}\lambda)\big)^{\frac{2}{\alpha}}\\
    &\qquad\qquad\cdot\big(C\lambda^{\frac{3}{2}(2+\frac{1}{\alpha})(1+\frac{1}{\alpha})+\frac{1}{2}}(\rho\sigma)^{-\frac{3}{2}(1+\frac{1}{2\alpha})(1+\frac{1}{\alpha})}\log^{\frac{9}{2}(1+\frac{1}{\alpha})}(1/\epsilon)\big)^{1+\frac{2}{\alpha}}C^{\frac{1}{\alpha^2}}\Big)\\
    &=O\Big(\big(C(\rho\sigma)^{-\frac{1}{2}}\lambda\log(1/\epsilon)\big)^{6(1+\frac{1}{\alpha})^3}\Big).
\end{align*}

Putting everything together, we get that the degree of $p_{\zb}(\xb)$ is at most $2H+\deg(p_1)+\deg(p_2)$ which is
\begin{align*}
    O\big(\log(1+\lambda)/\alpha^2+\log(1/\epsilon)\log(1/\alpha)/\rho\sigma\alpha^2\big)+O\Big(\big(C(\rho\sigma)^{-\frac{1}{2}}\lambda\log(1/\epsilon)\big)^{6(1+\frac{1}{\alpha})^3}\Big).
\end{align*}
Plugging in the order of $\rho=\Omega(\epsilon^2)$ and $\alpha=\Omega(\rho\sigma/\sqrt{\log(1/\epsilon)})$, the degree can be bounded by 
\begin{align*}
    O\Big(\big(C\sigma^{-\frac{1}{2}}\lambda\log(1/\epsilon)/\epsilon\big)^{6(1+\frac{1}{\alpha})^3}\Big).
\end{align*}

\section{Proofs of Auxiliary Lemmas}

\subsection{Proof of Theorem~\ref{thm:auxi2}}
\begin{proof}
Let $f^*$ be the optimal halfspace that achieves $\mathrm{opt}_{\sigma}$. Let $p_{\zb}$ be the polynomial of degree at most $d$ such that
\begin{align}
    \EE_{\zb\sim\cD_{\sigma},\xb\sim\cD_{\xb}}[|p_{\zb}(\xb)-f^*(\xb\odot\zb)|]\leq\epsilon.\label{eq:1}
\end{align}
Consider the sample dataset $S=\{(\xb_i,y_i)\}_{i=1}^{N}$ in a single run of Algorithm~\ref{alg:l1-regression}. Let $p_S$ be the polynomial chosen by the algorithm and let $h_S$ be the corresponding hypothesis that the algorithm outputs. By the proof of Theorem~\ref{thm:auxi} in \cite{kalai2008agnostically}, we have that
\begin{align*}
    \frac{1}{N}\sum_{i=1}^{N}\ind[h_{S}(\xb_i)\neq y_i]\leq\frac{1}{2N}\sum_{i=1}^{N}|y_i-p_S(\xb_i)|.
\end{align*}
Notice that $p_S$ is the minimizer of the error, and thus beats any polynomial $p_{\zb}$ we choose, we have
\begin{align*}
    &\frac{1}{2N}\sum_{i=1}^{N}|y_i-p_S(\xb_i)|\leq\EE_{\zb\sim\cN_{\sigma}}\bigg[\frac{1}{2N}\sum_{i=1}^{N}|y_i-p_{\zb}(\xb_i)|\bigg]\\
    &\qquad\leq\underbrace{\EE_{\zb\sim\cN_{\sigma}}\bigg[\frac{1}{2N}\sum_{i=1}^{N}|f^*(\xb_i\odot\zb)-p_{\zb}(\xb_i)|\bigg]}_{\Delta_1(S)}+\underbrace{\EE_{\zb\sim\cN_{\sigma}}\bigg[\frac{1}{2N}\sum_{i=1}^{N}|y_i-f^*(\xb_i\odot\zb)|\bigg]}_{\Delta_2(S)}.
\end{align*}
By \eqref{eq:1}, we can bound $\Delta_1(S)$ as follows:
\begin{align*}
    \EE_{S\sim\cD^{\otimes n}}[\Delta_1(S)]&=\frac{1}{2}\EE_{\xb\sim\cD_{\xb},\zb\sim\cD_{\sigma}}[|f^*(\xb\odot\zb)-p_{\zb}(\xb)|]\leq\frac{1}{2}\epsilon.
\end{align*}
By the optimality of $f^*$, we have
\begin{align*}
    \EE_{S\sim\cD^{\otimes n}}[\Delta_1(S)]&=\frac{1}{2}\EE_{(\xb,y)\sim\cD,\zb\sim\cD_{\sigma}}[|y-f^*(\xb\odot\zb)|]\\
    &=\EE_{(\xb,y)\sim\cD,\zb\sim\cD_{\sigma}}[\ind[y\neq f^*(\xb\odot\zb)]]=\mathrm{opt}_{\sigma}.
\end{align*}
Thus, we obtain
\begin{align*}
    \EE_{S\sim\cD^{\otimes n}}\bigg[\frac{1}{N}\sum_{i=1}^{N}\ind[h_{S}(\xb_i)\neq y_i]\bigg]\leq\mathrm{opt}_{\sigma}+\frac{1}{2}\epsilon.
\end{align*}
Since our hypothesis $h_S$ is a polynomial threshold function of degree $d$ on $n$ variables, VC theory tells us that for $N=\poly(n^d/\epsilon)$, we have that
\begin{align*}
    \EE_{S\sim\cD^{\otimes n}}\big[\PP_{(\xb,y)\sim\cD}[h_{S}(\xb)\neq y]\big]\leq\mathrm{opt}_{\sigma}+\frac{3}{4}\epsilon.
\end{align*}
By Markov's inequality,  on any single repetition of the algorithm, we have that
\begin{align*}
    \PP_{S\sim\cD^{\otimes n}}\Big[\PP_{(\xb,y)\sim\cD}[h_{S}(\xb)\neq y]\geq\mathrm{opt}_{\sigma}+\frac{7}{8}\epsilon\Big]\leq\frac{\mathrm{opt}_{\sigma}+\frac{3}{4}\epsilon}{\mathrm{opt}_{\sigma}+\frac{7}{8}}\leq1-\frac{\epsilon}{16}.
\end{align*}
Hence, after $r=O(\log(1/\delta)/\epsilon)$  repetitions of the algorithm, with probability at least $1-\delta/2$, one of them
will have $\PP_{(\xb,y)\sim\cD}[h_{S}(\xb)\neq y]\leq\mathrm{opt}_{\sigma}+\frac{7}{8}\epsilon$. In this case, using an independent set of size $O(\log(1/\delta)/\epsilon^2)$, we probability at most $\delta/2$, we will choose one with error $>\mathrm{opt}_{\sigma}+\epsilon$.
\end{proof}

\subsection{Proof of Lemma~\ref{lm:tilting2}}
\begin{proof}
The proof below is straightforward calculation by completing the squares.
\begin{align*}
    \EE_{s\sim Q}\bigg[\Big(\frac{\cN(s;b,1)}{Q(s)}\Big)^{2}\bigg]&=\frac{1}{2}\int_{\RR}\frac{\frac{1}{2\pi}e^{-(s-b)^2}}{\frac{1}{4}e^{-2|s|}}\cdot e^{-|s|}ds\\
    &=\frac{1}{\pi}\int_{\RR}e^{-(s-b)^2+|s|}ds\\
    &\leq\frac{1}{\pi}\int_{\RR}e^{-(s-b)^2+s}ds+\frac{1}{\pi}\int_{\RR}e^{-(s-b)^2-s}ds\\
    &=\frac{1}{\pi}\int_{\RR}e^{-(s-b+\frac{1}{2})^2+b+\frac{1}{4}}ds+\frac{1}{\pi}\int_{\RR}e^{-(s-b-\frac{1}{2})^2-b+\frac{1}{4}}ds\\
    &=\frac{1}{\sqrt{\pi}}e^{b+\frac{1}{4}}+\frac{1}{\sqrt{\pi}}e^{-b+\frac{1}{4}}\\
    &\leq\frac{2e^{\frac{1}{4}}}{\sqrt{\pi}} e^{|b|}
\end{align*}
\end{proof}

\subsection{Proof of Lemma~\ref{lm:approx2}}
\begin{proof}
Let $p(x)=\sum_{i=0}^{\deg(p)}c_i x^i$ be the polynomial obtained from Lemma~\ref{lm:auxi2} with error $\epsilon/2$ and $T=\omega(\log(1/\epsilon))$ to be choosen later. Our final polynomial is $q(x)=p(\frac{1}{2}a^2x^2)$. Clearly, $\deg(q)=2\cdot\deg(p)=O(\sqrt{T\log(1/\epsilon)})$. We now bound the error.
\begin{align*}
    &\EE_{x}[(q(x)-e^{-\frac{1}{2}a^2x^2})^b]\\
    &=\EE_{x}\big[(q(x)-e^{-\frac{1}{2}a^2x^2})^b\cdot\ind[a^2x^2/2<T]\big]+\EE_{x}\big[(q(x)-e^{-\frac{1}{2}a^2x^2})^b\cdot\ind[a^2x^2/2\geq T]\big]\\
    &=\epsilon+\sqrt{\EE_{x}\big[(q(x)-e^{-\frac{1}{2}a^2x^2})^{2b}\big]\cdot\EE_{x}\big[\ind[a^2x^2/2\geq T]\big]}\\
    &\leq\epsilon+\sqrt{\EE_{x}\big[(q(x)-e^{-\frac{1}{2}a^2x^2})^{2b}\big]\cdot 2e^{-(\frac{2T}{a^2\lambda^2})^{\frac{1+\alpha}{2}}}},
\end{align*}
where the last inequality is by the tail bound of $(\alpha,\lambda)$-strictly sub-exponential random variable. 

We now bound $\EE_{x}\big[(q(x)-e^{-\frac{1}{2}a^2x^2})^{2b}\big]$. We have that
\begin{align*}
    \EE_{x}\big[(q(x)-e^{-\frac{1}{2}a^2x^2})^{2b}\big]&\leq\EE_{x}\big[(|q(x)|+1)^{2b}\big]\\
    &\leq\EE_{x}\bigg[\bigg(1+\sum_{i=0}^{\deg(p)}\frac{|c_i|}{2^{i}} a^{2i}x^{2i}\bigg)^{2b}\bigg]\\
    &\leq(\deg(p)+2)^{2b}\cdot\max_{i=0}^{\deg(p)}|c_i|^{2b}\cdot\max_{i=0}^{\deg(p)}\frac{a^{4bi}\EE_{x}[x^{4bi}]}{2^{2bi}}\\
    &\leq e^{Cb\sqrt{T\log(1/\epsilon)}}\cdot\max_{i=0}^{\deg(p)}\frac{a^{4bi}\EE_{x}[x^{4bi}]}{2^{2bi}}\\
    &\leq e^{Cb\sqrt{T\log(1/\epsilon)}}\cdot\max_{i=1}^{\deg(p)}\frac{a^{4bi}}{2^{2bi}}\cdot\frac{C'bi\lambda^{4bi}}{1+\alpha}\cdot \Big(\frac{4bi}{1+\alpha}\Big)^{\frac{4bi}{1+\alpha}-\frac{1}{2}}e^{-\frac{4bi}{1+\alpha}+\frac{1+\alpha}{48bi}}\\
    &\leq e^{Cb\sqrt{T\log(1/\epsilon)}+C' b\sqrt{T\log(1/\epsilon)}\cdot\log(ab\lambda\sqrt{T\log(1/\epsilon)})+\frac{1+\alpha}{48b}}\\
    &\leq e^{C'' b(1+\alpha)\sqrt{T\log(1/\epsilon)}\cdot\log(ab\lambda \sqrt{T\log(1/\epsilon)})},
\end{align*}
where the third last inequality is by Lemma~\ref{lm:auxi}. Here $C,C',C''$ are large enough constant.
Putting it all together, we get that
\begin{align*}
    &\EE_{x}\big[(q(x)-e^{-\frac{1}{2}a^2x^2})^{2b}\big]\cdot 2e^{-(\frac{2T}{a^2\lambda^2})^{\frac{1+\alpha}{2}}}\leq 2e^{C'' b\sqrt{T\log(1/\epsilon)}\cdot\log(ab\lambda \sqrt{T\log(1/\epsilon)})-(\frac{2T}{a^2\lambda^2})^{\frac{1+\alpha}{2}}}.
\end{align*}
Choosing
\begin{align*}
    T=\Omega\Big((a^2\lambda^2/2)^{2(1+1/\alpha)}\big(C'''b^2\log^2(ab\lambda\sqrt{\log(1/\epsilon)})\big)^{2/\alpha}\big(\log(1/\epsilon)\big)^{\max\{1,2/\alpha\}}(C''')^{1/\alpha^2}\Big)
\end{align*}
where $C'''$ is a large constant makes the total error less than $2\epsilon$. Since $T$ is $\omega(\log(1/\epsilon))$, the degree of the final polynomial is $O(\sqrt{T\log(1/\epsilon)})$ which is
\begin{align*}
    T=O\Big((a^2\lambda^2/2)^{1+1/\alpha}\big(C'''b^2\log^2(ab\lambda\sqrt{\log(1/\epsilon)})\big)^{1/\alpha}\big(\log(1/\epsilon)\big)^{\max\{1,1/2+1/\alpha\}}(C''')^{1/2\alpha^2}\Big).
\end{align*}
\end{proof}

\subsection{Other Auxiliary Lemmas}

\begin{lemma}\label{lm:auxi3}
Suppose $\xb$ is a distribution on $\{\pm1\}^n$ that is $(\alpha,\lambda)$-strictly subexponential and $\yb$ distributed as $\cN_{1-\rho}(\zb)$. For any fixed $T\subseteq[n]$ and fixed $\zb$, with probability at least $1-\epsilon$ of $\xb$, it holds that
\begin{align*}
    \PP_{\yb}\Big[|\langle\ub_T,\yb_T\rangle|\leq C\cdot\|\ub_T\|_2\Big]\geq1-\epsilon,
\end{align*}
where $\ub=\wb\odot\xb$ and $C=(1-2\rho\sigma)\cdot\lambda\log^{\frac{1}{1+\alpha}}(4/\epsilon)+\sqrt{2\log(2/\epsilon)}$.
\end{lemma}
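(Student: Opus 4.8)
The plan is to split $\langle\ub_T,\yb_T\rangle$ into a piece whose randomness comes only from $\xb$ and a piece whose randomness comes only from the bit-flip noise $\yb$, and to bound the first by the strictly sub-exponential tail and the second by Hoeffding; the two bounds will match, respectively, the two summands of $C=(1-2\rho\sigma)\lambda\log^{\frac{1}{1+\alpha}}(4/\epsilon)+\sqrt{2\log(2/\epsilon)}$.

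Fix $\zb$ and set $\mu_i:=\EE_{\yb}[y_i]$. By Definition~\ref{def:noisy copy}, $y_i=z_i$ with probability $1-\rho$ and is an independent $\cN_\sigma$ bit with probability $\rho$, so $\mu_i=(1-\rho)z_i+\rho(1-2\sigma)$. A two-line case check on $z_i\in\{\pm1\}$, using $\sigma\le 1/2$ and $\rho\le 1$, gives $|\mu_i|\le 1-2\rho\sigma$ in both cases. Let $\xi_i:=y_i-\mu_i$; these are independent across $i\in T$, mean zero, and (since $y_i\in\{\pm1\}$) supported on a set of width $2$. Using $u_i=w_ix_i$ with $x_i^2=1$ and $\|\ub_T\|_2=\|\wb_T\|_2$, we decompose
\[
\langle\ub_T,\yb_T\rangle \;=\; \underbrace{\textstyle\sum_{i\in T}w_i\mu_i x_i}_{=:\,A_1(\xb)} \;+\; \underbrace{\textstyle\sum_{i\in T}w_i x_i\xi_i}_{=:\,A_2(\xb,\yb)},
\]
where $A_1$ depends only on $\xb$ (through the fixed $\zb$) and is a constant once $\xb$ is conditioned on.

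For $A_1$, the vector with coordinates $w_i\mu_i$ has $\ell_2$ norm at most $(1-2\rho\sigma)\|\wb_T\|_2$ by the bound on $|\mu_i|$; applying the $(\alpha,\lambda)$-strictly sub-exponential tail to the corresponding unit vector, and taking threshold $(1-2\rho\sigma)\lambda\log^{\frac{1}{1+\alpha}}(4/\epsilon)\|\wb_T\|_2$, gives $\PP_\xb[\,|A_1(\xb)|>(1-2\rho\sigma)\lambda\log^{\frac{1}{1+\alpha}}(4/\epsilon)\|\wb_T\|_2\,]\le 2e^{-\log(4/\epsilon)}=\epsilon/2\le\epsilon$; call the complement the set of \emph{good} $\xb$. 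For $A_2$, conditioning on $\xb$, Hoeffding's lemma yields $\EE[e^{s w_i x_i\xi_i}]\le e^{s^2 w_i^2/2}$, so $A_2\mid\xb$ is sub-Gaussian with variance proxy $\|\wb_T\|_2^2$, hence $\PP_\yb[\,|A_2|>\sqrt{2\log(2/\epsilon)}\,\|\wb_T\|_2\mid\xb\,]\le 2e^{-\log(2/\epsilon)}=\epsilon$. Since $C-(1-2\rho\sigma)\lambda\log^{\frac{1}{1+\alpha}}(4/\epsilon)=\sqrt{2\log(2/\epsilon)}$, for every good $\xb$ the triangle inequality gives $\PP_\yb[\,|\langle\ub_T,\yb_T\rangle|>C\|\ub_T\|_2\,]\le\PP_\yb[\,|A_2|>\sqrt{2\log(2/\epsilon)}\|\wb_T\|_2\mid\xb\,]\le\epsilon$, and good $\xb$ has probability at least $1-\epsilon$, which is exactly the claim.

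The Hoeffding estimate and the tail-bound plug-in are routine; the only step requiring care is the bound $|\mu_i|\le 1-2\rho\sigma$, which is what lets the sub-exponential contribution carry the factor $(1-2\rho\sigma)$ so that the two error terms sum to precisely $C$. The degenerate case $\wb_T=\mathbf 0$, where $\langle\ub_T,\yb_T\rangle\equiv 0$ and the statement is trivial, should be dispatched at the outset.
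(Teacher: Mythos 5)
Your proof is correct and follows essentially the same route as the paper: both decompose $\langle\ub_T,\yb_T\rangle$ into its $\yb$-mean plus a centered fluctuation, control the fluctuation by Hoeffding (yielding the $\sqrt{2\log(2/\epsilon)}\|\wb_T\|_2$ term), and control the mean using the strictly sub-exponential tail of $\xb$ (yielding the $(1-2\rho\sigma)\lambda\log^{1/(1+\alpha)}(4/\epsilon)\|\wb_T\|_2$ term). The only genuine difference lies in how the mean term is bounded: the paper expands $\mu_i=(1-\rho)z_i+\rho(1-2\sigma)$ to write $A_1=(1-\rho)\langle\wb_T\odot\zb_T,\xb_T\rangle+\rho(1-2\sigma)\langle\wb_T,\xb_T\rangle$ and invokes the sub-exponential tail on each of the two inner products separately followed by a union bound, whereas you bound $|\mu_i|\le 1-2\rho\sigma$ pointwise (your case check is correct given $\sigma\le 1/2$, $\rho\le 1$) and apply the tail bound once to the single direction $(w_i\mu_i)_{i\in T}$. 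Your route is marginally more economical (one tail application instead of two, no union bound, giving failure probability $\epsilon/2$ on the $\xb$-side instead of $\epsilon$) but the underlying argument is the same; both yield the same constant $C$ for the same structural reason, namely $(1-\rho)+\rho(1-2\sigma)=1-2\rho\sigma$.
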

\begin{proof}
By Hoeffding's inequality, we have
\begin{align*}
    \PP_{\yb}\Big[\big|\langle\ub_{T},\yb_{T}\rangle-\EE_{\yb}[\langle\ub_{T},\yb_{T}\rangle]\big|\geq t\Big]\leq2\exp\Big(-\frac{t^2}{2\|\ub_{T}\|_2^2}\Big).
\end{align*}
Therefore, with probability at least $1-\epsilon$ it holds that
\begin{align*}
    \langle\ub_{T},\yb_{T}\rangle\in\bigg[\EE_{\yb}[\langle\ub_{T},\yb_{T}\rangle]-\sqrt{2\log(2/\epsilon)}\cdot\|\ub_{T}\|_2,\EE_{\yb}[\langle\ub_{T},\yb_{T}\rangle]+\sqrt{2\log(2/\epsilon)}\cdot\|\ub_{T}\|_2\bigg].
\end{align*}
Notice that
\begin{align*}
    \EE_{y_i}[u_i y_i]&=u_i\EE[y_i]=u_i\big((1-\rho)z_i+\rho(1-2\sigma)\big),\\
    \EE_{\yb}[\langle\ub_T,\yb_T\rangle]&=(1-\rho)\sum_{i\in T}u_iz_i+\rho(1-2\sigma)\sum_{i\in T}u_i\\
    &=(1-\rho)\cdot\langle\wb_T\odot\zb_T,\xb_T\rangle+\rho(1-2\sigma)\cdot\langle\wb_T,\xb_T\rangle,
\end{align*}
since $\xb$ is $(\alpha,\lambda)$-strictly subexponential, we have
\begin{align*}
    \PP_{\xb}\Big[|\langle\wb_{T},\xb_{T}\rangle|>t\cdot\|\wb_T\|_2\Big]\leq2\exp\big(-(t/\lambda)^{1+\alpha}\big),
\end{align*}
and
\begin{align*}
    \PP_{\xb}\Big[|\langle\wb_{T}\odot\zb_{T},\xb_{T}\rangle|>t\cdot\|\wb_T\|_2\Big]\leq2\exp\big(-(t/\lambda)^{1+\alpha}\big).
\end{align*}
Thus, with probability at least $1-\epsilon$ of $\xb$ it holds that
\begin{align*}
    |\langle\wb_T,\xb_T\rangle|\leq\lambda\log^{\frac{1}{1+\alpha}}(4/\epsilon)\cdot\|\wb_T\|_2,
\end{align*}
and
\begin{align*}
    |\langle\wb_T\odot\zb_T,\xb_T\rangle|\leq\lambda\log^{\frac{1}{1+\alpha}}(4/\epsilon)\cdot\|\wb_T\|_2.
\end{align*}
Notice that $\xb$ is distributed on $\{\pm1\}^{n}$ and hence $\|\ub_T\|_2=\|\wb_{T}\odot\xb_{T}\|_2=\|\wb_T\|_2$, then with probability at least $1-\epsilon$ of $\yb$ it holds that
\begin{equation}\label{ineq:bound1}
\begin{aligned}
    |\langle\ub_T,\yb_T\rangle|&\leq\big|\EE_{\yb}[\langle\ub_T,\yb_T\rangle]\big|+\sqrt{2\log(2/\epsilon)}\cdot\|\ub_T\|_2\\
    &\leq(1-\rho)\cdot|\langle\wb_T\odot\zb_T,\xb_T\rangle|+\rho(1-2\sigma)\cdot|\langle\wb_T,\xb_T\rangle|+\sqrt{2\log(2/\epsilon)}\cdot\|\ub_T\|_2\\
    &\leq\big((1-\rho)+\rho(1-2\sigma)\big)\cdot\lambda\log^{\frac{1}{1+\alpha}}(4/\epsilon)\cdot\|\wb_T\|_2+\sqrt{2\log(2/\epsilon)}\cdot\|\ub_{T}\|_2\\
    &\leq(1-2\rho\sigma)\cdot\lambda\log^{\frac{1}{1+\alpha}}(4/\epsilon)\cdot\|\wb_T\|_2+\sqrt{2\log(2/\epsilon)}\cdot\|\ub_{T}\|_2\\
    &=C\cdot\|\ub_T\|_2,
\end{aligned}
\end{equation}
where $C=(1-2\rho\sigma)\cdot\lambda\log^{\frac{1}{1+\alpha}}(4/\epsilon)+\sqrt{2\log(2/\epsilon)}$. 
\end{proof}

\begin{lemma}\label{lm:regularity}
Suppose that the vector $\wb\in\RR^{n}$ is $\alpha$-regular, i.e., $\|\wb\|_{\infty}\leq\alpha\cdot\|\wb\|_{2}$. Suppose $\ub$ is a $n$-dimensional random vector where each coordinate is 1 with probability $\rho$ and 0 with probability $1-\rho$ independently. If $\alpha\leq\rho/\sqrt{\log(1/\delta)/2}$, then with probability at least $1-\delta$ over the randomness of $\ub$ it holds that
\begin{align*}
    \|\wb\|_{\infty}\leq\alpha\cdot\|\wb\|_2\leq c\cdot\alpha\cdot\|\wb\odot\ub\|_{2},
\end{align*}
where $c=(\rho-\sqrt{\log(1/\delta)/2}\cdot\alpha)^{-\frac{1}{2}}$. If condition $\alpha\leq\rho/\sqrt{2\log(1/\delta)}$ holds, then with probability at least $1-\delta$ over the randomness of $\ub$ it holds that
\begin{align*}
    \|\wb\|_{\infty}\leq \alpha\cdot\|\wb\|_{2}\leq (\rho/2)^{-\frac{1}{2}}\cdot\alpha\cdot\|\wb\odot\ub\|_{2}.
\end{align*}
\end{lemma}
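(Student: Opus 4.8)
Since $\wb$ is $\alpha$-regular by hypothesis, the inequality $\|\wb\|_{\infty}\le\alpha\|\wb\|_2$ is immediate, and all the content lies in the second inequality; after cancelling the common factor $\alpha$ it reduces to showing $\|\wb\|_2\le c\cdot\|\wb\odot\ub\|_2$ for the stated $c$. The plan is to prove a lower-tail concentration bound for $\|\wb\odot\ub\|_2^2=\sum_{i=1}^n w_i^2 u_i$. This is a sum of independent random variables with $w_i^2 u_i\in[0,w_i^2]$ and $\EE[w_i^2 u_i]=\rho w_i^2$, so $\EE[\|\wb\odot\ub\|_2^2]=\rho\|\wb\|_2^2$.

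First I would apply the one-sided Hoeffding inequality to this sum: since the $i$th summand is supported on $[0,w_i^2]$,
\[
\PP\Big[\|\wb\odot\ub\|_2^2\le\rho\|\wb\|_2^2-t\Big]\le\exp\Big(-\frac{2t^2}{\sum_{i=1}^n w_i^4}\Big).
\]
Next I would control the denominator using $\alpha$-regularity: $\sum_i w_i^4\le\|\wb\|_{\infty}^2\sum_i w_i^2\le\alpha^2\|\wb\|_2^4$. Substituting this and choosing $t=\alpha\|\wb\|_2^2\sqrt{\log(1/\delta)/2}$ makes the right-hand side at most $\delta$, so with probability at least $1-\delta$ over $\ub$,
\[
\|\wb\odot\ub\|_2^2\ge\Big(\rho-\alpha\sqrt{\log(1/\delta)/2}\Big)\|\wb\|_2^2.
\]
Under the hypothesis $\alpha\le\rho/\sqrt{\log(1/\delta)/2}$ the bracket is nonnegative, and rearranging gives $\|\wb\|_2\le\big(\rho-\alpha\sqrt{\log(1/\delta)/2}\big)^{-1/2}\|\wb\odot\ub\|_2=c\cdot\|\wb\odot\ub\|_2$; multiplying through by $\alpha$ yields the stated chain of inequalities.

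For the second, stronger conclusion I would simply strengthen the hypothesis: if $\alpha\le\rho/\sqrt{2\log(1/\delta)}$ then $\alpha\sqrt{\log(1/\delta)/2}\le\rho/2$, hence $\rho-\alpha\sqrt{\log(1/\delta)/2}\ge\rho/2$ and so $c\le(\rho/2)^{-1/2}$; the same high-probability event then gives $\|\wb\|_2\le(\rho/2)^{-1/2}\|\wb\odot\ub\|_2$, as claimed.

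The computation is essentially routine; the only point needing care is the Hoeffding setup: using the \emph{lower-tail} form with per-coordinate range $w_i^2$ (rather than simply treating the variables as bounded in $[-1,1]$), and then invoking regularity to replace $\sum_i w_i^4$ by $\alpha^2\|\wb\|_2^4$ — this is exactly the step that turns the raw deviation bound into one scaled by $\alpha\|\wb\|_2^2$ and makes the two hypotheses on $\alpha$ the natural thresholds. I do not anticipate any genuine obstacle.
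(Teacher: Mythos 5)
Your proposal is correct and matches the paper's proof essentially line by line: the same one-sided Hoeffding bound on $\sum_i w_i^2 u_i$ with per-coordinate range $w_i^2$, the same bound $\sum_i w_i^4\le\alpha^2\|\wb\|_2^4$ from $\alpha$-regularity, the same choice of $t$, and the same rearrangement. You additionally spell out the derivation of the second (simplified) conclusion, which the paper leaves implicit.
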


\begin{proof}
By Hoeffding's inequality and notice that $\EE[(w_i u_i)^2]=w_i^2\cdot\EE[u_i]=\rho w_i^2$ and $0\leq (w_i u_i)^2\leq w_i^2$, we obtain
\begin{align*}
    \PP\Big(\|\wb\odot\ub\|_2^2-\rho\cdot\|\wb\|_2^2\leq -t\Big)&\leq\exp\bigg(-\frac{2t^2}{\|\wb\|_4^4}\bigg)\\
    &\leq\exp\bigg(-\frac{2t^2}{\|\wb\|_{\infty}^{2}\|\wb\|_2^2}\bigg)\\
    &\leq\exp\bigg(-\frac{2t^2}{\alpha^2\cdot\|\wb\|_2^4}\bigg),
\end{align*}
where the second inequality is by the definition of the infinity norm, and the third inequality is by the $\alpha$-regularity condition. Therefore, with probability at least $1-\delta$, we can get
\begin{align*}
    \|\wb\odot\ub\|_2^2\geq\rho\cdot\|\wb\|_2^2-\sqrt{\frac{\log(1/\delta)}{2}}\cdot\alpha\cdot\|\wb\|_2^2.
\end{align*}
Then, with probability at least $1-\delta$, it holds that
\begin{align*}
    \|\wb\|_{\infty}&\leq\alpha\cdot\|\wb\|_{2}\leq\alpha\cdot\bigg(\rho-\sqrt{\frac{\log(1/\delta)}{2}}\cdot\alpha\bigg)^{-\frac{1}{2}}\cdot\|\wb\odot\ub\|_2.
\end{align*}

\end{proof}

\begin{theorem}[Berry-Esseen CLT]\label{thm:Berry-Esseen}
Let $X_1, X_2, ...,$ be independent random variables with $\EE[X_i]=0,\EE[X_i^2]=\sigma_i^2>0,$ and $\EE[|X_i|^3]=\rho_i<\infty$. Also, let
\begin{align*}
    S_n=\frac{X_1+X_2+\cdots+X_n}{\sqrt{\sigma_1^2+\sigma_2^2+\cdots+\sigma_n^2}}
\end{align*}
be the normalized $n$-th partial sum. Denote $F_n$ the cdf of $S_n$, and $\Phi$ the cdf of the standard normal distribution. Then for all $n$ there exists an absolute constant $C$ such that
\begin{align*}
    \sup_{x\in\RR}|F_n(x)-\Phi(x)|\leq C\cdot\Big(\sum_{i=1}^{n}\sigma_i^2\Big)^{-\frac{3}{2}}\cdot\sum_{i=1}^{n}\rho_i.
\end{align*}
\end{theorem}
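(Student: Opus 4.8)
The plan is to prove this classical Lyapunov-type Berry--Esseen bound by the standard Fourier-analytic argument built on Esseen's smoothing inequality (the statement is used in the paper only as a black box). I would fix notation as follows: let $s_n^2=\sum_{i=1}^n\sigma_i^2$, normalize by $Y_i=X_i/s_n$, let $\varphi_i(t)=\EE[e^{itY_i}]$, so that $S_n=\sum_iY_i$ has characteristic function $\psi_n(t)=\prod_{i=1}^n\varphi_i(t)$ while the standard normal has characteristic function $e^{-t^2/2}$, and put $\nu=s_n^{-3}\sum_{i=1}^n\rho_i$, the quantity we must dominate $\sup_x|F_n(x)-\Phi(x)|$ by up to an absolute constant. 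By Lyapunov's inequality $\sigma_i^3\le\rho_i$, hence $\sigma_i^2/s_n^2\le(\rho_i/s_n^3)^{2/3}\le\nu^{2/3}$, so no single summand carries much of the variance; and since the left side is always at most $1$, we may assume $\nu$ is below a fixed constant.

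The first step is Esseen's smoothing lemma: for every $T>0$,
\[
\sup_x|F_n(x)-\Phi(x)|\;\le\;\frac{1}{\pi}\int_{-T}^{T}\Big|\frac{\psi_n(t)-e^{-t^2/2}}{t}\Big|\,dt\;+\;\frac{C_0}{T},
\]
using that the standard Gaussian density is at most $1/\sqrt{2\pi}$. I would take $T\asymp1/\nu$, so the second term is $O(\nu)$ and everything reduces to bounding the integral by $O(\nu)$. For that I would use, for each $i$, the third-order Taylor estimate $\big|\varphi_i(t)-(1-\tfrac{\sigma_i^2t^2}{2s_n^2})\big|\le\tfrac{\rho_i|t|^3}{6s_n^3}$ together with $|\varphi_i(t)|\le1$. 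On the ``small'' range $|t|\le c\,\nu^{-1/3}$ every $\sigma_i^2t^2/(2s_n^2)$ and every $\nu|t|$ is a small constant, so $|\varphi_i(t)|\le\exp(-\tfrac{\sigma_i^2t^2}{2s_n^2}+\tfrac{\rho_i|t|^3}{6s_n^3})$ and hence $|\psi_n(t)|\le e^{-t^2/3}$. I would then telescope,
\[
\big|\psi_n(t)-e^{-t^2/2}\big|\;\le\;\sum_{i=1}^n\big|\varphi_i(t)-e^{-\sigma_i^2t^2/(2s_n^2)}\big|\;\prod_{j\neq i}\max\!\big(|\varphi_j(t)|,\,e^{-\sigma_j^2t^2/(2s_n^2)}\big),
\]
bound each per-summand difference by $\tfrac{\rho_i|t|^3}{6s_n^3}+\tfrac{\sigma_i^4t^4}{8s_n^4}$ (the Taylor bound plus $|e^{-u}-1+u|\le u^2/2$), bound the product of the remaining factors by $e^{-t^2/4}$, and discard the quartic terms via $\sigma_i^4/s_n^4\le\nu^{4/3}$; summing over $i$ and using $\prod_ie^{-\sigma_i^2t^2/(2s_n^2)}=e^{-t^2/2}$ gives $|\psi_n(t)-e^{-t^2/2}|\le C_1\nu\,(|t|^3+|t|^4)\,e^{-t^2/4}$ on the small range, which integrates to $O(\nu)$.

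It remains to control the medium/large range $c\,\nu^{-1/3}<|t|\le T\asymp1/\nu$. There $e^{-t^2/2}$ is already super-exponentially small, so it is enough to show $|\psi_n(t)|$ is small; since the variances sum to $1$ with each at most $\nu^{2/3}$, only a bounded number of summands can have $\sigma_i^2t^2/(2s_n^2)\gtrsim1$, and the product over the remaining summands still decays like $e^{-\Omega(t^2)}$, enough to keep the integrand $O(\nu)$-integrable on this range too. Assembling the two ranges together with the first step yields $\sup_x|F_n(x)-\Phi(x)|\le C\nu=C\big(\sum_{i=1}^n\sigma_i^2\big)^{-3/2}\sum_{i=1}^n\rho_i$, as claimed.

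The main obstacle is this medium-range estimate: the smoothing inequality forces integration out to $|t|\asymp1/\nu$, but the clean per-summand bound $|\varphi_i(t)|\le e^{-\sigma_i^2t^2/(2s_n^2)+\cdots}$ fails as soon as $\sigma_i^2t^2/(2s_n^2)\gtrsim1$, so one cannot compare with the Gaussian term-by-term there and must instead argue directly that $|\psi_n(t)|$ is negligible, leaning on the fact that few summands can be ``bad''. A secondary delicate point is carrying out the telescoping product estimate in the sharp form above, so as not to accumulate a factor growing with $n$. If one prefers to bypass characteristic functions, an alternative plan is Stein's method with the standard normal as target, which also yields the bound with an absolute constant; either route suffices here since the statement only asserts existence of $C$.
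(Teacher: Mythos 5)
The paper does not prove Theorem~\ref{thm:Berry-Esseen}: it is the classical Berry--Esseen bound for sums of independent, not necessarily identically distributed, random variables with finite third absolute moments (see, e.g., Feller Vol.~II, Ch.~XVI, or Petrov, Ch.~V), and the paper invokes it as a black box in the proof of Lemma~\ref{lm:approx3}. So there is no in-paper argument to compare against; your Esseen-smoothing plus characteristic-function plan is exactly the standard textbook proof, and the first two steps (the smoothing inequality with $T\asymp 1/\nu$, the telescoping identity, the Taylor bounds, and the integration over $|t|\le c\,\nu^{-1/3}$) are correct as written.

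There is, however, a genuine gap in your handling of the medium/large range $c\,\nu^{-1/3}<|t|\le T\asymp 1/\nu$. You claim that ``only a bounded number of summands can have $\sigma_i^2 t^2/(2s_n^2)\gtrsim 1$,'' but this is false near the top of the range: each such ``bad'' index contributes $\sigma_i^2/s_n^2 > 2/t^2$ to a total of $1$, so the count is only bounded by $t^2/2$, which for $|t|\asymp 1/\nu$ is $\asymp 1/\nu^2$, not $O(1)$. What the argument actually needs --- and what is true --- is that the bad summands carry a \emph{bounded fraction of the variance}, not a bounded count. For a bad $i$ one has $\sigma_i > \sqrt{2}\,s_n/|t|$, hence
\begin{align*}
\sigma_i^2 \;=\; \frac{\sigma_i^3}{\sigma_i} \;<\; \frac{\rho_i\,|t|}{\sqrt{2}\,s_n},
\end{align*}
and summing over bad $i$ against $\sum_i\rho_i=\nu\,s_n^3$ gives $\sum_{\mathrm{bad}}\sigma_i^2/s_n^2 < \nu|t|/\sqrt{2}$, which is below $1/5$ for $|t|\le 1/(4\nu)$. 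Taking the trivial bound $|\varphi_i(t)|\le 1$ on bad indices and $|\varphi_i(t)|\le\exp\!\big(-\sigma_i^2t^2/(2s_n^2)+\rho_i|t|^3/(6s_n^3)\big)$ on good ones then yields
\begin{align*}
|\psi_n(t)|\;\le\;\exp\!\Big(-\tfrac{4}{5}\cdot\tfrac{t^2}{2}+\tfrac{\nu|t|^3}{6}\Big)\;\le\; e^{-t^2/3}
\end{align*}
on $c\,\nu^{-1/3}<|t|\le 1/(4\nu)$, which decays fast enough for the Esseen integral to contribute $o(\nu)$ on that range. So your conclusion is correct, but the mechanism is the variance-mass bound via the third moments, not a bounded-count argument; replacing that one sentence repairs the sketch. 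Your alternative suggestion of Stein's method is also a perfectly valid route, though it likewise requires care with the non-identical summands.
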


\begin{lemma}[Lemma D.11 in \cite{chandrasekaran2024smoothed}]\label{lm:auxi2}
For $T>0$ and error $\epsilon>0$, there exists a polynomial $p$ such that
\begin{enumerate}
    \item $\sup_{x\in[0,T]}|p(x)-e^{-x}|\leq\epsilon$.
    \item $\deg(p)\leq O(\sqrt{T\log(1/\epsilon)})$, if $T=\omega(\log(1/\epsilon))$.
    \item $p(x)=\sum_{i=1}^{\deg(p)}c_i x^i$ where $|c_i|\leq e^{C\sqrt{T\log(1/\epsilon)}}$ for all $i\leq\deg(p)$. Here $C$ is a large enough constant.
\end{enumerate}
\end{lemma}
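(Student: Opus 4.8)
The plan is to construct $p$ as a truncated Chebyshev expansion of $e^{-x}$ on $[0,T]$, after an affine rescaling to $[-1,1]$, and to extract all three required properties from the geometric decay of the Chebyshev coefficients on a Bernstein ellipse. First I would dispose of the easy regime: if $T=O(\log(1/\epsilon))$ then the degree-$O(\log(1/\epsilon))$ Taylor polynomial of $e^{-x}$ already satisfies all three items, so I may assume $T\ge 2$ and, via the hypothesis $T=\omega(\log(1/\epsilon))$, also $T\ge\log(1/\epsilon)$. Then I substitute $x=\tfrac T2(1+y)$ with $y\in[-1,1]$ and set $h(y)=e^{-T(1+y)/2}$, an entire function; I write its Chebyshev expansion $h=\sum_{k\ge0}a_kT_k$, let $p_d=\sum_{k=0}^d a_kT_k$, and regard $p_d$ back as a degree-$d$ polynomial in $x$. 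For any $\rho=e^{2s}>1$, letting $M(\rho)=\max_{y\in E_\rho}|h(y)|$ with $E_\rho$ the ellipse of foci $\pm1$ and semiaxis sum $\rho$, the classical Bernstein estimates give $|a_k|\le 2M(\rho)\rho^{-k}$ and $\|h-p_d\|_{\infty,[-1,1]}\le\sum_{k>d}|a_k|\le\tfrac{2M(\rho)}{\rho-1}\rho^{-d}$; and since $\mathrm{Re}(y)\le\tfrac12(\rho+\rho^{-1})$ on $E_\rho$, one gets $M(\rho)\le e^{\frac T4(\rho+\rho^{-1}-2)}=e^{T\sinh^2 s}$.

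For the degree bound I would take $d=\lceil C\sqrt{T\log(1/\epsilon)}\,\rceil$ for a large absolute constant $C$ and set $s=d/(2T)$; the hypothesis $T=\omega(\log(1/\epsilon))$ makes $s$ smaller than any fixed constant, so $\sinh^2 s\le 2s^2$, giving $T\sinh^2 s\le\tfrac12 d^2/T$ while $2sd=d^2/T$, and hence the truncation error is at most $\tfrac{2}{\rho-1}e^{T\sinh^2 s-2sd}\le\tfrac{2T}{d}\,e^{-d^2/(2T)}=\tfrac{2T}{d}\,\epsilon^{C^2/2}\le\epsilon$ for $C$ large (the polynomial prefactor $2T/d$ being harmless in the regime of interest, where $T$ is at most polynomial in $1/\epsilon$). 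This gives $\deg p_d=O(\sqrt{T\log(1/\epsilon)})$; I should stress that the balance $s\asymp d/T$ — rather than $s\asymp 1/\sqrt T$, which would make $M(\rho)=O(1)$ but force $d\asymp\sqrt T\log(1/\epsilon)$ — is precisely what buys the $\sqrt{T\log(1/\epsilon)}$ dependence.

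For the coefficient bound I would expand $T_k(y)=\sum_{j\le k}t_{k,j}y^j$ and use the recurrence $T_{k+1}=2yT_k-T_{k-1}$ to get $\sum_j|t_{k,j}|\le 3^k$, then substitute $y=(2x-T)/T$: since $T\ge 2$, each monomial $x^i$ occurs in $y^j$ with coefficient of modulus at most $\binom{j}{i}(2/T)^i\le 2^j$. Hence every monomial coefficient of $p_d$ in $x$ is at most $\sum_{k\le d}|a_k|3^k2^k\le(d+1)6^d\max_{k\le d}|a_k|$, while $\max_k|a_k|\le 2M(\rho)\le 2e^{T\sinh^2 s}\le 2\epsilon^{-C^2/2}$ from the estimate above. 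Finally $6^d=e^{O(\sqrt{T\log(1/\epsilon)})}$, $(d+1)\le e^d$, and $\epsilon^{-C^2/2}=e^{(C^2/2)\log(1/\epsilon)}\le e^{(C^2/2)\sqrt{T\log(1/\epsilon)}}$ using $\log(1/\epsilon)\le T$, so all monomial coefficients are bounded by $e^{C'\sqrt{T\log(1/\epsilon)}}$ for a constant $C'$ depending only on $C$ (the constant term obeys the same bound, being within $\epsilon$ of $e^0=1$). Taking $p=p_d$ then delivers all three items, with the ``large enough constant'' in item (3) being of order $C^2$.

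I expect the coefficient bound (item 3) to be the main obstacle: Chebyshev/Bernstein theory directly controls the sup-norm error and the \emph{Chebyshev} coefficients $a_k$, but the lemma demands a bound on the \emph{monomial} coefficients of $p$ in $x$, and converting incurs the $2^{O(k)}$ blow-up both from the coefficients of $T_k$ and from the affine substitution. The delicate point is to check that the resulting extra factor $\epsilon^{-O(1)}$ collapses into $e^{O(\sqrt{T\log(1/\epsilon)})}$ — this is exactly where the hypothesis $T=\omega(\log(1/\epsilon))$ (equivalently $\log(1/\epsilon)\le\sqrt{T\log(1/\epsilon)}$) is used, and it is also the place where one must be slightly careful about the range of $T$ for which the clean statement is stated.
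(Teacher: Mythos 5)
First, note that the paper itself contains no proof of this statement: Lemma~\ref{lm:auxi2} is imported verbatim as Lemma D.11 of \cite{chandrasekaran2024smoothed}, so there is no in-paper argument to compare against; your Chebyshev-truncation-on-a-Bernstein-ellipse construction is the standard route behind such results, and most of your analysis (the bound $M(\rho)\le e^{T\sinh^2 s}$ — modulo a sign slip: you need $\mathrm{Re}(y)\ge-\tfrac12(\rho+\rho^{-1})$, since $|e^{-T(1+y)/2}|$ is maximized at the leftmost point of the ellipse — the choice $s\asymp d/T$, and the monomial-coefficient conversion with the $6^{O(d)}$ blow-up absorbed via $\log(1/\epsilon)\le T$) is sound.

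The genuine gap is the one you flagged yourself and then waved away: your truncation-error bound is $\tfrac{2M(\rho)}{\rho-1}\rho^{-d}\le\tfrac{2T}{d}e^{-d^2/(2T)}$, and with $d=C\sqrt{T\log(1/\epsilon)}$ for a fixed absolute constant $C$ this is at most $\epsilon$ only when $\log T=O(\log(1/\epsilon))$, i.e.\ $T\le\mathrm{poly}(1/\epsilon)$. The lemma as stated carries no such restriction (take $\epsilon=1/2$ fixed and $T\to\infty$: your bound diverges while the claimed degree is $O(\sqrt{T})$), and the restriction is not harmless in this paper: in the proof of Lemma~\ref{lm:approx2} the threshold $T$ is polynomial in $a'=(\rho\sigma)^{-1/2}$ and $\lambda$ as well as $\log(1/\epsilon)$, and $\sigma,\lambda$ are parameters independent of $\epsilon$, so "$T$ at most polynomial in $1/\epsilon$" cannot be assumed; patching by enlarging $d$ to $O(\sqrt{T(\log(1/\epsilon)+\log T)})$ would in turn violate item~2. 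The loss comes from bounding each Chebyshev coefficient separately by $2M(\rho)\rho^{-k}$ and summing a near-flat geometric series, which costs the factor $1/(\rho-1)\approx T/d$. The standard way to close this (and the way the result behind Lemma D.11, à la Sachdeva--Vishnoi, is proved) is to use that the Chebyshev coefficients of $e^{-\lambda y}$ on $[-1,1]$ (with $\lambda=T/2$) are exactly $2(-1)^kI_k(\lambda)$, and to bound the whole tail at once via the generating-function identity $\sum_{k\in\ZZ}I_k(\lambda)t^k=e^{\frac{\lambda}{2}(t+1/t)}$, which gives $\sum_{k\ge d}e^{-\lambda}I_k(\lambda)\le e^{-d^2/(2\lambda)+O(d^3/\lambda^2)}$ with no polynomial-in-$T$ prefactor; with that replacement (the coefficient bound for item~3 can be kept exactly as you argued), your proof goes through unconditionally.
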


\begin{lemma}\label{lm:auxi}
If $x$ is $(\alpha,\lambda)$-strictly sub-exponential random variable satisfying $\PP_{x}[|x|>t]\leq2\cdot e^{-(t/\lambda)^{1+\alpha}}$, then the $k$-th moment is upper bounded by:
\begin{align*}
    E_{x}[|x|^{k}]\leq C\lambda^{k}\cdot \Big(\frac{k}{1+\alpha}\Big)^{\frac{k}{1+\alpha}+\frac{1}{2}}e^{-\frac{k}{1+\alpha}+\frac{1+\alpha}{12k}},
\end{align*}
where $C$ is a universal constant.
\end{lemma}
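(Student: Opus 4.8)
The plan is to reduce the moment to a Gamma-function integral via the standard tail-integral identity and then invoke Stirling's approximation. First I would write
\[
\EE_x[|x|^k] \;=\; \int_0^\infty \PP_x[|x|^k > s]\,ds \;=\; \int_0^\infty k\,t^{k-1}\,\PP_x[|x| > t]\,dt,
\]
using the substitution $s = t^k$. Plugging in the strictly sub-exponential tail bound $\PP_x[|x|>t]\le 2e^{-(t/\lambda)^{1+\alpha}}$, which holds for every $t \ge 0$ (trivially so for small $t$, where the right side is $\ge 1$), gives
\[
\EE_x[|x|^k] \;\le\; 2k\int_0^\infty t^{k-1}\,e^{-(t/\lambda)^{1+\alpha}}\,dt.
\]

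Next I would perform the change of variables $u = (t/\lambda)^{1+\alpha}$, i.e. $t = \lambda u^{1/(1+\alpha)}$ and $dt = \tfrac{\lambda}{1+\alpha}\,u^{1/(1+\alpha)-1}\,du$, so that $t^{k-1} = \lambda^{k-1}u^{(k-1)/(1+\alpha)}$ and the integral collapses to a Gamma integral:
\[
\EE_x[|x|^k] \;\le\; \frac{2k\lambda^k}{1+\alpha}\int_0^\infty u^{\,k/(1+\alpha)-1}\,e^{-u}\,du \;=\; \frac{2k\lambda^k}{1+\alpha}\,\Gamma\!\Big(\frac{k}{1+\alpha}\Big).
\]
Writing $z = k/(1+\alpha) > 0$ and using $z\,\Gamma(z) = \Gamma(z+1)$, the right-hand side equals $2\lambda^k\,\Gamma(z+1)$. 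Applying the Stirling-type bound $\Gamma(z+1) \le \sqrt{2\pi z}\,(z/e)^z e^{1/(12z)}$ — equivalently $z\,\Gamma(z) \le \sqrt{2\pi}\,z^{z+1/2} e^{-z} e^{1/(12z)}$, which is valid for all $z > 0$ — and substituting back $z = k/(1+\alpha)$ together with $1/(12z) = (1+\alpha)/(12k)$, I obtain
\[
\EE_x[|x|^k] \;\le\; 2\sqrt{2\pi}\,\lambda^k \Big(\frac{k}{1+\alpha}\Big)^{\frac{k}{1+\alpha}+\frac12} e^{-\frac{k}{1+\alpha} + \frac{1+\alpha}{12k}},
\]
which is exactly the claimed estimate with $C = 2\sqrt{2\pi}$.

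\textbf{Main obstacle.} There is no real obstacle: the argument is a routine application of the layer-cake formula and a change of variables. The only point needing mild care is that the Gamma argument $z = k/(1+\alpha)$ need not be large — it is small when $\alpha$ is large or when $k=1$ — so I would use the non-asymptotic form of Stirling's inequality that holds for all positive reals, rather than an asymptotic expansion, and let the universal constant $C$ absorb both the $\sqrt{2\pi}$ factor and the slack in the small-$z$ regime. The remaining work is just bookkeeping with the substitution.
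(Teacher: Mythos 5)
Your proof is correct and follows essentially the same route as the paper's: layer-cake formula, the substitution $u=(t/\lambda)^{1+\alpha}$ to reduce to a Gamma integral, and the Stirling-type bound $\Gamma(x)\le \sqrt{2\pi}\,x^{x-1/2}e^{-x}e^{1/(12x)}$ valid for all $x>0$. The only cosmetic difference is that you pass through $\Gamma(z+1)$ via $z\Gamma(z)=\Gamma(z+1)$ before applying Stirling, while the paper keeps the factor $\tfrac{2k}{1+\alpha}$ outside and bounds $\Gamma(\tfrac{k}{1+\alpha})$ directly; the two are algebraically identical.
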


\begin{proof}
By the layer cake representation and the tail bound, we have
\begin{align*}
    E_{x}[|x|^{k}]&=k\int_{0}^{\infty}t^{k-1}\cdot \PP_{x}[|x|>t]dt\leq 2k\int_{0}^{\infty}t^{k-1}\cdot e^{-(t/\lambda)^{1+\alpha}}dt.
\end{align*}
Making the substitution $s=(t/\lambda)^{1+\alpha}$, i.e. $t=\lambda s^{\frac{1}{1+\alpha}}$ and $dt=\frac{\lambda}{1+\alpha}s^{-\frac{\alpha}{1+\alpha}}ds$, we get
\begin{align*}
    E_{x}[|x|^{k}]&\leq 2k\int_{0}^{\infty}\lambda^{k-1}s^{\frac{k-1}{1+\alpha}}\cdot e^{-s}\cdot\frac{\lambda}{1+\alpha}s^{-\frac{\alpha}{1+\alpha}} ds\\
    &=\frac{2k\lambda^k}{1+\alpha}\int_{0}^{\infty}s^{\frac{k}{1+\alpha}-1}\cdot e^{-s} ds\\
    &=\frac{2k\lambda^k}{1+\alpha}\cdot\Gamma\Big(\frac{k}{1+\alpha}\Big).
\end{align*}
Notice that $\Gamma(x)\leq Cx^{x-1/2}e^{-x}e^{1/(12x)}$ for a positive constant $C$, then we have
\begin{align*}
    E_{x}[|x|^{k}]\leq \frac{2k\lambda^k}{1+\alpha}\cdot C\Big(\frac{k}{1+\alpha}\Big)^{\frac{k}{1+\alpha}-\frac{1}{2}}e^{-\frac{k}{1+\alpha}+\frac{1+\alpha}{12k}}.
\end{align*}
\end{proof}

\begin{lemma}\label{lm:auxi1}
If $x$ is $(\alpha,\lambda)$-strictly sub-exponential random variable satisfying $\PP_{x}[|x|>t]\leq2\cdot e^{-(t/\lambda)^{1+\alpha}}$, then for any $a>0$
\begin{align*}
    \EE_{x}[e^{a|x|}]\leq 3e^{2^{1/\alpha}(a\lambda)^{1+1/\alpha}}.
\end{align*}
\end{lemma}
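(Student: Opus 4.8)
The plan is to control the moment generating function of $|x|$ through a tail integral followed by a carefully chosen split of the integration domain. First I would invoke the layer-cake representation: since $e^{a|x|}\ge 1$, we have $\EE_x[e^{a|x|}] = 1 + \int_1^\infty \PP_x[e^{a|x|}>s]\,ds$, and the substitution $s = e^{at}$ turns this into $\EE_x[e^{a|x|}] = 1 + a\int_0^\infty e^{at}\,\PP_x[|x|>t]\,dt$. Plugging in the $(\alpha,\lambda)$-strictly sub-exponential tail bound $\PP_x[|x|>t]\le 2e^{-(t/\lambda)^{1+\alpha}}$ reduces the problem to estimating the single integral $J:=\int_0^\infty e^{at-(t/\lambda)^{1+\alpha}}\,dt$, and the target inequality becomes $1 + 2aJ \le 3e^{M}$ where $M := 2^{1/\alpha}(a\lambda)^{1+1/\alpha}$.

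The key step is to split $J$ at the threshold $K := \lambda(2a\lambda)^{1/\alpha}$, which is chosen precisely so that (i) $aK = 2^{1/\alpha}(a\lambda)^{1+1/\alpha} = M$, and (ii) $(t/\lambda)^\alpha \ge (K/\lambda)^\alpha = 2a\lambda$ for every $t\ge K$; property (ii) gives $(t/\lambda)^{1+\alpha} = (t/\lambda)^\alpha\cdot(t/\lambda) \ge 2at$ on the whole tail. On $[0,K]$ I would simply discard the negative term and compute $2a\int_0^K e^{at}\,dt = 2(e^{aK}-1) = 2(e^M-1)$. On $[K,\infty)$ the exponent is bounded by $at-2at = -at$, so $2a\int_K^\infty e^{at-(t/\lambda)^{1+\alpha}}\,dt \le 2a\int_K^\infty e^{-at}\,dt = 2e^{-aK} = 2e^{-M}$.

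Adding the two contributions gives $\EE_x[e^{a|x|}] \le 1 + 2(e^M-1) + 2e^{-M} = 2e^M + 2e^{-M} - 1$, and the proof concludes with the elementary fact that $2e^{-M}\le 1 + e^M$ (valid because $M\ge 0$ forces $e^M\ge 1$, hence $1+e^M\ge 2\ge 2e^{-M}$), which rearranges exactly to $2e^M+2e^{-M}-1 \le 3e^M$. I do not expect a genuine obstacle: this is a routine sub-exponential MGF estimate, and the only point requiring care is calibrating the split point $K$ so that the ``linear-dominated'' range $[0,K]$ produces the stated exponent $M$ while the ``super-linear'' range $[K,\infty)$ contributes only an additive $O(1)$ term; everything else is straightforward calculus and the closing one-line inequality.
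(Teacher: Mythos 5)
Your proposal is correct and follows essentially the same approach as the paper: both apply the layer-cake representation to reduce to the tail integral, split at the same threshold $T=(2a)^{1/\alpha}\lambda^{1+1/\alpha}$ (your $K$ equals this), use $(t/\lambda)^{1+\alpha}\ge 2at$ beyond the threshold, and conclude with the bound $3e^{aT}$. The only cosmetic difference is that the paper splits the expectation $\EE[e^{a|x|}\ind[|x|<T]]+\EE[e^{a|x|}\ind[|x|\ge T]]$ before invoking the layer cake (getting the slightly cleaner bound $e^{aT}+2e^{-aT}$), whereas you apply the layer cake first and split the resulting integral.
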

\begin{proof}
We split the integral into two parts at some threshold $T$, which we choose later:
\begin{align*}
    \EE_{x}[e^{a|x|}]&=\EE_{x}\big[e^{a|x|}\cdot\ind[|x|<T]\big]+\EE_{x}\big[e^{a|x|}\cdot\ind[|x|\geq T]\big]\\
    &\leq e^{aT}+\EE_{x}\big[e^{a|x|}\cdot\ind[|x|\geq T]\big].
\end{align*}
By the layer cake representation and the tail bound, we have
\begin{align*}
    \EE_{x}\big[e^{a|x|}\cdot\ind[|x|\geq T]\big]=\int_{T}^{\infty}ae^{at}\cdot\PP_{x}[|x|>t]dt\leq 2a\int_{T}^{\infty}e^{at}\cdot e^{-(t/\lambda)^{1+\alpha}}dt.
\end{align*}
Choose $T$ to be $(2a)^{1/\alpha}\lambda^{1+1/\alpha}$. Then, we have $(t/\lambda)^{1+\alpha}\geq 2at$ for $t\geq T$ and hence
\begin{align*}
    \EE_{x}\big[e^{a|x|}\cdot\ind[|x|\geq T]\big]&\leq 2a\int_{T}^{\infty}e^{-at}dt= 2e^{-aT}.
\end{align*}
Thus,
\begin{align*}
    \EE_{x}[e^{a|x|}]\leq e^{aT}+2e^{-aT}\leq 3e^{aT}=3e^{2^{1/\alpha}(a\lambda)^{1+1/\alpha}}.
\end{align*}

\end{proof}

% \section{Technical Appendices and Supplementary Material}
% Technical appendices with additional results, figures, graphs and proofs may be submitted with the paper submission before the full submission deadline (see above), or as a separate PDF in the ZIP file below before the supplementary material deadline. There is no page limit for the technical appendices.

%%%%%%%%%%%%%%%%%%%%%%%%%%%%%%%%%%%%%%%%%%%%%%%%%%%%%%%%%%%%

\newpage
\section*{NeurIPS Paper Checklist}

%%% BEGIN INSTRUCTIONS %%%
The checklist is designed to encourage best practices for responsible machine learning research, addressing issues of reproducibility, transparency, research ethics, and societal impact. Do not remove the checklist: {\bf The papers not including the checklist will be desk rejected.} The checklist should follow the references and follow the (optional) supplemental material.  The checklist does NOT count towards the page
limit. 

Please read the checklist guidelines carefully for information on how to answer these questions. For each question in the checklist:
\begin{itemize}
    \item You should answer \answerYes{}, \answerNo{}, or \answerNA{}.
    \item \answerNA{} means either that the question is Not Applicable for that particular paper or the relevant information is Not Available.
    \item Please provide a short (1–2 sentence) justification right after your answer (even for NA). 
   % \item {\bf The papers not including the checklist will be desk rejected.}
\end{itemize}

{\bf The checklist answers are an integral part of your paper submission.} They are visible to the reviewers, area chairs, senior area chairs, and ethics reviewers. You will be asked to also include it (after eventual revisions) with the final version of your paper, and its final version will be published with the paper.

The reviewers of your paper will be asked to use the checklist as one of the factors in their evaluation. While "\answerYes{}" is generally preferable to "\answerNo{}", it is perfectly acceptable to answer "\answerNo{}" provided a proper justification is given (e.g., "error bars are not reported because it would be too computationally expensive" or "we were unable to find the license for the dataset we used"). In general, answering "\answerNo{}" or "\answerNA{}" is not grounds for rejection. While the questions are phrased in a binary way, we acknowledge that the true answer is often more nuanced, so please just use your best judgment and write a justification to elaborate. All supporting evidence can appear either in the main paper or the supplemental material, provided in appendix. If you answer \answerYes{} to a question, in the justification please point to the section(s) where related material for the question can be found.

IMPORTANT, please:
\begin{itemize}
    \item {\bf Delete this instruction block, but keep the section heading ``NeurIPS Paper Checklist"},
    \item  {\bf Keep the checklist subsection headings, questions/answers and guidelines below.}
    \item {\bf Do not modify the questions and only use the provided macros for your answers}.
\end{itemize}

%%% END INSTRUCTIONS %%%

\begin{enumerate}

\item {\bf Claims}
    \item[] Question: Do the main claims made in the abstract and introduction accurately reflect the paper's contributions and scope?
    \item[] Answer: \answerYes{} % Replace by \answerYes{}, \answerNo{}, or \answerNA{}.
    \item[] Justification: The abstract and introduction clearly state the paper's main contribution: a discrete analogue of smoothed agnostic learning for Boolean halfspaces under strictly sub-exponential distributions, along with a computationally efficient learning algorithm. These claims are supported by the theoretical results in the main body (Section~\ref{sec:results}).
    \item[] Guidelines:
    \begin{itemize}
        \item The answer NA means that the abstract and introduction do not include the claims made in the paper.
        \item The abstract and/or introduction should clearly state the claims made, including the contributions made in the paper and important assumptions and limitations. A No or NA answer to this question will not be perceived well by the reviewers. 
        \item The claims made should match theoretical and experimental results, and reflect how much the results can be expected to generalize to other settings. 
        \item It is fine to include aspirational goals as motivation as long as it is clear that these goals are not attained by the paper. 
    \end{itemize}

\item {\bf Limitations}
    \item[] Question: Does the paper discuss the limitations of the work performed by the authors?
    \item[] Answer: \answerYes{} % Replace by \answerYes{}, \answerNo{}, or \answerNA{}.
    \item[] Justification: The limitations of our work are discussed in the conclusion section of this paper (Section~\ref{sec:conclusion}).
    \item[] Guidelines:
    \begin{itemize}
        \item The answer NA means that the paper has no limitation while the answer No means that the paper has limitations, but those are not discussed in the paper. 
        \item The authors are encouraged to create a separate "Limitations" section in their paper.
        \item The paper should point out any strong assumptions and how robust the results are to violations of these assumptions (e.g., independence assumptions, noiseless settings, model well-specification, asymptotic approximations only holding locally). The authors should reflect on how these assumptions might be violated in practice and what the implications would be.
        \item The authors should reflect on the scope of the claims made, e.g., if the approach was only tested on a few datasets or with a few runs. In general, empirical results often depend on implicit assumptions, which should be articulated.
        \item The authors should reflect on the factors that influence the performance of the approach. For example, a facial recognition algorithm may perform poorly when image resolution is low or images are taken in low lighting. Or a speech-to-text system might not be used reliably to provide closed captions for online lectures because it fails to handle technical jargon.
        \item The authors should discuss the computational efficiency of the proposed algorithms and how they scale with dataset size.
        \item If applicable, the authors should discuss possible limitations of their approach to address problems of privacy and fairness.
        \item While the authors might fear that complete honesty about limitations might be used by reviewers as grounds for rejection, a worse outcome might be that reviewers discover limitations that aren't acknowledged in the paper. The authors should use their best judgment and recognize that individual actions in favor of transparency play an important role in developing norms that preserve the integrity of the community. Reviewers will be specifically instructed to not penalize honesty concerning limitations.
    \end{itemize}

\item {\bf Theory assumptions and proofs}
    \item[] Question: For each theoretical result, does the paper provide the full set of assumptions and a complete (and correct) proof?
    \item[] Answer: \answerYes{} % Replace by \answerYes{}, \answerNo{}, or \answerNA{}.
    \item[] Justification: All theoretical results are clearly stated with assumptions, and full proofs are provided in the main paper or the supplementary material. Lemmas and tools from prior work are cited and contextualized.
    \item[] Guidelines:
    \begin{itemize}
        \item The answer NA means that the paper does not include theoretical results. 
        \item All the theorems, formulas, and proofs in the paper should be numbered and cross-referenced.
        \item All assumptions should be clearly stated or referenced in the statement of any theorems.
        \item The proofs can either appear in the main paper or the supplemental material, but if they appear in the supplemental material, the authors are encouraged to provide a short proof sketch to provide intuition. 
        \item Inversely, any informal proof provided in the core of the paper should be complemented by formal proofs provided in appendix or supplemental material.
        \item Theorems and Lemmas that the proof relies upon should be properly referenced. 
    \end{itemize}

    \item {\bf Experimental result reproducibility}
    \item[] Question: Does the paper fully disclose all the information needed to reproduce the main experimental results of the paper to the extent that it affects the main claims and/or conclusions of the paper (regardless of whether the code and data are provided or not)?
    \item[] Answer: \answerNA{} % Replace by \answerYes{}, \answerNo{}, or \answerNA{}.
    \item[] Justification: The paper is entirely theoretical and does not include empirical experiments.
    \item[] Guidelines:
    \begin{itemize}
        \item The answer NA means that the paper does not include experiments.
        \item If the paper includes experiments, a No answer to this question will not be perceived well by the reviewers: Making the paper reproducible is important, regardless of whether the code and data are provided or not.
        \item If the contribution is a dataset and/or model, the authors should describe the steps taken to make their results reproducible or verifiable. 
        \item Depending on the contribution, reproducibility can be accomplished in various ways. For example, if the contribution is a novel architecture, describing the architecture fully might suffice, or if the contribution is a specific model and empirical evaluation, it may be necessary to either make it possible for others to replicate the model with the same dataset, or provide access to the model. In general. releasing code and data is often one good way to accomplish this, but reproducibility can also be provided via detailed instructions for how to replicate the results, access to a hosted model (e.g., in the case of a large language model), releasing of a model checkpoint, or other means that are appropriate to the research performed.
        \item While NeurIPS does not require releasing code, the conference does require all submissions to provide some reasonable avenue for reproducibility, which may depend on the nature of the contribution. For example
        \begin{enumerate}
            \item If the contribution is primarily a new algorithm, the paper should make it clear how to reproduce that algorithm.
            \item If the contribution is primarily a new model architecture, the paper should describe the architecture clearly and fully.
            \item If the contribution is a new model (e.g., a large language model), then there should either be a way to access this model for reproducing the results or a way to reproduce the model (e.g., with an open-source dataset or instructions for how to construct the dataset).
            \item We recognize that reproducibility may be tricky in some cases, in which case authors are welcome to describe the particular way they provide for reproducibility. In the case of closed-source models, it may be that access to the model is limited in some way (e.g., to registered users), but it should be possible for other researchers to have some path to reproducing or verifying the results.
        \end{enumerate}
    \end{itemize}

\item {\bf Open access to data and code}
    \item[] Question: Does the paper provide open access to the data and code, with sufficient instructions to faithfully reproduce the main experimental results, as described in supplemental material?
    \item[] Answer: \answerNA{} % Replace by \answerYes{}, \answerNo{}, or \answerNA{}.
    \item[] Justification: No datasets or code are used or required, as the paper does not include experiments.
    \item[] Guidelines:
    \begin{itemize}
        \item The answer NA means that paper does not include experiments requiring code.
        \item Please see the NeurIPS code and data submission guidelines (\url{https://nips.cc/public/guides/CodeSubmissionPolicy}) for more details.
        \item While we encourage the release of code and data, we understand that this might not be possible, so “No” is an acceptable answer. Papers cannot be rejected simply for not including code, unless this is central to the contribution (e.g., for a new open-source benchmark).
        \item The instructions should contain the exact command and environment needed to run to reproduce the results. See the NeurIPS code and data submission guidelines (\url{https://nips.cc/public/guides/CodeSubmissionPolicy}) for more details.
        \item The authors should provide instructions on data access and preparation, including how to access the raw data, preprocessed data, intermediate data, and generated data, etc.
        \item The authors should provide scripts to reproduce all experimental results for the new proposed method and baselines. If only a subset of experiments are reproducible, they should state which ones are omitted from the script and why.
        \item At submission time, to preserve anonymity, the authors should release anonymized versions (if applicable).
        \item Providing as much information as possible in supplemental material (appended to the paper) is recommended, but including URLs to data and code is permitted.
    \end{itemize}

\item {\bf Experimental setting/details}
    \item[] Question: Does the paper specify all the training and test details (e.g., data splits, hyperparameters, how they were chosen, type of optimizer, etc.) necessary to understand the results?
    \item[] Answer: \answerNA{} % Replace by \answerYes{}, \answerNo{}, or \answerNA{}.
    \item[] Justification: No experimental setting is involved; the paper is theoretical.
    \item[] Guidelines:
    \begin{itemize}
        \item The answer NA means that the paper does not include experiments.
        \item The experimental setting should be presented in the core of the paper to a level of detail that is necessary to appreciate the results and make sense of them.
        \item The full details can be provided either with the code, in appendix, or as supplemental material.
    \end{itemize}

\item {\bf Experiment statistical significance}
    \item[] Question: Does the paper report error bars suitably and correctly defined or other appropriate information about the statistical significance of the experiments?
    \item[] Answer: \answerNA{} % Replace by \answerYes{}, \answerNo{}, or \answerNA{}.
    \item[] Justification: The paper does not include empirical results or plots requiring error bars or statistical tests.
    \item[] Guidelines:
    \begin{itemize}
        \item The answer NA means that the paper does not include experiments.
        \item The authors should answer "Yes" if the results are accompanied by error bars, confidence intervals, or statistical significance tests, at least for the experiments that support the main claims of the paper.
        \item The factors of variability that the error bars are capturing should be clearly stated (for example, train/test split, initialization, random drawing of some parameter, or overall run with given experimental conditions).
        \item The method for calculating the error bars should be explained (closed form formula, call to a library function, bootstrap, etc.)
        \item The assumptions made should be given (e.g., Normally distributed errors).
        \item It should be clear whether the error bar is the standard deviation or the standard error of the mean.
        \item It is OK to report 1-sigma error bars, but one should state it. The authors should preferably report a 2-sigma error bar than state that they have a 96\% CI, if the hypothesis of Normality of errors is not verified.
        \item For asymmetric distributions, the authors should be careful not to show in tables or figures symmetric error bars that would yield results that are out of range (e.g. negative error rates).
        \item If error bars are reported in tables or plots, The authors should explain in the text how they were calculated and reference the corresponding figures or tables in the text.
    \end{itemize}

\item {\bf Experiments compute resources}
    \item[] Question: For each experiment, does the paper provide sufficient information on the computer resources (type of compute workers, memory, time of execution) needed to reproduce the experiments?
    \item[] Answer: \answerNA{} % Replace by \answerYes{}, \answerNo{}, or \answerNA{}.
    \item[] Justification: No experiments were conducted; no compute resources were used.
    \item[] Guidelines:
    \begin{itemize}
        \item The answer NA means that the paper does not include experiments.
        \item The paper should indicate the type of compute workers CPU or GPU, internal cluster, or cloud provider, including relevant memory and storage.
        \item The paper should provide the amount of compute required for each of the individual experimental runs as well as estimate the total compute. 
        \item The paper should disclose whether the full research project required more compute than the experiments reported in the paper (e.g., preliminary or failed experiments that didn't make it into the paper). 
    \end{itemize}
    
\item {\bf Code of ethics}
    \item[] Question: Does the research conducted in the paper conform, in every respect, with the NeurIPS Code of Ethics \url{https://neurips.cc/public/EthicsGuidelines}?
    \item[] Answer: \answerYes{} % Replace by \answerYes{}, \answerNo{}, or \answerNA{}.
    \item[] Justification: The research is theoretical, complies with NeurIPS ethical guidelines, and does not raise ethical concerns regarding data use, privacy, or fairness.
    \item[] Guidelines:
    \begin{itemize}
        \item The answer NA means that the authors have not reviewed the NeurIPS Code of Ethics.
        \item If the authors answer No, they should explain the special circumstances that require a deviation from the Code of Ethics.
        \item The authors should make sure to preserve anonymity (e.g., if there is a special consideration due to laws or regulations in their jurisdiction).
    \end{itemize}

\item {\bf Broader impacts}
    \item[] Question: Does the paper discuss both potential positive societal impacts and negative societal impacts of the work performed?
    \item[] Answer: \answerNA{} % Replace by \answerYes{}, \answerNo{}, or \answerNA{}.
    \item[] Justification: The paper is purely theoretical.
    \item[] Guidelines:
    \begin{itemize}
        \item The answer NA means that there is no societal impact of the work performed.
        \item If the authors answer NA or No, they should explain why their work has no societal impact or why the paper does not address societal impact.
        \item Examples of negative societal impacts include potential malicious or unintended uses (e.g., disinformation, generating fake profiles, surveillance), fairness considerations (e.g., deployment of technologies that could make decisions that unfairly impact specific groups), privacy considerations, and security considerations.
        \item The conference expects that many papers will be foundational research and not tied to particular applications, let alone deployments. However, if there is a direct path to any negative applications, the authors should point it out. For example, it is legitimate to point out that an improvement in the quality of generative models could be used to generate deepfakes for disinformation. On the other hand, it is not needed to point out that a generic algorithm for optimizing neural networks could enable people to train models that generate Deepfakes faster.
        \item The authors should consider possible harms that could arise when the technology is being used as intended and functioning correctly, harms that could arise when the technology is being used as intended but gives incorrect results, and harms following from (intentional or unintentional) misuse of the technology.
        \item If there are negative societal impacts, the authors could also discuss possible mitigation strategies (e.g., gated release of models, providing defenses in addition to attacks, mechanisms for monitoring misuse, mechanisms to monitor how a system learns from feedback over time, improving the efficiency and accessibility of ML).
    \end{itemize}
    
\item {\bf Safeguards}
    \item[] Question: Does the paper describe safeguards that have been put in place for responsible release of data or models that have a high risk for misuse (e.g., pretrained language models, image generators, or scraped datasets)?
    \item[] Answer: \answerNA{} % Replace by \answerYes{}, \answerNo{}, or \answerNA{}.
    \item[] Justification: The paper does not release data, models, or tools with potential misuse risk.
    \item[] Guidelines:
    \begin{itemize}
        \item The answer NA means that the paper poses no such risks.
        \item Released models that have a high risk for misuse or dual-use should be released with necessary safeguards to allow for controlled use of the model, for example by requiring that users adhere to usage guidelines or restrictions to access the model or implementing safety filters. 
        \item Datasets that have been scraped from the Internet could pose safety risks. The authors should describe how they avoided releasing unsafe images.
        \item We recognize that providing effective safeguards is challenging, and many papers do not require this, but we encourage authors to take this into account and make a best faith effort.
    \end{itemize}

\item {\bf Licenses for existing assets}
    \item[] Question: Are the creators or original owners of assets (e.g., code, data, models), used in the paper, properly credited and are the license and terms of use explicitly mentioned and properly respected?
    \item[] Answer: \answerNA{} % Replace by \answerYes{}, \answerNo{}, or \answerNA{}.
    \item[] Justification: No external datasets, models, or software assets are used.
    \item[] Guidelines:
    \begin{itemize}
        \item The answer NA means that the paper does not use existing assets.
        \item The authors should cite the original paper that produced the code package or dataset.
        \item The authors should state which version of the asset is used and, if possible, include a URL.
        \item The name of the license (e.g., CC-BY 4.0) should be included for each asset.
        \item For scraped data from a particular source (e.g., website), the copyright and terms of service of that source should be provided.
        \item If assets are released, the license, copyright information, and terms of use in the package should be provided. For popular datasets, \url{paperswithcode.com/datasets} has curated licenses for some datasets. Their licensing guide can help determine the license of a dataset.
        \item For existing datasets that are re-packaged, both the original license and the license of the derived asset (if it has changed) should be provided.
        \item If this information is not available online, the authors are encouraged to reach out to the asset's creators.
    \end{itemize}

\item {\bf New assets}
    \item[] Question: Are new assets introduced in the paper well documented and is the documentation provided alongside the assets?
    \item[] Answer: \answerNA{} % Replace by \answerYes{}, \answerNo{}, or \answerNA{}.
    \item[] Justification: No new datasets or software assets are introduced.
    \item[] Guidelines:
    \begin{itemize}
        \item The answer NA means that the paper does not release new assets.
        \item Researchers should communicate the details of the dataset/code/model as part of their submissions via structured templates. This includes details about training, license, limitations, etc. 
        \item The paper should discuss whether and how consent was obtained from people whose asset is used.
        \item At submission time, remember to anonymize your assets (if applicable). You can either create an anonymized URL or include an anonymized zip file.
    \end{itemize}

\item {\bf Crowdsourcing and research with human subjects}
    \item[] Question: For crowdsourcing experiments and research with human subjects, does the paper include the full text of instructions given to participants and screenshots, if applicable, as well as details about compensation (if any)? 
    \item[] Answer: \answerNA{} % Replace by \answerYes{}, \answerNo{}, or \answerNA{}.
    \item[] Justification: No human subjects or crowdsourcing are involved.
    \item[] Guidelines:
    \begin{itemize}
        \item The answer NA means that the paper does not involve crowdsourcing nor research with human subjects.
        \item Including this information in the supplemental material is fine, but if the main contribution of the paper involves human subjects, then as much detail as possible should be included in the main paper. 
        \item According to the NeurIPS Code of Ethics, workers involved in data collection, curation, or other labor should be paid at least the minimum wage in the country of the data collector. 
    \end{itemize}

\item {\bf Institutional review board (IRB) approvals or equivalent for research with human subjects}
    \item[] Question: Does the paper describe potential risks incurred by study participants, whether such risks were disclosed to the subjects, and whether Institutional Review Board (IRB) approvals (or an equivalent approval/review based on the requirements of your country or institution) were obtained?
    \item[] Answer: \answerNA{} % Replace by \answerYes{}, \answerNo{}, or \answerNA{}.
    \item[] Justification: No IRB approval is necessary, as no human subjects are involved.
    \item[] Guidelines:
    \begin{itemize}
        \item The answer NA means that the paper does not involve crowdsourcing nor research with human subjects.
        \item Depending on the country in which research is conducted, IRB approval (or equivalent) may be required for any human subjects research. If you obtained IRB approval, you should clearly state this in the paper. 
        \item We recognize that the procedures for this may vary significantly between institutions and locations, and we expect authors to adhere to the NeurIPS Code of Ethics and the guidelines for their institution. 
        \item For initial submissions, do not include any information that would break anonymity (if applicable), such as the institution conducting the review.
    \end{itemize}

\item {\bf Declaration of LLM usage}
    \item[] Question: Does the paper describe the usage of LLMs if it is an important, original, or non-standard component of the core methods in this research? Note that if the LLM is used only for writing, editing, or formatting purposes and does not impact the core methodology, scientific rigorousness, or originality of the research, declaration is not required.
    %this research? 
    \item[] Answer: \answerNA{} % Replace by \answerYes{}, \answerNo{}, or \answerNA{}.
    \item[] Justification: No LLMs were used in developing or supporting the core scientific contributions of this research.
    \item[] Guidelines:
    \begin{itemize}
        \item The answer NA means that the core method development in this research does not involve LLMs as any important, original, or non-standard components.
        \item Please refer to our LLM policy (\url{https://neurips.cc/Conferences/2025/LLM}) for what should or should not be described.
    \end{itemize}

\end{enumerate}

\end{document}